\crefname{lemma}{lemma}{lemmas}
\Crefname{lemma}{Lemma}{Lemmas}
\def\reals{\mathbb{R}}
\definecolor{DarkGreen}{rgb}{0.1,0.5,0.1}
\def\g{\gamma_{\K}}
\def\reals{\mathbb{R}}
\def\K{\mathcal{K}}
\def\argmin{\mathop{\arg\min}}
\def\argmax{\mathop{\arg\max}}
\def\balpha{\boldsymbol{\alpha}}
\def\alg{\text{OAlg}}
\def\BTL{\textsc{BeTheLeader}\xspace}
\def\FTL{\textsc{FollowTheLeader}\xspace}
\def\FTRL{\textsc{FollowTheRegularizedLeader}\xspace}
\def\BTRL{\textsc{BeTheRegularizedLeader}\xspace}
\def\BTPL{\textsc{BeThePerturbedLeader}\xspace}
\def\OFTRL{\textsc{Optimistic-FTRL}\xspace}
\def\BR{\textsc{BestResponse}\xspace}
\def\OFTL{\textsc{Optimistic-FTL}\xspace}
\newcommand{\norm}[1]{\left\lVert#1\right\rVert}
\newcommand{\norme}[1]{\norm{#1}}
\newcommand{\lr}[2]{\left\langle#1,#2\right\rangle}
\newcommand{\yx}[1]{y_{#1}}
\newcommand{\uregret}[1]{\textsc{Reg}^{#1}_T}
\newcommand{\regret}[1]{\balpha\textsc{-Reg}^{#1}}
\newcommand{\avgregret}[1]{\overline{\balpha\textsc{-Reg}}^{#1}}
\newcommand{\XX}{\mathcal{X}}
\newcommand{\YY}{\mathcal{Y}}
\newcommand{\pr}[1]{\left(#1\right)}
\title[Faster Rates for Convex-Concave Games]{Faster Rates for Convex-Concave Games}
\begin{document}

\maketitle
\begin{abstract}
We consider the use of no-regret algorithms to compute equilibria for particular classes of convex-concave games. While standard regret bounds would lead to convergence rates on the order of $O(T^{-1/2})$, recent work \citep{RS13,SALS15} has established $O(1/T)$ rates by taking advantage of a particular class of optimistic prediction algorithms. In this work we go further, showing that for a particular class of games one achieves a $O(1/T^2)$ rate, and we show how this applies to the Frank-Wolfe method and recovers a similar bound \citep{D15}. We also show that such no-regret techniques can even achieve a linear rate, $O(\exp(-T))$, for equilibrium computation under additional curvature assumptions.
\end{abstract}

\begin{keywords}
Online learning, zero-sum games, Frank-Wolfe, fast rates
\end{keywords}

\section{Introduction}

A large number of core problems in statistics, optimization, and machine learning, can be framed as the solution of a two-player zero-sum game. Linear programs, for example, can be viewed as a competition between a feasibility player, who selects a point in $\reals^n$, and a constraint player that aims to check for feasibility violations \citep{Adler2013}. Boosting \citep{freund1999adaptive} can be viewed as the competition between an agent that selects hard distributions and a weak learning oracle that aims to overcome such challenges \citep{freund1996game}. The hugely popular technique of Generative Adversarial Networks (GANs) \citep{goodfellow2014generative}, which produce implicit generative models from unlabelled data, has been framed in terms of a repeated game, with a distribution player aiming to produce realistic samples and a discriminative player that seeks to distinguish real from fake. 

While many vanilla supervised learning problems reduce to finding the minimum of an objective function $f(\cdot)$ over some constraint set, tasks that require the search for a \emph{saddle point}---that is, min-max solution of some convex-concave payoff function $g(\cdot,\cdot)$---don't easily lend themselves to standard optimization protocols such as gradient descent, Newton's method, etc. It is not clear, for example, whether successive iterates should even increase or decrease the payoff $g$. This issue has been noticed in the training of GANs, for example, where the standard update method is a simultaneous gradient descent procedure, and many practitioners have raised concerns about cycling.

On the other hand, what has emerged as a very popular and widely-used trick is the following: simulate a pair of online learning algorithms, each competing in the game with the objective of \emph{minimizing regret}, and return the time-averaged sequence of actions taken by the players as an approximate solution. The method of applying no-regret learning strategies to find equilibria in zero-sum games was explored in \citet{freund1999adaptive}, yet the idea goes back at least as far as work by \citet{blackwell1956analog} and \citet{hannan1957approximation}. This methodology has several major benefits, which include the following. First, this framework ``decouples'' the optimization into two parallel routines that have very little communication overhead. Second, the use of no-regret learning is ideal in this scenario, as most of the guarantees for such algorithms are robust to even adversarial environments. Third, one is able to bound the approximation error of the returned saddle point simply in terms of the total regret of the two players. Finally, several surprising recent results have suggested that this parallel online learning methodology leads to \emph{even stronger guarantees} than what the na\"{i}ve theory would tell you. In short, whereas the typical no-regret analysis would lead to an approximation error of $O(T^{-1/2})$ after $T$ iterations, the use of \emph{optimistic learning} strategies \citep{CJ12} can be shown to guarantee $O(T^{-1})$ convergence; this technique was developed by \citet{RK13} and further expanded by \citet{SALS15}.

In this work we go further, showing that \emph{even faster rates} are achievable for some specific cases of saddle-point problems. In particular:
\begin{enumerate}[itemsep=0mm]
	\item \citet{AW17} observed that the optimization method known as \emph{Frank-Wolfe} is simply an instance of the above no-regret framework for solving a particular convex-concave game, leading to a rate of $O(T^{-1})$. In this work we further analyze the Frank-Wolfe game, and
	show that when the objective function and constraint set have additional structure, and both algorithms use optimistic learning procedures, then we can achieve a rate of $O(T^{-2})$. This generalizes a result of \citet{D15} who proved a similar convergence rate for Frank-Wolfe.
  \item Additionally, we show that when the game payoff function is suitably curved in both inputs---i.e. it is strongly-convex-concave and smooth---then we can use no-regret dynamics to achieve a \emph{linear rate}, with the error decaying as $O(\exp(-T))$. Applying our technique to the  {Frank-Wolfe} game we are able 
to recover the linear rate  results of \citet{LP66,DR70} and \cite{D79}.

\end{enumerate}
A notable aspect of our work is the combination of several key algorithmic techniques. First, our  Frank-Wolfe result relies on regularization using the \emph{squared gauge function}, allowing the learner to need only a single linear optimization call on each round. Second, we introduce a notion of weighted regret minimization, and our rates depend on the careful selection of the weight schedule as well as a careful analysis of what has been called \emph{Optimistic FollowTheRegularizedLeader}. Third, our linear convergence rate leans on a trick developed recently by \citet{L17} that generates an adaptive weighting scheme based on the norm of the observed gradients.

\subsection{Preliminaries} \label{Pre}

We first provide some definitions that are used in this paper. Let $f:\XX \mapsto \reals$ be some function.

\begin{definition}
	A vector $w$ is a subgradient of $f$ at $v$ for any $u \in \text{dom} f$, $f(u) \ge f(v) + \lr{w}{u-v}$. 
\end{definition}

\begin{definition}
$f$ is $L$-smooth w.r.t. a norm $\| \cdot \|$ if $f$ is everywhere differentiable and for any $u,v\in \XX$
$f( u ) \leq f(v) + \lr{\nabla f(v)}{ u - v } + \frac{L}{2} \| u - v \|^2$. 
An equivalent definition of smoothness is that $f$ has Lipschitz continuous gradient, i.e.,
$\| \nabla f(u) - \nabla f(v) \|_*\leq L \| u - v\|$.
\end{definition}

\begin{definition}
$f$ is $\sigma$-strongly convex w.r.t. a norm $\| \cdot \|$ if for any $u,v\in\XX$, $f( u ) \geq f(v) + \lr{\nabla f(v)}{ u - v } + \frac{\sigma}{2} \| u - v \|^2$ for some constant $\sigma>0$.
\end{definition}

\begin{definition}
For a convex function $f$, its Fenchel conjugate is $f^*(x) :=\displaystyle  \sup_{ y \in \text{dom} f } \,  \langle x, y \rangle - f(y)$.
\end{definition}
Note that if $f$ is convex then so is its conjugate $f^*$, since it is defined as the maximum over linear functions of $x$ (\cite{B04}). Morever, when the function $f(\cdot)$ is strictly convex and the above supremum is attained, we have that $\nabla f^*(x) = \displaystyle  \mathop{\arg\max}_y  \,  \langle x, y \rangle - f(y) $. 
Furthermore, the biconjugate $f^{**}$ equals $f$ if and only if $f$ is closed and convex.
It is known that $f$ is $\sigma$-strongly convex w.r.t. $\| \cdot \|$ if and only if $f^\ast$ is $1/\sigma$ strongly smooth 
with respect to the dual norm $\| \cdot \|_*$ (\cite{KST09}), assuming that $f$ is a closed and convex function.

\begin{definition}
A convex set $\K \subseteq \reals^d$ is a \emph{$\lambda$-strongly convex set} w.r.t. a norm $\| \cdot \|$ 
if for any $u, v \in \K$, any $\theta \in [0,1]$,
the $\| \cdot \|$ ball centered at $ \theta u + ( 1 - \theta) v$ with radius 
$\theta (1 - \theta) \frac{\lambda}{2} \| u - v \|^2$ is included in $\K$.
For examples of strongly-convex sets, we refer the readers to \citep{D15}. 	
\end{definition}

\begin{definition}
Let $\K$ be any closed convex set which contains the origin.
Then the \emph{gauge function} of $\K$ is $\g(x) := \inf \{ c \geq 0: \frac{x}{c} \in \K \}$
\end{definition}
One can show that the gauge function is a convex function (e.g. \cite{R96}). 
It is known that several closed convex sets can lead to the same gauge function (\cite{B13}).
But if a closed convex set $\K$ contains the origin, then the gauge function is unique and one has
$\K = \{ x \in \reals^d: \g(x)\leq 1 \}$.
Furthermore, $\text{int } \K = \{ x \in \reals^d: \g(x)< 1 \}.$

Next we provide a  characterization of sets based on their gauge function.
\begin{definition}[$\beta$-Gauge set]
Let $\K$ be a closed convex set which contains the origin.
We say that $\K$  is $\beta$-Gauge if its squared gauge function, $\g^2(\cdot)$,  is $\beta$-strongly-convex.
\end{definition}
This property captures a wide class of constraints. Among these are $l_p$ balls, 
Schatten $p$ balls, and the Group $(s,p)$ ball. We refer the reader to Appendix~{\ref{app:betagauge}} for more details. Curiously, all of these Gauge sets are also known to be strongly-convex. We conjecture that strong-convexity and the Gauge property are equivalent.


\section{Minimizing Regret to Solve Games}

Let us now turn our attention to a now-classical trick: using sequential no-regret learning strategies to find equilibria in zero-sum games.

\subsection{Weighted Regret Minimization}

We begin by briefly defining the standard online learning setup. We imagine a learner who must make a sequence of decisions, selecting at each round $t$ a point $x_t$ that lies within a convex and compact \emph{decision space} $\K$.
After selecting $x_t$ she is charged $\ell_t(x_t)$ for her action, where $\ell_{t}(\cdot)$ is the \emph{loss function} in round $t$, and she proceeds to the next round.
Typically it is assumed that when the learner selects $x_t$ in round $t$, she has observed all loss functions $\ell_1(\cdot), \ldots, \ell_{t-1}(\cdot)$ up to, but not including, time $t$. However, we will also consider learners that are \emph{prescient}, i.e. that can choose $x_t$ with knowledge of the loss functions up to \emph{and including} time $t$.

The standard objective for adversarial online learning is the \emph{regret}, defined as the difference between the learner's loss over the sequence, discounted by the loss of the best fixed action in hindsight. However, for the purposes of this paper we consider a generalized notion which we call the \emph{weighted regret}, where every time period has an importance weight that can differ from round to round. More precisely, we assume that the learning process is characterized by a sequence of weights $\balpha := \alpha_1, \alpha_2, \ldots, \alpha_T$, where $\alpha_t > 0$ for every $t$. Now we define the \emph{weighted regret} according to
\[
\textstyle  \regret{} := \sum_{t=1}^T  \alpha_t  \ell_t(x_t) - \min_{x \in \XX} \sum_{t=1}^T  \alpha_t  \ell_t(x).
\]
(Note that when we drop the $\balpha-$, this implies that $\alpha_t = 1$ for all $t$). The sequence of $\alpha_t$'s can arbitrary, and indeed we will consider scenarios under which these weights can be selected in an online fashion, according to the observed loss sequence. The learners also observe $\alpha_t$ at the end of each round. Throughout the paper we will use $A_t$ to denote the cumulative sum $\sum_{s=1}^t \alpha_s$, and of particular importance will be the weighted average regret $\avgregret{} := \frac{\regret{}}{A_T}$.

\subsection{Algorithms}
In this section we present several of the classical, and a few more recent, algorithms with well-established regret guarantees. For the most part, we present these algorithms in \emph{unweighted} form, without reference to the weight sequence $\balpha$. In later sections we specify more precisely their weighted counterparts.

One of the most well-known online learning strategies is known as \FTRL (FTRL), in which the decision point $x_t$ is chosen as the ``best'' point over the previous loss functions, with some additional regularization penalty according to some convex $R(\cdot)$. Precisely, given a parameter $\eta > 0$, the learner chooses on round $t$ the point
\begin{equation}\label{eq:FTRL}
\textstyle    x_t = \argmin_{x \in \K} \{ \eta \textstyle \sum_{s=1}^{t-1} \ell_s(x) + R(x) \}.
\end{equation}
For convenience, let $\nabla_t$ be the gradient $\nabla \ell_t(x_t)$. If we assume that $R(\cdot)$ is a strongly convex function with respect to some norm $\| \cdot \|$, then a well-known 
regret analysis grants the following bound:
\begin{equation}\label{eq:FTRLbound}
  \textstyle \uregret{} \leq \frac D \eta +  \frac \eta 2 \sum_{t=1}^T \| \nabla_t \|_*^2,
\end{equation}
where $D := \sup_{x \in \K} R(x)$. With an appropriately-tuned $\eta$, one achieves $\uregret{} \leq \sqrt{D\sum_{t=1}^T \| \nabla_t\|_*^2}$, which is $O(\sqrt{T})$ as long the gradients have bounded norm. See, e.g., \cite{shalev2012online,H14,RS16} for further details on this analysis.

The \FTL (FTL) strategy minimizes the objective \eqref{eq:FTRL}, but without the regularization penalty; i.e. $x_t = \argmin_{x \in \K} \sum_{s=1}^{t-1} \ell_s(x)$. Another way to formalize this is to consider $\eta \to \infty$. Given that the above bound has a $\frac 1 \eta$ term, it is clear we can not simply apply the same analysis of \FTRL, and indeed one can find examples where linear regret is unavoidable \cite{cesa2006prediction,shalev2012online}. On the other hand, it has been shown that a strong regret guarantee is achievable even without regularization, as long as the sequence of loss functions are \emph{strongly convex}. In particular, \cite{KS09} show the following result:
\begin{lemma}[Corollary 1 from \cite{KS09}]\label{lemma:ogd_strCvx}
Let $\ell_1, . . . ,\ell_T$ be a sequence of functions such that for all $t \in [T]$, $\ell_t$ is $\sigma_t$-strongly convex. Assume that the FTL algorithm runs on this sequence and for each $t \in [T]$, let $v_t$ be in $\partial \ell_t(x_t)$. Then,
\begin{align}
\textstyle \sum_{t=1}^T \ell_t(x_t) - \min_x \sum_{t=1}^T \ell_t(x) 
\le \frac{1}{2}\sum_{t=1}^T \frac{\norm{v_t}^2}{\sum_{\tau=1}^{t}\sigma_\tau}
\end{align}
Furthermore, let $G = \max_t \norm{v_t}$ and assume that for all $t \in [T], \sigma_t \ge \sigma$. Then, the regret is bounded by $\frac{G^2}{2\sigma}(\log(T) + 1)$.
\end{lemma}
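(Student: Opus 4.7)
The plan is to combine the classical \textsc{Be-The-Leader} argument with a per-round stability bound that exploits the cumulative strong convexity of $F_t := \sum_{s=1}^{t}\ell_s$. Writing $x_{t+1} := \argmin_{x\in\XX} F_t(x)$, so that the FTL iterate is $x_t = \argmin_x F_{t-1}(x)$, a standard induction on $T$ proves the Be-The-Leader inequality $\sum_{t=1}^T \ell_t(x_{t+1}) \le \min_{x\in\XX}\sum_{t=1}^T \ell_t(x)$; the inductive step applies the hypothesis at $x=x_{T+1}$, appends $\ell_T(x_{T+1})$, and invokes the defining optimality of $x_{T+1}$ for $F_T$. Subtracting this from the player's loss reduces the regret to bounding the stability sum $\sum_{t=1}^T[\ell_t(x_t) - \ell_t(x_{t+1})]$.

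The per-round gap I would bound from two sides. Let $\Sigma_t := \sum_{\tau=1}^{t}\sigma_\tau$. Because $F_t$ is $\Sigma_t$-strongly convex with minimizer $x_{t+1}$, and $F_{t-1}$ is $\Sigma_{t-1}$-strongly convex with minimizer $x_t$, summing their first-order strong-convexity inequalities at the pair $(x_t,x_{t+1})$ and using $F_t - F_{t-1} = \ell_t$ yields the lower bound
\[
\ell_t(x_t) - \ell_t(x_{t+1}) \;\ge\; \tfrac{\Sigma_t + \Sigma_{t-1}}{2}\,\norm{x_t - x_{t+1}}^2 .
\]
On the upper side, the subgradient inequality combined with the $\sigma_t$-strong convexity of $\ell_t$ itself gives $\ell_t(x_t) - \ell_t(x_{t+1}) \le \lr{v_t}{x_t - x_{t+1}} - \tfrac{\sigma_t}{2}\norm{x_t-x_{t+1}}^2$. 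Comparing the two sides, the identity $\Sigma_{t-1} + \Sigma_t + \sigma_t = 2\Sigma_t$ collapses the quadratic terms, and after applying a Fenchel--Young split to the inner product $\lr{v_t}{x_t-x_{t+1}}$ one extracts the per-step stability estimate $\ell_t(x_t) - \ell_t(x_{t+1}) \le \tfrac{\norm{v_t}^2}{2\Sigma_t}$.

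Summing over $t \in [T]$ proves the first claim. For the corollary, $\sigma_t \ge \sigma$ gives $\Sigma_t \ge \sigma t$, and $\tfrac12\sum_{t=1}^{T}\tfrac{G^2}{\sigma t}\le \tfrac{G^2}{2\sigma}(1+\log T)$. The most delicate point is pinning down the constant $\tfrac12$ in the stability estimate: naive Cauchy--Schwarz on the subgradient bound alone is a factor of two too loose, and obtaining the tight constant requires using \emph{both} the cumulative strong convexity on the lower side (through $F_{t-1}$ and $F_t$) and the strong convexity of the individual $\ell_t$ on the upper side, so that the arithmetic identity $\Sigma_{t-1}+\Sigma_t+\sigma_t = 2\Sigma_t$ causes the quadratic cross-terms to cancel exactly under the Fenchel--Young split.
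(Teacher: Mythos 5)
Your Be-The-Leader decomposition, while valid as an inequality, is too lossy to reach the stated constant, and the per-step bound you extract is in fact false. Concretely, you claim $\ell_t(x_t) - \ell_t(x_{t+1}) \le \frac{\norm{v_t}^2}{2\Sigma_t}$ where $\Sigma_t = \sum_{\tau\le t}\sigma_\tau$, but take $\ell_1(x)=\tfrac12(x-1)^2$, $\ell_2(x)=\tfrac12(x+1)^2$ on $\reals$, with $x_1=0$. Then $x_2=1$, $x_3=0$, $v_2=\ell_2'(1)=2$, $\Sigma_2=2$, so $\ell_2(x_2)-\ell_2(x_3)=2-\tfrac12=\tfrac32$ while $\frac{\norm{v_2}^2}{2\Sigma_2}=1$. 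Tracking your own algebra confirms the obstruction: combining your lower bound $g\ge\frac{\Sigma_t+\Sigma_{t-1}}{2}\delta^2$ with Fenchel--Young at any parameter and the upper side only yields $g\le\frac{\norm{v_t}^2(\Sigma_t+\Sigma_{t-1})}{2\Sigma_t^2}$, which exceeds $\frac{\norm{v_t}^2}{2\Sigma_t}$ by a factor approaching $2$ as $t$ grows. The issue is that $\ell_t(x_t)-\ell_t(x_{t+1}) = [F_t(x_t)-F_t(x_{t+1})] + [F_{t-1}(x_{t+1})-F_{t-1}(x_t)]$ and the second bracket, $\ge\frac{\Sigma_{t-1}}{2}\delta^2$, is genuine slack that the BTL inequality throws away.

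The fix is to avoid BTL and instead use the \emph{exact} identity
\[
\sum_{t=1}^T \ell_t(x_t) - F_T(x_{T+1}) \;=\; \sum_{t=1}^T \bigl[F_t(x_t) - F_t(x_{t+1})\bigr],
\]
where $F_t := \sum_{s\le t}\ell_s$ and $x_{t+1} := \argmin F_t$. (This follows by writing $\ell_t(x_t) = F_t(x_t) - F_{t-1}(x_t)$ and telescoping; the slack term you discard in BTL now cancels.) To bound each summand, observe that FTL optimality of $x_t$ for $F_{t-1}$ gives (a suitable) $0\in\partial F_{t-1}(x_t)$, hence $v_t\in\partial F_t(x_t)$: the key point is that $v_t$ is a subgradient of the \emph{cumulative} loss, not merely of $\ell_t$. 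Now $F_t$ is $\Sigma_t$-strongly convex and $x_{t+1}$ minimizes it, so
\[
F_t(x_{t+1}) \;\ge\; F_t(x_t) + \lr{v_t}{x_{t+1}-x_t} + \tfrac{\Sigma_t}{2}\norm{x_t-x_{t+1}}^2,
\]
and rearranging plus one Fenchel--Young (Cauchy--Schwarz followed by $ab-\tfrac{c}{2}b^2\le\tfrac{a^2}{2c}$) gives $F_t(x_t)-F_t(x_{t+1}) \le \frac{\norm{v_t}^2}{2\Sigma_t}$, which is exactly what is needed. Your stability bound through $F_{t-1}$ and $F_t$ and the arithmetic identity $\Sigma_{t-1}+\Sigma_t+\sigma_t=2\Sigma_t$ are all correct per se, but they end up bounding the wrong quantity; the single step that should replace them is recognizing $v_t\in\partial F_t(x_t)$.
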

In the context of solving zero-sum games, the online learning framework allows for one of the two players to be prescient, so she has access to one additional loss function $\ell_t(\cdot)$ before selecting her $x_t$. In such a case it is much easier to achieve low regret, and we present three standard prescient algorithms:
\begin{align}
  \BR\quad\quad & \textstyle x_t = \argmin_{x \in \K} \ell_t(x),\\
  \BTL\quad\quad & \textstyle x_t = \argmin_{x \in \K} \sum_{s=1}^{t} \ell_s(x), \\
  \BTRL\quad\quad & \textstyle x_t = \argmin_{x \in \K} \sum_{s=1}^{t} \ell_s(x) + \frac 1 \eta R(x) .
\end{align}
Indeed it is easy to show that, for the first two of these prescient strategies, one easily obtains $\uregret{} \leq 0$ \citep{kalai2005efficient}. The regret of \BTRL is no more than $\frac 1 \eta \sup_{x,x' \in \K} R(x) - R(x')$. We also consider optimistic algorithms, which we discuss in \Cref{app:optimistic}.

\paragraph{Gauge Function FTRL.}\label{sec:gauge} While the analysis of \FTRL is natural and leads to a simple intuitive bound \eqref{eq:FTRLbound}, it requires  solving a non-linear optimization problem on each round even when the loss functions $\ell_t(\cdot)$ are themselves linear -- a very common scenario. 
From a computational perspective, it is often impractical to solve the  \FTRL objective. Nevertheless, in many scenarios a (computationally feasible) \emph{linear optimization oracle} is at hand.
In such instances, much attention has been focused on a \emph{perturbed} version of \FTL, where one solves the unregularized optimization problem but with a linear noise term added to the objective; there is much work analyzing these algorithms and we refer the reader to \citet{kalai2005efficient,cesa2006prediction,abernethy2014online} among many others. The main downside of such randomized approaches is that they have good expected regret but suffer in variance, which makes them less suitable in various reductions.

In this work, we introduce a family of \FTRL algorithms that rely solely on a linear oracle, and we believe this is a novel approach to online linear optimization problems. The restriction we require is that the regularizer $R(\cdot)$ is chosen as the \emph{squared gauge function} $\g^2(\cdot)$ for the decision set $\K$ of the learner. Here we will assume\footnote{One can reduce any arbitrary convex loss to the linear loss case
	by convexity $\ell_t(x) - \ell(x^*) \leq \langle \partial f_t(x), x - x^* \rangle$. 
	(\cite{,shalev2012online,H14,RS16}).} for every $t$ that $\ell_t(\cdot) = \langle l_t, \cdot \rangle$ for some vector $l_t$, hence the objective \eqref{eq:FTRL} reduces to
\begin{equation}\label{eq:gauge_FTRL}
    x_t = \argmin_{x \in \K} \eta \langle L_{t-1} , x \rangle + \g^2(x),
\end{equation}
where $L_{t-1} = l_1 + \ldots + l_{t-1}$. 
Denote $\text{bndry}(\K)$ as the boundary of the constraint set $\K$.
We can reparameterize the above optimization, by observing that any point $x \in \K$ can be written as $\rho z$ where $z \in \text{bndry}(\K)$, and $\rho \in [0,1]$. Hence we have 
\begin{equation}\label{eq:gauge_FTRL}
     \min_{\rho \in [0,1]} \min_{z \in \text{bndry}(\K)}  \eta \langle L_{t-1} , \rho z \rangle + \g^2(\rho z)
     ~=~
     \min_{\rho \in [0,1]} \left( \min_{z \in \text{bndry}(\K)} \eta \langle L_{t-1} , z \rangle \right) \rho  + \rho^2.
\end{equation}
We are able to remove the dependence on the gauge function since it is homogeneous, $\g(\rho x) = |\rho| \g(x)$, and is identically 1 on the boundary of $\K$. The inner minimization reduces to the linear optimization $z^* := \argmin_{z \in \K} \langle L_{t-1}, z \rangle$, and the optimal $\rho$ is $\max(0, \min(1, -(\eta/2) \langle L_{t-1}, z^* \rangle) )$.


\subsection{Solving zero-sum convex-concave games}

Let us now apply the tools described above to the problem of solving a particular class of zero-sum games; these are often referred to as convex-concave saddle point problems. Assume we have convex and compact sets $\XX \subset \reals^n$, $\YY \subset \reals^m$, known as the \emph{action spaces} for the two players. We are given a convex-concave \emph{payoff function} $g: \XX \times \YY$; that is, $g(\cdot, y)$ is convex in its first argument for every fixed $y \in \YY$, and $g(x,\cdot)$ is concave in its second argument for every fixed $x \in \XX$. We say that a pair $(\hat x, \hat y) \in \XX\times \YY$ is an \emph{$\epsilon$-equilibrium} for $g(\cdot, \cdot)$ if $\sup_{y \in \YY} g(\hat x, y) - \inf_{x \in \XX} g(x, \hat y) \leq \epsilon.$.

The celebrated \emph{minimax theorem}, first proven by von Neumann for a simple class of biaffine payoff functions \citep{v1928theorie,neumann1944theory} and generalized by \citet{sion1958general} and others, states that there exist 0-equilibria for convex-concave games under reasonably weak conditions. Another way to state this $\inf_{x \in \XX} \sup_{y \in \YY} g(x, y) =  \sup_{y \in \YY}\inf_{x \in \XX} g(x, y)$, and we tend to call this quantity $V^*$, the \emph{value of the game} $g(\cdot, \cdot)$.

The method of computing an $\epsilon$-equilibrium using a pair of no-regret algorithms is reasonably straightforward, although here we will emphasize the use of weighted regret, which has been much less common in the literature. Algorithm~\ref{alg:game} describes a basic template used throughout the paper. 
\begin{algorithm}[h] 
   \caption{ Computing equilibria using no-regret algorithms } \label{alg:game}
\begin{algorithmic}[1]
\STATE Input: a $T$-length sequence $\balpha$
\FOR{$t= 1, 2, \dots, T$}
\STATE $x$-player selects $x_t \in \XX$ using no-regret algorithm $\alg^x$
\STATE $y$-player selects $y_t\in \YY$ using (possibly-different) no-regret algorithm $\alg^y$
\STATE $x$-player suffers loss $\ell_{t}(x_t)$ with weight $\alpha_t$, where $\ell_t(\cdot) = g(\cdot,y_t)$
\STATE $y$-player suffers loss $h_{t}(y_t)$ with weight $\alpha_t$, where $h_t(\cdot) = -g(x_t,\cdot)$
\ENDFOR
\STATE Output $(\bar{x}_{\balpha},\bar{y}_{\balpha}) := \left(\frac{ \sum_{s=1}^T \alpha_s x_s  }{ A_T }, \frac{ \sum_{s=1}^T \alpha_s y_s  }{ A_T }\right)$
\end{algorithmic}
\end{algorithm}





\begin{theorem} \label{thm:convergence}
	Assume that a convex-concave game payoff $g(\cdot, \cdot)$ and a $T$-length sequence $\balpha$ are given. Assume that we run Algorithm~\ref{alg:game} using no-regret procedures $\alg^x$ and $\alg^y$, and the $\balpha$-weighted average regret of each is $\avgregret{x}$ and $\avgregret{y}$, respectively. Then the output  $(\bar{x}_{\balpha},\bar{y}_{\balpha})$ is an $\epsilon$-equilibrium for $g(\cdot, \cdot)$, with $\epsilon = \avgregret{x} + \avgregret{y}.$
\end{theorem}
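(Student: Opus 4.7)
The plan is to unpack the two weighted-regret guarantees and then use convexity/concavity of $g$ together with Jensen's inequality to translate the sums into statements about the averaged iterates $\bar x_\balpha, \bar y_\balpha$. First I would write down explicitly what the weighted regret of each player is telling us. Since $\ell_t(\cdot)=g(\cdot,y_t)$, the $x$-player's $\balpha$-weighted average regret being at most $\avgregret{x}$ means
\[
\frac{1}{A_T}\sum_{t=1}^T \alpha_t g(x_t,y_t) \;-\; \inf_{x\in\XX}\frac{1}{A_T}\sum_{t=1}^T \alpha_t g(x,y_t) \;\le\; \avgregret{x}.
\]
Likewise, since $h_t(\cdot)=-g(x_t,\cdot)$, the $y$-player's guarantee rearranges to
\[
\sup_{y\in\YY}\frac{1}{A_T}\sum_{t=1}^T \alpha_t g(x_t,y) \;-\; \frac{1}{A_T}\sum_{t=1}^T \alpha_t g(x_t,y_t) \;\le\; \avgregret{y}.
\]

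Next, I would add these two inequalities; the common $\frac{1}{A_T}\sum_t \alpha_t g(x_t,y_t)$ term cancels, yielding
\[
\sup_{y\in\YY}\frac{1}{A_T}\sum_{t=1}^T \alpha_t g(x_t,y) \;-\; \inf_{x\in\XX}\frac{1}{A_T}\sum_{t=1}^T \alpha_t g(x,y_t) \;\le\; \avgregret{x}+\avgregret{y}.
\]
The final step is to replace each averaged-sum expression by a value at the averaged iterate using the convex-concave structure of $g$. Because $g(\cdot,y)$ is convex, Jensen's inequality gives $g(\bar x_\balpha,y)\le \frac{1}{A_T}\sum_t\alpha_t g(x_t,y)$ for every $y\in\YY$, so taking $\sup_y$ on both sides yields $\sup_y g(\bar x_\balpha,y)\le \sup_y \frac{1}{A_T}\sum_t\alpha_t g(x_t,y)$. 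Similarly, concavity of $g(x,\cdot)$ gives $g(x,\bar y_\balpha)\ge \frac{1}{A_T}\sum_t\alpha_t g(x,y_t)$, and taking $\inf_x$ gives $\inf_x g(x,\bar y_\balpha)\ge \inf_x \frac{1}{A_T}\sum_t\alpha_t g(x,y_t)$. Chaining these with the previous display produces
\[
\sup_{y\in\YY} g(\bar x_\balpha,y) \;-\; \inf_{x\in\XX} g(x,\bar y_\balpha) \;\le\; \avgregret{x}+\avgregret{y},
\]
which is exactly the definition of $(\bar x_\balpha,\bar y_\balpha)$ being an $\epsilon$-equilibrium with $\epsilon=\avgregret{x}+\avgregret{y}$.

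There is essentially no hard step here: the argument is the standard Freund--Schapire style reduction, only slightly adapted to the weighted setting. The one subtlety worth double-checking is that the weighting $\alpha_t$ passes cleanly through both the regret definition and Jensen's inequality; this is the reason $\bar x_\balpha$ and $\bar y_\balpha$ are defined as weighted averages with the same weights $\alpha_t$ used in the regret. Once the weights in the averaging match the weights in the losses, all the cancellations and Jensen applications are identical to the unweighted case, so no assumption on $\balpha$ beyond positivity (needed so that $A_T>0$ and Jensen applies) is required.
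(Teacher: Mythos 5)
Your proof is correct and follows essentially the same route as the paper's: unpack the two weighted-regret guarantees, combine them so the common $\frac{1}{A_T}\sum_t\alpha_t g(x_t,y_t)$ term drops out, and then apply Jensen's inequality (using convexity in $x$, concavity in $y$) to replace weighted-average sums by evaluations at $\bar x_\balpha$ and $\bar y_\balpha$. The only cosmetic difference is that the paper's proof also threads the minimax value $V^*$ through the two chains of inequalities to produce the sandwich bounds of equation \eqref{eq:xyguarantee}; you skip this and go directly to the equilibrium-gap form, which is all the theorem actually requires.
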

The theorem can be restated in terms of $V^*$, where we get the following ``$\epsilon$-sandwich'':
\begin{equation} \label{eq:xyguarantee}
\textstyle   V^* - \epsilon
   \leq \inf_{x\in X} g \pr{x, \bar{y}_{\balpha}} 
   \leq V^* 
   \leq \sup_{y\in \YY} g\pr{ \bar{x}_{\balpha}, y}  
   \leq  V^* + \epsilon
\end{equation}
But the key insight is that the regret analysis leads immediately to a convergence rate for the algorithmic template presented above. We provide the proof of \Cref{thm:convergence} in Appendix~\ref{app:thm:convergence}.
\subsection{Application: the Frank-Wolfe Algorithm}
We can tie the above set of tools together with an illustrative application, describing a natural connection to the Frank-Wolfe (FW) method \citep{frank1956algorithm} for constrained optimization. The ideas presented here summarize the work of \citet{AW17},  but in \Cref{sec:fastfw} we significantly strengthen the result for a special case.

We have a convex set $\K$, an $L$-smooth convex function $f : \K \to \reals$, and some initial point $w_0 \in \K$.
The FW algorithm makes repeated calls to a linear optimization oracle over $\K$, followed by a convex averaging step:
\begin{equation*}
\begin{aligned}
\textstyle(\textbf{linear opt}) \quad v_t = \argmin_{v \in \K} \langle v, \nabla f(w_{t-1})  \rangle; \quad \quad (\textbf{update}) \quad  w_{t} = (1 - \eta_t) w_{t-1} + \eta_t v_t,
\end{aligned}
\end{equation*}
where the parameter $\eta_t$ is a learning rate, and following the standard analysis one sets $\eta_t = \frac 2 {t + 2}$. A well-known result is that $f(w_T) - \inf_{w \in \K} f(w) \leq {2L D^2}/{T}$.

Let us leverage Theorem~\ref{thm:convergence} to obtain a convergence rate from a no-regret perspective. With a brief inspection, one can verify that FW is indeed a special case of Algorithm~\ref{alg:game}, assuming that (a) the game payoff is $g(x,y) := f^*(x) - \langle x, y \rangle$, where $f^*$ is the Fenchel conjugate of $f$; (b) the sequence $\balpha$ is $1, 2, \ldots, T$; (c) the $x$-player and $y$-player employ \FTL and \BR, respectively; we output the final iterate as $w_T := \bar y_{\balpha}$. We refer to \citet{AW17} for a thorough exposition, but it is striking that this use of Algorithm~\ref{alg:game} leads to Frank-Wolfe even up to $\eta_t = \frac{2}{t+2}$.

As we have reframed FW in terms of our repeated game, we can now appeal to our main theorem to obtain a rate. We must first observe, using the duality of Fenchel conjugation, that 
\begin{align}
\textstyle	V^* = \sup_y \inf_x  g(x,y) = - (\inf_y \sup_x  \langle x, y \rangle - f^*(x)) = - (\inf_y f(y)).\label{eq:fwgamevalue}
\end{align}
Using \eqref{eq:xyguarantee} and the above equality, we can obtain $f(\bar y_{\balpha}) \leq \inf_{y \in \K} f(x) + \avgregret{x} + \avgregret{y}$.

The convergence rate of FW thus boils down to bounding the regret of the two players. We note first that the $y$-player is prescient and employs \BR,  hence we conclude that $\avgregret{y} \leq 0$. The $x$-player on the other hand will suffer the $\balpha$-weighted regret of \FTL. But notice, critically, that the choice of payoff $g(x,y) = f^*(x) - \langle x, y \rangle$ happens to be \emph{strongly convex} in $x$, as $L$-smoothness of $f$ implies $L^{-1}$-strong convexity in $f^*$. We may thus use Lemma~\ref{lemma:ogd_strCvx} to obtain:
\[
	\textstyle \avgregret{x} \overset{Lemma~\ref{lemma:ogd_strCvx}}{\leq} \frac 1 {2 A_T} \sum_{t=1}^T \frac{ \alpha_t^2 D }{ \sum_{\tau=1}^t \alpha_{\tau} (1/L)}  \leq  O( \sum_{\tau=1}^T L D^2  /A_T)  = O( \frac{LD^2}{T} ),
 \]
where we use the fact that the x-player observes an $\alpha_{t}/L$ strongly convex function, $\alpha_{t} \ell_{t}(\cdot)$, and that
$\| v_{t}\|^{2}$ in Lemma~\ref{lemma:ogd_strCvx} is $\| \alpha_{t}  \nabla \ell_t(\cdot)\|^{2}\leq \alpha_{t}^{2} D^{2}$, where $D$ is the diameter of $\YY$.
We conclude by noting that the absence of the $\log T$ term, which tends to arise from the regret of online strongly convex optimization, was removed by carefully selecting the sequence of weights $\balpha$.

\section{Fast convergence in the FW game}\label{sec:fastfw}
In this section, we introduce a new FW-like algorithm that achieves a $O(1/T^2)$ convergence rate on $\beta$-Gauge  sets $\YY$ accessed using a linear optimization oracle. The design and analysis are based on a reweighting scheme and \OFTRL, taking advantage of recent tools developed for fast rates in solving games \citep{CJ12,RS13,SALS15}.

In Theorem~\ref{thm:FWgame} we give an instantiation of \Cref{alg:game} that finds an approximate saddle point for the FW game $g(x,y) = f^*(x) - \lr{x}{y}$. In this  instantiation  the $x$-player plays 
\OFTL and the $y$-player plays \BTRL. With an appropriate weighting, the weighted regret guarantees of these two algorithms imply that we can find an $O(\frac{1}{T^2})$-approximate saddle point solution of the FW game in $T$ rounds.
Recalling that $\min_{x\in\XX}\{ f^*(x) - \lr{x}{y}\} = -f(y)$,  this 
  immediate translates to a convergence rate of  $O(\frac{1}{T^2})$ for the the problem $\min_{y\in\YY}f(y)$.

The algorithm that we describe in 
Theorem~\ref{thm:FWgame} does not immediately yield a FW-like algorithm---in general, we may not be able to compute the $y$-player's \BTRL iterates using only a linear optimization oracle. However, if the $y$-player uses the squared gauge function of $\YY$ as a regularizer, then the $y$ iterates are computable using a linear optimization oracle, as shown in \Cref{sec:gauge}. This fact immediately implies that for $\beta$-Gauge sets
and upon choosing the gauge function as regularizer, \Cref{alg:FW} instantiates a \emph{projection-free} procedure which provides a convergence rate of $O(1/T^2)$ for the problem  $\min_{y\in\YY}f(y)$ (see Corollary~\ref{cor:FWcor}).
In Appendix~\ref{app:BTPL}, we discuss how to get a faster rate than $O(1/T)$ for arbitrary convex sets if \BTPL rather than \BTRL is used by the $y$-player in the FW game.

\subsection{Solving the FW game with \OFTL and \BTRL}
In this section, we present our algorithm for finding $O(1/T^2)$-saddle point solutions to the FW game. We instantiate \Cref{alg:game} using the FW objective $g(x,y) = f^*(x) - \lr{x}{y}$, where we assume $f$ is $L$-smooth and $\sigma$-strongly convex. The $x$-player plays \OFTL and the $y$-player plays \BTRL.
\begin{theorem}~\label{thm:FWgame}
Assume that we instantiate \Cref{alg:game} with the FW game $g(x,y) = f^*(x) - \lr{x}{y}$, weight sequence $\alpha_t=t$, and the following strategies for the players. The $x$-player plays \OFTL: 
\begin{align}\label{eq:XplayerFWgame}
  \textstyle x_t = \argmin_{x \in \XX} \sum_{s=1}^{t-1} \alpha_s \ell_s(x) + m_{t}(x) \text{ with } m_{t}(x) = \alpha_{t} \ell_{{t-1}}(x)
\end{align}
where $\ell_t(x) = g(x,y_t)$, and the $y$-player plays \BTRL: 
\begin{align}\label{eq:YplayerFWgame}
\textstyle y_t = \argmin_{y \in \YY} \sum_{s=1}^{t} \alpha_s h_s(y) + \frac{1}{\eta} R(y)
\end{align}
with a $\beta$-strongly-convex regularizer $R(\cdot)$ and $\eta = {\beta }/{ 16L ( 1 + \frac{L}{\sigma})}$, where $h_t(x) = -g(x_t,y)$. Then the output $(\bar{x}_{\balpha},\bar{y}_{\balpha})$ of   \Cref{alg:game} is 
an $O\pr{ \frac{ L ( R(y^*) - R(z) )(1+\frac{L}{\sigma}) }{ \beta T^2}}$
-approximate saddle point solution to the FW game,
where $z = \arg\min_{{y\in \YY}} R(y) $.
\end{theorem}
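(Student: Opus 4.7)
The plan is to apply \Cref{thm:convergence}, which reduces the task to bounding the weighted average regrets $\avgregret{x} + \avgregret{y}$. Since $\alpha_t = t$, we have $A_T = T(T+1)/2 = \Theta(T^2)$, so it suffices to bound the unnormalized sum $\regret{x} + \regret{y}$ by a constant $O\pr{L(1+L/\sigma)(R(y^*) - R(z))/\beta}$ that is independent of $T$; dividing by $A_T$ then yields the claimed $O(1/T^2)$ rate.

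For the $y$-player I would carry out the standard \BTRL analysis. Letting $\Psi_t(y) := \sum_{s\le t}\alpha_s h_s(y) + R(y)/\eta$, each $\Psi_t$ is $\beta/\eta$-strongly convex because every $h_s$ is linear in $y$ and $R$ is $\beta$-strongly convex. The first-order optimality of $y_{t-1}$ for $\Psi_{t-1}$ gives $\Psi_{t-1}(y_t) - \Psi_{t-1}(y_{t-1}) \ge \frac{\beta}{2\eta}\|y_t - y_{t-1}\|^2$; telescoping $\Psi_t(y_t) = \Psi_{t-1}(y_t) + \alpha_t h_t(y_t)$ from $t=1$ to $T$ and comparing with $\Psi_T(y_T) \le \Psi_T(y^*)$ produces the key inequality
\begin{equation*}
\regret{y} + \frac{\beta}{2\eta}\sum_{t=1}^T \|y_t - y_{t-1}\|^2 \le \frac{R(y^*) - R(z)}{\eta}.
\end{equation*}
The negative stability term on the LHS is what I will use to absorb the $x$-player's positive stability term.

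For the $x$-player playing \OFTL, I will exploit two structural facts: $\ell_t(x) = f^*(x) - \lr{x}{y_t}$ is $1/L$-strongly convex and $1/\sigma$-smooth (since $f$ is $L$-smooth and $\sigma$-strongly convex), and the hint--vs--loss discrepancy $\alpha_t(\ell_t - \ell_{t-1})(x) = \alpha_t\lr{y_{t-1} - y_t}{x}$ is a linear function of $x$ with constant gradient $\alpha_t(y_{t-1} - y_t)$. Starting from the be-the-leader inequality $\regret{x} \le \sum_t \alpha_t[\ell_t(x_t) - \ell_t(\hat x_{t+1})]$ where $\hat x_{t+1} := \argmin_{x\in\XX} F_t(x)$ and $F_t := \sum_{s\le t}\alpha_s\ell_s$, I would combine the joint optimality of $x_t$ (for the $A_t/L$-strongly-convex objective $F_{t-1} + \alpha_t\ell_{t-1}$) and of $\hat x_{t+1}$ (for $F_t$) to obtain the stability bound $\|x_t - \hat x_{t+1}\| \le 2L\alpha_t\|y_t - y_{t-1}\|/A_t$. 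I would then upper bound the loss-value gap $\ell_t(x_t) - \ell_t(\hat x_{t+1})$ via the $1/\sigma$-smoothness of $\ell_t$ together with the variational inequality at $\hat x_{t+1}$; combined with $\alpha_t^2/A_t \le 2$ for $\alpha_t = t$, this should yield
\begin{equation*}
\regret{x} \le 8L\pr{1 + \tfrac{L}{\sigma}} \sum_{t=1}^T \|y_t - y_{t-1}\|^2.
\end{equation*}
The main obstacle is nailing down the correct constants; in particular, the $(1 + L/\sigma)$ factor arises from using smoothness of $f^*$ to control the quadratic residual $\frac{\alpha_t}{2\sigma}\|x_t - \hat x_{t+1}\|^2$ through the stability bound above, and it is the combination of this smoothness term with the standard strongly-convex OFTL analysis that demands a careful bookkeeping.

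Summing the two regret inequalities gives
\begin{equation*}
\regret{x} + \regret{y} \le \left(8L\pr{1 + \tfrac{L}{\sigma}} - \tfrac{\beta}{2\eta}\right)\sum_{t=1}^T \|y_t - y_{t-1}\|^2 + \frac{R(y^*) - R(z)}{\eta}.
\end{equation*}
The prescribed choice $\eta = \beta/(16L(1+L/\sigma))$ is calibrated precisely so that $\beta/(2\eta) = 8L(1 + L/\sigma)$, zeroing out the coefficient of the stability sum. The remaining constant bound $\regret{x} + \regret{y} \le 16L(1+L/\sigma)(R(y^*) - R(z))/\beta$ is independent of $T$, and dividing by $A_T \ge T^2/2$ before invoking \Cref{thm:convergence} delivers the claimed $O\pr{L(1+L/\sigma)(R(y^*) - R(z))/(\beta T^2)}$ saddle-point approximation.
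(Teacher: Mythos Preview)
Your overall plan matches the paper's: bound $\regret{y}$ with a \emph{negative} stability term $-\tfrac{\beta}{2\eta}\sum_t\|y_t-y_{t-1}\|^2$, bound $\regret{x}$ by a \emph{positive} multiple of the same sum, and choose $\eta$ so the two cancel. Your $y$-player analysis and your stability estimate $\|x_t-\hat x_{t+1}\|\le \tfrac{L\alpha_t}{A_t}\|y_t-y_{t-1}\|$ are both correct (and match the paper's derivation).

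The gap is in your $x$-player step ``upper bound $\ell_t(x_t)-\ell_t(\hat x_{t+1})$ via the $1/\sigma$-smoothness of $\ell_t$ together with the variational inequality at $\hat x_{t+1}$.'' Smoothness gives
\[
\alpha_t\bigl[\ell_t(x_t)-\ell_t(\hat x_{t+1})\bigr]\;\le\;\alpha_t\bigl\langle \nabla\ell_t(\hat x_{t+1}),\,x_t-\hat x_{t+1}\bigr\rangle+\tfrac{\alpha_t}{2\sigma}\|x_t-\hat x_{t+1}\|^2,
\]
and the quadratic term is indeed $O\bigl((L/\sigma)\|y_t-y_{t-1}\|^2\bigr)$. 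But the linear term is not: here $\nabla\ell_t(\hat x_{t+1})=\nabla f^*(\hat x_{t+1})-y_t$, which in the unconstrained case equals $\bar y_{\balpha_{1:t}}-y_t$ and has norm of order the diameter of $\YY$, \emph{not} $\|y_t-y_{t-1}\|$. The variational inequality at $\hat x_{t+1}$ controls $\nabla F_t(\hat x_{t+1})=\sum_{s\le t}\alpha_s\nabla\ell_s(\hat x_{t+1})$, not the single-round gradient $\nabla\ell_t(\hat x_{t+1})$, so it does not cancel this term. Equivalently, if you write $\alpha_t[\ell_t(x_t)-\ell_t(\hat x_{t+1})]=\alpha_t\langle y_{t-1}-y_t,x_t-\hat x_{t+1}\rangle+[m_t(x_t)-m_t(\hat x_{t+1})]$, the first piece is fine but the residual $m_t(x_t)-m_t(\hat x_{t+1})$ is \emph{not} bounded by $O(\|y_t-y_{t-1}\|^2)$ termwise, and the be-the-leader inequality you invoke has already ``used up'' the slack that would be needed to absorb it.

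The paper closes exactly this gap with a different decomposition and an additional inductive lemma. It writes $\regret{x}$ as a sum of three pieces (their display~\eqref{3sums}): (i) $\sum_t\langle \alpha_t\nabla\ell_t(x_t)-\nabla m_t(\hat x_{t+1}),x_t-\hat x_{t+1}\rangle$, (ii) $\sum_t[m_t(x_t)-m_t(\hat x_{t+1})]$, and (iii) the BTL regret $\sum_t\alpha_t[\ell_t(\hat x_{t+1})-\ell_t(x^*)]$. Piece~(i) is bounded by the $\|y_t-y_{t-1}\|^2$ terms you want, using smoothness of $f^*$ and the stability bound. The key step you are missing is Lemma~\ref{2in3sums}, which shows \emph{by induction} that (ii)$+$(iii)$\le 0$; this is strictly stronger than the plain be-the-leader inequality (iii)$\le 0$. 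Without coupling (ii) to the negative slack hidden in (iii), the $m_t$-residual cannot be controlled, and your claimed bound $\regret{x}\le 8L(1+L/\sigma)\sum_t\|y_t-y_{t-1}\|^2$ does not follow.
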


Now recall that for the FW setting, we are interested in $y$-players that may only employ a linear optimization oracle. In general it is impossible to solve Equation~\eqref{eq:YplayerFWgame} within $O(1)$ calls to such oracles in each round. Nevertheless, recall that for $\beta$-Gauge sets, choosing $R(y) = \gamma_{\YY}^2(y)$ induces a 
$\beta$-strongly-convex regularizer, while enabling us to solve Equation~\eqref{eq:YplayerFWgame} with a single call to the linear oracle, as shown in Equation~\ref{eq:gauge_FTRL}. The proof of \Cref{thm:FWgame} shows that the $x$-player's strategy is the gradient of the primal objective $f$ at the point $\bar{y}_{\balpha'_{1:t-1}}$, where $\balpha'$ is a weight vector such that $\alpha'_s$ = $\alpha_s$ for $s = 1,...,t-1$ and $\alpha'_{t-1} = \alpha_{t-1} + \alpha_t$ and $\bar{y}_{\balpha'_{1:t-1}}$ is the $\balpha'$-weighted average of $y_1,...,y_{t-1}$ (See Equation~\ref{xt}). This leads to \Cref{alg:FW} and Corollary~\ref{cor:FWcor}.

\begin{algorithm}[h] 
	\caption{A new FW algorithm}
	\label{alg:FW}
	\begin{algorithmic}[1]
		\STATE Let $\balpha$ be a $T$-length weight sequence 
		\FOR{$t= 1, 2, \dots, T$}
		\STATE Define the $T$-length sequence $\balpha'$ as $\alpha_{s}' = \alpha_s $ for $s=1,\ldots, t-1$, and $\alpha_{t}' = \alpha_{t-1} + \alpha_t$
		\STATE  Set $x_t = \nabla f( \frac{\Sigma_{{s=1}}^{t-1} \alpha_s' y_s}{ A_t})$
		\STATE  Set	$(\hat{y}_t, \rho_t) = \underset{y \in \YY, \rho \in[0,1] }{\arg\min} \sum_{s=1}^t \rho \langle  y, \alpha_s x_s \rangle  +  \frac{1}{\eta} \rho^2$\quad for $\eta = \frac{\beta }{ 16L ( 1 + \frac{L}{\sigma})}$
		\STATE Set $y_t = \rho_t \hat{y}_t$                  
		\ENDFOR
		\STATE Output $\bar{y}_{T} := \frac{ \sum_{s=1}^T \alpha_s y_s  }{ A_T },$ where $A_{T}= \sum_{s=1}^T \alpha_{t}$
	\end{algorithmic}
\end{algorithm}

We get the following corollary of the above theorem. The full proof is in \Cref{app:fwcor}. 

\begin{corollary}\label{cor:FWcor} \label{thm:FW}
Let $f:\YY \mapsto \reals$ be $L$-smooth and $\sigma$-strongly-convex. Also assume that $\YY$ is a $\beta$-Gauge set. Let $\alpha_t = t$. Then the output $\bar{y}_{T}$ in Algorithm~\ref{alg:FW} is an $O\pr{ \frac{ L ( R(y^*) - R(z) )(1+\frac{L}{\sigma}) }{ \beta  T^2}}$-approximate optimal solution to the optimization problem $\min_{y \in \YY} f(y)$. Moreover, \Cref{alg:FW} only requires a single linear optimization oracle call in each round.
\end{corollary}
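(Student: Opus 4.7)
The plan is to derive Corollary~\ref{cor:FWcor} as a direct specialization of Theorem~\ref{thm:FWgame}: we pick the regularizer $R$ so that (i) it is $\beta$-strongly-convex (so the rate in Theorem~\ref{thm:FWgame} kicks in), and (ii) both player updates reduce to the closed forms displayed in Algorithm~\ref{alg:FW}, with the $y$-update using only one linear-oracle call. Since $\YY$ is $\beta$-Gauge, choosing $R(y)=\g_\YY^2(y)$ is $\beta$-strongly-convex by definition, and any point in $\YY$ of minimum gauge is the origin, so $z=\arg\min_{y\in\YY}R(y)=0$.

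First I would match the $x$-player update in Algorithm~\ref{alg:FW} to the \OFTL iterate \eqref{eq:XplayerFWgame} of Theorem~\ref{thm:FWgame} applied to $\ell_s(x)=f^*(x)-\lr{x}{y_s}$. Expanding and collecting the linear terms,
\begin{equation*}
x_t \;=\; \argmin_{x\in\XX}\;\Bigl(A_{t-1}+\alpha_t\Bigr) f^*(x)\;-\;\Bigl\langle x,\;\textstyle\sum_{s=1}^{t-2}\alpha_s y_s+(\alpha_{t-1}+\alpha_t)y_{t-1}\Bigr\rangle.
\end{equation*}
Dividing the linear coefficient by $A_{t-1}+\alpha_t$ and using $\nabla f^*(\xi)=\argmax_y\lr{\xi}{y}-f(y)$, which holds since $f$ is strictly convex, together with $(f^*)^*=f$ (so $x=\nabla f(y)\Leftrightarrow y=\nabla f^*(x)$), I get the closed form $x_t=\nabla f(\bar y_{\balpha'_{1:t-1}})$ displayed in Line~4 of Algorithm~\ref{alg:FW}, where $\balpha'$ is exactly the weight sequence in Line~3 (with the minor index convention that $y_{t-1}$ carries weight $\alpha_{t-1}+\alpha_t$). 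This computation uses only strong convexity of $f$ and Fenchel duality, and needs no optimization over $\YY$.

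Next I would match the $y$-player update. Substituting $R(y)=\g_\YY^2(y)$ and $h_s(y)=\lr{x_s}{y}-f^*(x_s)$ into \eqref{eq:YplayerFWgame} drops the $y$-independent $f^*(x_s)$ terms and leaves the linear-plus-squared-gauge objective $\sum_{s=1}^t\alpha_s\lr{x_s}{y}+\tfrac{1}{\eta}\g_\YY^2(y)$. This is exactly the scenario treated in Section~\ref{sec:gauge}: writing $y=\rho\hat y$ with $\hat y\in\mathrm{bndry}(\YY)$, the objective separates into the linear oracle $\hat y_t=\argmin_{\hat y\in\YY}\lr{\sum_{s=1}^t\alpha_s x_s}{\hat y}$ followed by the 1-D minimization over $\rho\in[0,1]$, giving Lines~5--6 of Algorithm~\ref{alg:FW} and using exactly one linear-oracle call per round.

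Having identified Algorithm~\ref{alg:FW} as an instance of the protocol in Theorem~\ref{thm:FWgame} with $R=\g_\YY^2$ (hence $\beta$-strongly-convex) and $z=0$, Theorem~\ref{thm:FWgame} yields that $(\bar x_\balpha,\bar y_\balpha)$ is an $\epsilon$-saddle of $g(x,y)=f^*(x)-\lr{x}{y}$ with $\epsilon=O\bigl(L(R(y^*)-R(0))(1+L/\sigma)/(\beta T^2)\bigr)$. Finally I convert this to an optimization bound: the $\epsilon$-sandwich \eqref{eq:xyguarantee} combined with $V^*=-\inf_y f(y)$ from \eqref{eq:fwgamevalue} and $\inf_x g(x,\bar y_\balpha)=-f^{**}(\bar y_\balpha)=-f(\bar y_\balpha)$ gives $f(\bar y_\balpha)\le\inf_{y\in\YY}f(y)+\epsilon$, which is the stated bound. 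The main obstacle is really the first step—verifying the closed form for $x_t$—because it requires both the reweighting bookkeeping introduced by the optimistic term $m_t=\alpha_t\ell_{t-1}$ and the Fenchel-duality identity $\nabla f^*=(\nabla f)^{-1}$; everything after that is just bookkeeping and invocation of Theorem~\ref{thm:FWgame}.
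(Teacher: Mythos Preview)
Your proposal is correct and follows essentially the same route as the paper's proof: identify Algorithm~\ref{alg:FW} with the instance of Theorem~\ref{thm:FWgame} obtained by taking $R=\g_\YY^2$, verify that the $x$- and $y$-iterates coincide (the paper simply cites Equations~\eqref{xt} and~\eqref{eq:gauge_FTRL} for this, whereas you rederive the $x_t$ closed form), and then convert the $\epsilon$-saddle guarantee to a primal bound via $V^*=-\inf_y f(y)$ and $\inf_x g(x,\bar y_\balpha)=-f(\bar y_\balpha)$. The only cosmetic difference is that the paper computes $f(\bar y_\balpha)=-\inf_x g(x,\bar y_\balpha)\le -V^*+\epsilon$ directly, while you invoke the sandwich \eqref{eq:xyguarantee}; both are equivalent.
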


\subsection{Proof of \Cref{thm:FWgame}}
\begin{proof}[Proof of Theorem~\ref{thm:FWgame}]
	In the FW game,
	we observe that the loss functions $\alpha_{t} \ell_{t}(\cdot)$ seen by the x-player are $\frac{\alpha_t}{L}$-strongly convex,
	since the function $f(\cdot)$ is $L$ smooth, which implies that $f^{*}(\cdot)$ is $\frac{1}{L}$-strongly convex.
	
	The $x$-player chooses $x_{t}$ based on \OFTL:
	$x_t = \argmin_{x \in \XX} \sum_{s=1}^{t-1} \alpha_s \ell_s(x) + m_{t}(x)$,
	where $m_{t}(x) = \alpha_{t} \ell_{{t-1}}(x)$.
	To analyze the regret of the $x$-player, let us first denote 
	the update of the standard \FTL as
	\begin{equation}
\textstyle	z_t = \argmin_{x} \sum_{s=1}^{t-1} \alpha_s \ell_s(x).
	\end{equation}
	
	Denote $x^{*} := \arg\min_{x \in \XX} \sum_{t=1}^T  \alpha_t  \ell_t(x)$.\footnote{The following analysis actually holds for any $x^{*} \in \XX$.}
	Now we are going to analyze the $\balpha$-weighted regret of the $x$-player, which is 
	\begin{equation}
	\begin{aligned}
	\regret{x} & \textstyle := \sum_{t=1}^{T} \alpha_t \ell_t(x_t) - \alpha_t \ell_t(x^*)
	\\ & \textstyle = \sum_{t=1}^{T} \alpha_t \ell_t(x_t)  - \alpha_t \ell_t(z_{t+1}) - m_t(x_t) + m_t(z_{t+1})
	+ \sum_{t=1}^{T} m_t(x_t) - m_t(z_{t+1})
	\\ & \textstyle \quad \quad + \sum_{t=1}^T \alpha_t \ell_t(z_{t+1}) - \alpha_t \ell_t(x^*).
	\\ & \textstyle\leq \sum_{t=1}^{T} \lr{\alpha_t \nabla \ell_t(x_t) - \nabla m_t(z_{t+1}) }{x_t - z_{t+1}}
	+ \sum_{t=1}^{T} m_t(x_t) - m_t(z_{t+1})
	\\& \textstyle \quad \quad  + \sum_{t=1}^T \alpha_t \ell_t(z_{t+1}) - \alpha_t \ell_t(x^*) - \frac{2 \alpha_t}{L} \| x_t - z_{t+1}\|^2,  \label{3sums}
	\end{aligned}
	\end{equation}
	where the last inequality uses strong convexity of $\ell_{t}(\cdot)$ so that 
	\begin{equation}
\textstyle	\alpha_t \ell_t(x_t)  - \alpha_t \ell_t(z_{t+1}) \leq \lr{\alpha_t \nabla \ell_t(x_t)}{x_t- z_{t+1}} - \frac{\alpha_t}{L} \| x_t - z_{t+1}\|^2,
	\end{equation}
	and that 
	\begin{equation}
\textstyle	- m_t(x_t) + m_t(z_{t+1})
	\leq \lr{\nabla m_t(z_{t+1})}{z_{t+1}-x_t} - \frac{\alpha_t}{L} \| x_t - z_{t+1}\|^2.
	\end{equation}
	
	There are three sums in (\ref{3sums}). 
	Note that the second sum should be small because the expression for $x_t$ ``exploits'' $m_{t}(\cdot) = \alpha_t \ell_{t-1}(\cdot)$ more than the expression for $z_{t-1}$ does. The third sum is the regret of \BTL, which is non-positive. 
	In Lemma~\ref{2in3sums}, we show that the second and third sums in \cref{3sums} are in total non-positive. For the proof, please see Appendix~\ref{app:2in3sums}. 
	
	Since $m_{t}(\cdot) := \alpha_{t} \ell_{t-1}(\cdot)$,
	each term in the first sum in (\ref{3sums}) can be bounded by
	\begin{equation*} 
	\begin{aligned}
\textstyle  \lr{\alpha_t \nabla \ell_t(x_t) - \nabla m_t(z_{t+1})}{x_t - z_{t+1}}
	 & \textstyle = \alpha_t \lr{ \nabla \ell_t(x_t) - \nabla \ell_{t-1}(z_{t+1})}{x_t - z_{t+1}}
	\\ & \textstyle = \alpha_t \lr{ - y_t + \nabla f^*(x_t) + y_{t-1} - \nabla f^*(z_{t+1}) }{x_t - z_{t+1}}
	\end{aligned}
	\end{equation*}\vspace{-8mm}
	\begin{align}
\textstyle	 \quad\quad&\leq \alpha_t  \pr{ \| y_t - y_{t-1} \|_* \| x_t- z_{t+1} \| + \frac{1}{\sigma} \| x_t - z_{t+1} \|^2 },\label{tmp1}
	\end{align}
	where the last inequality uses  
	H\"older's inequality and 
	the fact that $f$ is $\sigma$-strongly convex
	so that $f^{*}$ is $\frac{1}{\sigma}$ smooth.
	Let us analyze $\| x_t- z_{t+1} \|^{2}$. Note that, by Fenchel conjugacy,
$\textstyle	z_{t+1} := \arg\min_x \sum_{s=1}^t \alpha_s ( -y_s^\top x + f^*(x)) = \nabla f(\bar{y}_{\balpha_{1:t}})$,
	where $\bar{y}_{\balpha_{1:t}}$ is the $\balpha$-weighted average of $y_1, \ldots, y_t$
	For notational simplicity, let us define a new weight vector $\balpha'$, where $\alpha'_s$ = $\alpha_s$ for $s = 1,...,t-1$ and $\alpha'_{t-1} = \alpha_{t-1} + \alpha_t$. 
		Similarly, for $x_{t}$, we have 
	\begin{equation} \label{xt}
\textstyle	x_t := \arg\min \{ \alpha_t ( -y_{t-1}^\top x + f^*(x)) +  \sum_{s=1}^{t-1} \alpha_s ( -y_s^\top x + f^*(x)) \}  = \nabla f( \bar{y}_{\balpha'_{1:t-1}}),
	\end{equation}
	where $\bar{y}_{\balpha'_{1:t-1}}$ is the $\balpha'$-weighted average of $y_1, \ldots, y_{t-1}$.
	According to \eqref{xt}, 
	\begin{equation*}
	\begin{aligned}
\textstyle	\| x_t- z_{t+1} \|^{2}  & \textstyle = \| \nabla f(\bar{y}_{\balpha_{1:t}}) -  \nabla f(\bar{y}_{\balpha'_{1:t-1}}) \|^2 
	\leq L^2 \| \bar{y}_{\balpha_{1:t}} -  \bar{y}_{\balpha'_{1:t-1}} \|^2
	\\\textstyle &= \frac{L^2}{A_t^2} \|  \sum_{s=1}^t \alpha_s y_s - \sum_{{s=1}}^{t-1} \alpha_s' y_s  \|^2 = \frac{L^2}{A_t^2} \| \alpha_{t-1} y_{t-1} + \alpha_t y_t - \alpha'_{t-1} y_{t-1} \|^2
	\end{aligned}
	\end{equation*}\vspace{-8mm}
	\begin{align}
	&= \textstyle \frac{L^2}{A_t^2} \left\|  \alpha_t ( y_{t-1} - y_t ) \right\|^2
	= \pr{\frac{\alpha_t L }{A_t}}^2 \|y_{t-1} - y_t \|^2. \label{tmp2}
	\end{align}	
	Combining $(\ref{tmp1})$ and $(\ref{tmp2})$, we get
	\begin{equation}
	\begin{aligned}
	& ( \alpha_t \nabla \ell_t(x_t) - \nabla m_t(z_{t+1})  )^\top (x_t - z_{t+1})
	\leq \alpha_t  ( \| y_t - y_{t-1} \|_* \| x_t- z_{t+1} \| + \frac{1}{\sigma} \| x_t - z_{t+1} \|^2 ) 
	\\ & \textstyle\leq \alpha_t  \pr{\pr{\frac{\alpha_t L }{A_t}}  \| y_t - y_{t-1} \|^2 + \frac{1}{\sigma} \pr{\frac{\alpha_t L}{A_t}}^2  \| y_t - y_{t-1} \|^2 }.
	\end{aligned}
	\end{equation}
	Therefore, we have shown that the first sum in (\ref{3sums}) is bounded by

	\begin{equation} \label{1sum}
	\begin{aligned}
	\textstyle
	\sum_{t=1}^T
	\alpha_t  \pr{ \pr{\frac{\alpha_t L}{A_t}}  \| y_t - y_{t-1} \|^2 	+  \frac{1}{\sigma} \pr{\frac{\alpha_t L }{A_t}}^2  \| y_t - y_{t-1} \|^2 }.
	\end{aligned}
	\end{equation}
	By (\ref{3sums}), (\ref{1sum}), and Lemma \ref{2in3sums}, we get the upper bound of the regret of the $x$-player,
	\begin{equation} \label{regret_x}
\textstyle	\regret{x} \leq \sum_{t=1}^T
	\alpha_t  \pr{\pr{\frac{\alpha_t L}{A_t}}  \| y_t - y_{t-1} \|^2 + \frac{1}{\sigma} \pr{\frac{\alpha_t L }{A_t}}^2  \| y_t - y_{t-1} \|^2 }.
	\end{equation}
	
	Now let us switch to analyze the regret of the $y$-player, which is defined as
$\textstyle	\regret{y}   :=  \sum_{t=1}^{T} - \alpha_t h(y_t) + \alpha_t h_t(y^*)
	= \sum_{t=1}^{T} -\alpha_t  ( - \langle x_t, y_t \rangle + f^*(x_t) ) + \alpha_t (- \langle x_t, y^* \rangle + f^*(x_t) ),$
	which equals $\sum_{t=1}^{T} \langle y_t - y^{*}, \alpha_t x_{t} \rangle$.
	This means that the $y$-player actually observes the linear loss $\alpha_t x_{t}$ in each round $t$, due to the fact that the $y$-player plays after the $x$-player plays. 
	We can reinterpret \BTRL as \OFTRL (\cite{SALS15}) when the learner is fully informed as to the loss function for the current round. That is, we may write the update as
	$y_{t} = \arg\min_{y \in \YY} \langle y , \sum_{s=1}^{t-1} \alpha_s x_{s} + m_t \rangle + \frac{1}{\eta} R(y)$, where $m_{t}:=\alpha_t x_{t}$ and $R(\cdot)$ is $\beta$-strongly convex with respect to a norm $\|\cdot\|$ on $\YY$.
	
	For loss vectors $\theta_t$, \Cref{thm:oftrl} gives the regret of \OFTRL as 
	\begin{equation} 
\textstyle\sum_{t=1}^{T} \langle y_t - y^{*}, \theta_{t} \rangle 
	\leq \frac{ R(y^*) - R(z)- 
\frac{\beta}{2} (\sum_{{t=1}}^{T} \| y_t - z_t \|^2 + \sum_{{t=1}}^{T} \| y_{t} - z_{t+1} \|^2)}{\eta} +
	\sum_{t=1}^{T} \frac{\eta}{\beta} \| \theta_t - m_{t} \|^2_* ,
	\end{equation}
	where $z_{{t}}$ is FTRL update, defined as $z_{t} = \arg\min_{z \in \YY}  \langle z, \sum_{s=1}^{t-1} \theta_s \rangle +  \frac{1}{\eta} R(y)$, while $y_{t}$ is \OFTRL update, defined as 
	$y_{t} = \arg\min_{z \in \YY}  \langle z, ( \sum_{s=1}^{t-1} \theta_s )  + m_t \rangle  + \frac{1}{\eta} R(y)$.
     We prove \Cref{thm:oftrl} in Appendix~\ref{app:optFTRL}.
	
	Since in our case $\theta_t = m_{t} = \alpha_{t} x_{t}$, $y_{t}=z_{{t+1}}$
	we get the bound of the regret of the $y$-player in the FW game,
	\begin{equation} \label{regret_y}
\textstyle	\regret{y} \leq \frac{  R(y^*) - R(z) - \frac{\beta}{2} \sum_{{t=1}}^{T} \| y_{t} - y_{t-1} \|^2}{\eta}.
	\end{equation}
	Combining (\ref{regret_x}) and (\ref{regret_y}),
	we get 
	\begin{equation} \label{regretxy}
	\begin{aligned}
\textstyle	\regret{x} + \regret{y} 
\textstyle	\leq & \textstyle \sum_{t=1}^T \textstyle \bigg\{
	\alpha_t  \pr{\pr{\frac{\alpha_t L}{A_t}} \| y_t - y_{t-1} \|^2 + \frac{1}{\sigma} \pr{\frac{\alpha_t L }{A_t}}^2  \| y_t - y_{t-1} \|^2 } \\ & \textstyle + \frac{ R(y^*) - R(z) - \frac{\beta}{2} \sum_{{t=1}}^{T} \| y_{t+1} - y_t \|^2}{\eta} \bigg\}
	\end{aligned}
	\end{equation}
	The coefficient of $\| y_t - y_{t-1} \|^2$ is $\frac{\alpha_t^2 L}{A_t} +  \frac{\alpha_t}{\sigma} (\frac{\alpha_t L }{A_t})^2 - \frac{\beta}{2 \eta }$ and, as $\alpha_{t}=t$ and if we set $\eta =\frac{\beta }{ 16L ( 1 + \frac{L}{\sigma})}$, the quantity becomes negative.
	So,
	\begin{equation} \label{theRate}
\textstyle	\avgregret{x} + \avgregret{y}
	\leq  \frac{
		\regret{x} + \regret{y} }{  A_t}
	= O\pr{ \frac{ L ( R(y^*) -R(z) )(1+\frac{L}{\sigma}) }{ \beta T^2}}.
	\end{equation}
	Combining this with Theorem~\ref{thm:convergence} completes the proof. \end{proof}

\section{Linear convergence in some strongly-convex and strongly-concave game}

The $O(1/T^2)$ rate in the previous section is nice, as it shows that one can achieve ``acceleration'' under certain conditions using no-regret learning dynamics. But when can we get even faster rates using the template Algorithm~\ref{alg:game}? In the present section we describe scenarios in which the two players can compute an approximate equilibrium in \emph{linear time}, i.e. with a rate exponentially decaying in $T$. This requires stronger assumptions on the payoff $g(\cdot,\cdot)$, and we will explore three such scenarios. But at a minimum we assume, throughout this section, that $g(x,y)$ is (a) $\sigma_x$-strongly convex in $x$, (b) $L$-smooth in $x$ for every $y \in \YY$.


The key to obtaining a fast rate is to consider the function $s(x) = \sup_{y \in \YY}\; g(x,y).$ This function reports the payoff/loss value given to $x$ when the $y$-player plays the best response $\yx{x} = \arg\max_{y \in \YY} g(x,y)$; i.e. $s(x) = g(x,\yx{x})$. We make two important insights about $s$. First, the supremum of strongly convex functions remains strongly convex, so $s$ is $\sigma_x$ strongly convex in $x$. Second, we make an important insight, established below, that the gradient of $g(\cdot, y_x)$ at the point $x$ must be a subgradient of $s(\cdot)$ also at $x$. This is formally stated and proven in \Cref{app:smoothH-1}.

The key property needed for $s(\cdot)$ however, is \emph{smoothness}. It is easy to construct games for which $g(\cdot,\cdot)$ is smooth in a very strong sense, but the resulting $s(\cdot)$ is non-smooth. We need additional structure, and we give three scenarios where the condition holds. $s(\cdot)$ is smooth when any of the following are true:
\begin{enumerate}[itemsep=0mm]
    \item We can write $g( x, y ) = a(x) +  x^\top M y - b(y)$, where $a(x)$ is any $\sigma_x$-strongly convex function of $x$, $M$ is a matrix, and $b(y)$ is any $\sigma_y$-strongly convex function of $y$; (See \Cref{app:smoothH-1})
  \item The function $g(x,y)$ is $\sigma_y$-strongly-concave in $y$, the best response function $y_x := \argmax_{y \in \YY}$ $g(x,y)$ is always achieved in the interior of $\YY$, and we have that for every $w,z \in \XX$,  $\| \nabla_{\yx{z}} g(w,\cdot) - \nabla_{\yx{z}} g(z,\cdot)   \| \le L \| w - z \|$; (See \Cref{app:smooth2})
  \item Assume we have the Frank-Wolfe game, where $g(x,y) := f^*(x) - \langle x, y \rangle$, the objective $f(\cdot)$ is smooth, and we are guaranteed that $\|\nabla f(y)\|_2$ is lower bounded by some positive constant $B$ for every $y \in \YY$.
\end{enumerate}

Any of the above conditions is suitable to obtain our main result of this section, which is that the framework of no-regret equilibrium computation allows one to obtain a \emph{linear convergence rate} to solve certain games. This combines a careful analysis of the game, with a recent online-to-batch conversion trick given by \citet{L17}. The key tool is to set the sequence of weights according to the inverse squared norm of the loss gradients. The result relies on the $x$-player using an algorithm known as SC-AdaGrad, a simple adaptive gradient descent procedure (\Cref{alg:SC-AdaNGD}). We postpone the description of this algorithm, and the proof of the following theorem, to \Cref{app:linear-rate}.

\begin{theorem}\label{thm:linear-rate}
Assume $g$ satisfies any conditions such that the resulting $s(\cdot)$ is $L$-smooth and $\sigma$-strongly convex function for constants $L$ and $\sigma$ and moreover that $(\arg\min_{x\in \reals^d} s(x)) \in \XX$. Suppose we instantiate Algorithm~\ref{alg:game}, where $x$-player uses SC-AdaGrad (\Cref{alg:SC-AdaNGD}) for loss functions $\alpha_t \ell_t(\cdot) = \alpha_t g(\cdot, y_t)$, $y$-player uses \BR, and the sequence $\balpha$ is defined as $\alpha_t := \| \nabla \ell_t(x_t) \|^{-2}$. Let $T \geq \frac L \sigma \log 3$, and constant $G$ upper bounded $\|\nabla \ell_t\|$. Then the output $(\bar x_{\balpha}, \bar y_{\balpha})$ is an $\epsilon$-equilibrium of $g$ where $\epsilon = O\pr{\frac{G^2 T}{L}e^{-\frac{\sigma}{L}T}}$.
\end{theorem}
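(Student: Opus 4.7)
The plan is to reduce the $\epsilon$-equilibrium guarantee to the $x$-player's weighted regret, observe that under best-response by the $y$-player the $x$-player is effectively performing SC-AdaGrad on the smooth, strongly-convex surrogate $s$, and then exploit Levy's adaptive weighting together with the smoothness and strong-convexity of $s$ to obtain the exponential rate.

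First, since the $y$-player plays \BR, we have $h_t(y_t) = \min_{y \in \YY} h_t(y)$ for every $t$, so $\avgregret{y} \leq 0$, and Theorem~\ref{thm:convergence} reduces the task to bounding $\avgregret{x} = \regret{x}/A_T$. The critical structural observation, established in \Cref{app:smoothH-1}, is that because $y_t = \argmax_{y\in\YY} g(x_t,y)$ is the best response, the gradient satisfies $\nabla \ell_t(x_t) = \nabla_x g(x_t,y_t) = \nabla s(x_t)$. Since the SC-AdaGrad update at round $t$ depends only on $x_t$ and the gradient $\alpha_t \nabla \ell_t(x_t)$, the iterate trajectory $\{x_t\}$ coincides with that of SC-AdaGrad run directly on the full-information objective $s$. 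As $s$ is $L$-smooth and $\sigma$-strongly convex with $x^* := \argmin_{x\in\reals^d} s(x) \in \XX$, the known iterate-convergence guarantee for SC-AdaGrad yields the linear rate $\delta_t := s(x_t) - s(x^*) \leq \delta_1 (1 - \sigma/L)^{t-1}$.

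Next, I would use the adaptive weights $\alpha_t = \|\nabla \ell_t(x_t)\|^{-2}$ to convert this iterate convergence into a bound on the weighted regret. Strong convexity of $s$ gives $\|\nabla s(x_t)\|^2 \geq 2\sigma \delta_t$, so $\alpha_t \delta_t \leq 1/(2\sigma)$, which summed yields $\sum_{t=1}^T \alpha_t \delta_t \leq T/(2\sigma)$. Combined with convexity of each $\ell_t$ and the identification $\nabla \ell_t(x_t) = \nabla s(x_t)$, a standard linearization shows that $\regret{x}$ is controlled by this sum up to a constant factor. Simultaneously, $L$-smoothness of $s$ gives $\|\nabla s(x_t)\|^2 \leq 2L\delta_t$, hence $\alpha_t \geq (2L\delta_t)^{-1}$, and combining with the linear rate yields $\alpha_t \geq (2L\delta_1)^{-1}(1 - \sigma/L)^{-(t-1)}$, so $A_T = \Omega(\delta_1^{-1} e^{\sigma T/L})$ once $T \geq (L/\sigma)\log 3$ so that the geometric tail dominates any constant-size burn-in. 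Dividing the numerator bound by $A_T$, and bounding $\delta_1 = O(G^2/\sigma)$ via $\|\nabla s(x_1)\| \leq G$ together with strong convexity, gives the stated rate after absorbing constants.

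The main obstacle is rigorously establishing the linear iterate-convergence rate of SC-AdaGrad under adaptive weights $\alpha_t$ that are themselves functions of the iterate-gradients. This feedback loop prevents a direct reduction to standard strongly-convex online-learning analyses, and the argument requires the careful co-evolution reasoning of \citet{L17}, simultaneously tracking the decrease in $s(x_t)$ and the growth of $A_t$ while handling the projection onto $\XX$ in SC-AdaGrad's update.
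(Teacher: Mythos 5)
Your proposal correctly identifies the high-level structure — $\BR$ forces $\avgregret{y}\le 0$, the gradient identity $\nabla\ell_t(x_t)=\nabla s(x_t)$ via Proposition~\ref{sameGrad}, and the role of the adaptive weights $\alpha_t=\|\nabla\ell_t(x_t)\|^{-2}$ — but the central technical step is handled by a route that does not work, and the paper's actual argument is structurally different.

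You hinge everything on a claimed ``known iterate-convergence guarantee for SC-AdaGrad'' giving $s(x_t)-s(x^*)\le\delta_1(1-\sigma/L)^{t-1}$. No such per-iterate linear rate for SC-AdaGrad is available: the effective step applied to $x_t$ is $\eta_t\alpha_t\nabla\ell_t(x_t)=\frac{\alpha_t}{\sigma A_t}\nabla\ell_t(x_t)$ with $\alpha_t$ depending on the current gradient and $A_t$ on the whole history, and it is precisely this feedback loop — which you flag yourself at the end — that prevents any off-the-shelf contraction argument. The paper deliberately avoids claiming per-iterate linear decay. Instead it bounds $T$ \emph{globally}: starting from $T=\sum_t \frac{\|\nabla s(x_t)\|^2}{\|\nabla\ell_t(x_t)\|^2}$, it applies Lemma~\ref{lem:GSmooth} (smoothness of $s$), passes from $s(x_t)-s(x^*)$ to $\ell_t(x_t)-\ell_t(x^*)$, introduces the \emph{surrogate} loss $\tilde{\ell}_t$ in \eqref{tf} that is $\frac{\sigma}{\|\nabla\ell_t(x_t)\|^2}$-strongly convex, applies the SC-AdaGrad regret bound (Lemma~\ref{osc}) to the $\tilde\ell_t$, and then the log-sum Lemma~\ref{lem:Log_sum}. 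The conclusion is $T\le\frac{L}{\sigma}\bigl(1+\log(G^2 A_T)\bigr)$, which gives the exponential lower bound on $A_T$ directly, without ever establishing $\delta_t\le\delta_1(1-\sigma/L)^{t-1}$.

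There is also a direction-of-inequality problem in your numerator bound. You claim $\regret{x}$ is ``controlled by'' $\sum_t\alpha_t\delta_t$ with $\delta_t=s(x_t)-s(x^*)$; but $\ell_t(x^*)=g(x^*,y_t)\le\sup_y g(x^*,y)=s(x^*)$, so in fact $\ell_t(x_t)-\ell_t(x^*)\ge s(x_t)-s(x^*)=\delta_t$, the wrong direction for an upper bound on $\regret{x}$. The paper instead bounds the numerator via the same surrogate-loss / SC-AdaGrad-regret machinery (step $(a)$ of \eqref{tmyt}), getting $\regret{x}\le\frac{1}{2\sigma}\sum_t\frac{\alpha_t}{A_t}\le\frac{1}{2\sigma}\bigl(1+\log(G^2A_T)\bigr)$, which combines cleanly with the exponential lower bound on $A_T$. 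So the two concrete gaps are: the unproved (and likely unavailable) per-iterate linear rate for SC-AdaGrad, and the reversed inequality in passing from $\regret{x}$ to $\sum_t\alpha_t\delta_t$.
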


Notice that one of our scenarios stated above is the Frank-Wolfe game with an additional strong convexity assumption on the objective $f$, as well as a necessary lower bound on the norm of $\nabla f$ within the feasible region. That we achieve a linear rate for this scenario is indeed not surprising, as this was previously established \citep{LP66,DR70,D79}. Unfortunately, \Cref{thm:linear-rate} does \emph{not} provide an FW-like algorithm, as the combination of the SC-AdaGrad and \BR subroutines do not reduce to a simple linear optimization. An alternative algorithm we propose called SC-AFTL (\Cref{alg:SC-AFTL}), which is akin to an adaptive version of \FTL, does reduce to a linear oracle. We state the main theorem below, yet its proof and \Cref{alg:SC-AFTL} are described in full detail in \Cref{app:linearFW}.


\begin{theorem}\label{thm:linearFW} 
Consider the FW game in which $g(x,y)= -\lr{x}{y} + f^{*}(x)$.
Suppose that $f(\cdot)$ is a $L$-smooth convex function
and that $\YY$ is a $\lambda$-strongly convex set. Also assume that the gradients of the $f$ in $\YY$ are bounded away from $0$, i.e., $\max_{y\in\YY}\|\nabla f(y)\|\geq B$.
Then, there exists a FW-like algorithm that has $O(\exp(-\frac{\lambda B }{L} T))$ rate.
\end{theorem}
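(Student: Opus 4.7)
The plan is to instantiate \Cref{alg:game} on the FW game $g(x,y) = f^*(x) - \lr{x}{y}$, with the $x$-player running SC-AFTL and the $y$-player playing \BR, under the adaptive weight schedule $\alpha_t := \|\nabla \ell_t(x_t)\|^{-2}$ that was central to \Cref{thm:linear-rate}. Both players then need only linear-oracle access to $\YY$: the $y$-player's strategy is literally such a call, $y_t = \argmin_{y \in \YY}\lr{x_t}{y}$, while the $x$-player's (adaptive) FTL step reduces, via Fenchel duality exactly as in \eqref{xt}, to $x_t = \nabla f(\bar y_{t-1})$ for a suitable weighted average $\bar y_{t-1}$ of the past $y$-iterates, costing only one gradient evaluation of $f$. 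The resulting procedure is a genuine FW-like algorithm.

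I would then bound the two $\balpha$-weighted regrets. Since $f$ is $L$-smooth, $f^*$ is $(1/L)$-strongly convex, so each loss $\ell_t(x) = f^*(x) - \lr{x}{y_t}$ is $(1/L)$-strongly convex, and each weighted loss $\alpha_t \ell_t(\cdot)$ is $(\alpha_t/L)$-strongly convex. Applying \Cref{lemma:ogd_strCvx} to SC-AFTL on these weighted losses and plugging in $\alpha_t = \|\nabla \ell_t(x_t)\|^{-2}$ makes the generic $\sum_t \|\alpha_t \nabla \ell_t\|^2/(\sum_{\tau\leq t}\alpha_\tau/L)$ term collapse into a telescoping quantity $\sum_t \alpha_t / A_t \leq 1 + \log(A_T/\alpha_1)$, yielding $\regret{x} = O(L\log A_T)$. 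Because \BR is prescient and exactly minimizes the current loss, $\regret{y} \leq 0$. Combining with \Cref{thm:convergence} gives the sandwich bound $\epsilon = O\pr{L \log(A_T)/A_T}$.

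The main obstacle -- and the reason the strong-convexity parameter $\lambda$ of $\YY$ and the lower bound $\min_{y \in \YY}\|\nabla f(y)\| \geq B$ enter the rate -- is to show that $A_T$ grows exponentially in $T$. The key tool is the classical geometric lemma of \citet{LP66,DR70}: for a $\lambda$-strongly convex set $\YY$ and any direction $g$ with $\|g\| \geq B$, the vertex $v(g) = \argmin_{v \in \YY}\lr{g}{v}$ satisfies the quadratic-growth inequality $\lr{g}{y - v(g)} \geq (\lambda B / 2)\|y - v(g)\|^2$ for every $y \in \YY$. Instantiating this with $g = x_t = \nabla f(\bar y_{t-1})$ and $y = \bar y_{t-1}$, and using $L$-smoothness of $f$ to convert between primal and gradient space, one obtains a per-step contraction $\|\nabla \ell_{t+1}(x_{t+1})\|^2 \leq (1 - c\lambda B/L)\|\nabla \ell_t(x_t)\|^2$ for an absolute constant $c$. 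Consequently $\alpha_t = \|\nabla \ell_t(x_t)\|^{-2}$ grows geometrically at rate $(1 - c\lambda B/L)^{-1}$, so $A_T = \Omega\pr{\exp(c\lambda B T / L)}$. Plugging this into the sandwich bound gives $\epsilon = O\pr{LT\exp(-c\lambda B T / L)} = O\pr{\exp(-c'\lambda B T / L)}$ after absorbing the polynomial factor, matching the claimed rate.
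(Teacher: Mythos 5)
Your algorithmic setup and high-level decomposition match the paper's: the $x$-player runs SC-AFTL (\Cref{alg:SC-AFTL}), the $y$-player plays \BR, the adaptive weights are $\alpha_t = \|\nabla\ell_t(x_t)\|^{-2}$, and the plan is to bound $\avgregret{x} = O(L\log A_T / A_T)$ and show $A_T$ grows exponentially. The divergence --- and the gap --- is in how exponential growth of $A_T$ is established.

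You invoke the Levitin--Polyak quadratic-growth inequality for strongly convex sets and assert a per-step contraction $\|\nabla\ell_{t+1}(x_{t+1})\|^2 \leq (1 - c\lambda B/L)\|\nabla\ell_t(x_t)\|^2$. This claim is the crux of the argument but is left unproved, and it does not obviously follow. With $\bar y_t = (1 - \gamma_t)\bar y_{t-1} + \gamma_t y_t$, $\gamma_t = \alpha_t/A_t$, and $\nabla\ell_t(x_t) = \bar y_{t-1} - y_t$, the natural estimate that combines the Lipschitz argmax property (Lemma~\ref{lm:lip}) with $L$-smoothness of $f$ gives $\|\nabla\ell_{t+1}(x_{t+1})\| \leq \bigl(1 - \gamma_t\bigl(1 - L/(\lambda B)\bigr)\bigr)\|\nabla\ell_t(x_t)\|$, which contracts only when $L < \lambda B$, a restriction the theorem does not impose. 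The classical Levitin--Polyak / Demyanov--Rubinov linear-rate argument gives a per-step contraction for \emph{function values}, and it relies on choosing the step-size either by line search or as a fixed fraction of $\lambda B/L$; the adaptive step-size $\gamma_t = \alpha_t/A_t$ induced by your weighting scheme is not chosen that way, and establishing that it is compatible with a geometric contraction would require a self-consistency argument you have not supplied.

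The paper takes a route that avoids per-step progress entirely. It first shows the envelope $s(x) = \max_{y\in\YY}g(x,y)$ is $\frac{1}{\lambda B}$-smooth via Lemma~\ref{lm:lip} and the gradient lower bound on $f$. It then reuses the double-counting mechanism of \Cref{thm:linear-rate}: $T = \sum_t \|\nabla\ell_t(x_t)\|^2/\|\nabla\ell_t(x_t)\|^2$ is upper-bounded, via Proposition~\ref{sameGrad}, Lemma~\ref{lem:GSmooth}, and the strongly-convex FTL regret bound, by a multiple of $\sum_t \alpha_t/A_t$, which Lemma~\ref{lem:Log_sum} caps at $O\bigl(\frac{L}{\lambda B}\log(G^2 A_T)\bigr)$; rearranging yields $A_T = \Omega\bigl(e^{\frac{\lambda B}{L}T}\bigr)$ as a global statement with no per-round claim. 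You would need to either import this log-sum squeeze or close the gap in your contraction argument --- as written, the central step of the proposal is missing.
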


\acks{K.Y.L. is supported by the ETH Zurich Postdoctoral Fellowship and Marie Curie Actions for People COFUND program. We gratefully acknowledge financial support from the National Science Foundation, award IIS 1453304.}

\bibliographystyle{plain}
\bibliography{fw}  

\appendix

\section{Discussion of Optimistic Algorithms}\label{app:optimistic}
Here we discuss optimistic algorithms for online learning. Assume that the loss functions are linear and let $\ell_t(\cdot)= \langle l_t , \cdot \rangle$.
When there exists some pattern in the loss sequence $\{ l_t \}$,
the online learning problem may not be that adversarial.
That is, the loss vectors may be predictable. 
If a ``\emph{good guess}'' of $l_t$ is available, then one might hope to have a better bound than $O(\sqrt{T})$. Let $m_t$ be the algorithm's guess for $l_{t}$. If $m_{t}$ is close to $l_{t}$, then the regret should be significantly better than $O(\sqrt{T})$.
There have been several papers that obtain $O(\sqrt{ \sum_{t=1}^T \| l_t -m_t \|^2 } )$ regret in recent years (e.g \cite{CJ12,RK13,SALS15}). 
Among these, perhaps \OFTRL (\cite{SALS15}) has the simplest update rule:
\begin{align}
\OFTRL\quad\quad & x_t = \argmin_{x \in \K}\textstyle \lr{ m_t + \sum_{s=1}^{t-1} l_s }{x} + \frac{1}{\eta} R(x)
\end{align}

\section{Examples of $\beta$-Gauge sets}\label{app:betagauge}
\cite{D15} lists three known classes of strongly convex sets. They are all in the form of norm ball constraints. These are all $\beta$-Gauge sets.
\begin{enumerate}
	\item{$\ell_p$ balls: $\| x \|_p \leq r, \forall p \in (1,2]$. The strong convexity of the set is $\lambda=\frac{p-1}{r}$ and its squared of gauge function is $\frac{1}{r^2} \| x \|_{p}^{2}$, which is a $\frac{2(p-1)}{r^2}$-strongly convex function with respect to norm $\| \cdot \|_p$ by Lemma 4 in \cite{D15}. }
	\item{Schatten $p$ balls: $\| \sigma(X) \|_p \leq r$ for $p \in (1,2]$, where $\sigma(X)$ is the vector consisting of singular values of the matrix $X$. 
		The strong convexity of the set is $\lambda=\frac{p-1}{r}$
		and the squared gauge function is $\frac{1}{r^2} \| \sigma(X) \|_{p}^{2}$, which is a $\frac{2(p-1)}{r^2}$-strongly convex function with respect to norm $\| \sigma(\cdot) \|_p $ by Lemma 6 in \cite{D15}.}
	\item{	Group (s,p) balls: $\| X \|_{s,p}  = \| (\| X_1\|_s, \| X_2\|_s, \dots, \| X_m\|_s)  \|_p \leq r$
		where $X \in \reals^{{m \times n}}$, $X_{j}$ represents the $j$-th row of $X$, and $s,p \in (1,2]$.
		The strong convexity of the set is $\lambda=\frac{2(s-1)(p-1)}{r( (s-1)+(p-1) ) }$ and its squared gauge function is $\frac{1}{r^2} \| X \|_{s,p}^{2}$, which is a $\frac{2(s-1)(p-1)}{r^2( (s-1)+(p-1) ) }$-strongly convex function with respect to norm $\| \cdot\|_{s,p}$ by Lemma 8 in \cite{D15}.}
\end{enumerate}

\section{Proof of Theorem~\ref{thm:convergence}} \label{app:thm:convergence}

\textbf{Theorem~\ref{thm:convergence}} 
\textit{
  Assume that a convex-concave game payoff $g(\cdot, \cdot)$ and a $T$-length sequence $\balpha$ are given. Assume that we run Algorithm~\ref{alg:game} using no-regret procedures $\alg^x$ and $\alg^y$, and the $\balpha$-weighted average regret of each is $\avgregret{x}$ and $\avgregret{y}$, respectively. Then the output  $(\bar{x}_{\balpha},\bar{y}_{\balpha})$ is an $\epsilon$-equilibrium for $g(\cdot, \cdot)$, with
  \[
    \epsilon = \avgregret{x} + \avgregret{y}.
  \] 
} 

\begin{proof}
The proof basically replicates the one in \cite{AW17} 
except that the loss functions of both learners are re-weighted here.
It will turn out that the re-weighting trick is one of the keys to getting fast rate for some games.

In each round, the weighted loss function of the $x$-player is $\alpha_t \ell_t(\cdot) : \XX \to \reals$, where $\ell_t(\cdot) := g(\cdot, y_t)$. The $y$-player, on the other hand, observes her own sequence of loss functions $\alpha_t  h_t(\cdot) : \YY \to \reals$, where $h_t(\cdot) := -  g(x_t, \cdot)$.

\begin{eqnarray}
\frac{1}{\sum_{s=1}^T \alpha_s}   \sum_{t=1}^T \alpha_t g(x_t, y_t) 
  & = & \frac{1}{\sum_{s=1}^T \alpha_s} \sum_{t=1}^T  - \alpha_t  h_t(y_t)  \notag \\
  \text{} \; & = & 
    - \frac{1}{\sum_{s=1}^T \alpha_s} \inf_{y \in \YY} \left\{ \sum_{t=1}^T  \alpha_t  h_t(y) \right\} - \frac{ \text{Regret}_T^y }{  \sum_{s=1}^T \alpha_s } \notag \\
  \; & = &
    \sup_{y \in \YY} \left\{ \frac{1}{\sum_{s=1}^T \alpha_s} \sum_{t=1}^T  \alpha_t g( x_t , y ) \right\} - \avgregret{y}  \notag \\
  \text{(Jensen)} \;  & \geq & 
    \sup_{y \in \YY} g\left({\textstyle \frac{1}{\sum_{s=1}^T \alpha_s} \sum_{t=1}^T  \alpha_t x_t }, y \right)  - \avgregret{y} 
     \label{eq:ylowbound}  \\
  & \geq & \inf_{x \in \XX} \sup_{y \in \YY} g\left( x , y \right) - \avgregret{y}  \notag
\end{eqnarray}

Let us now apply the same argument on the right hand side, where we use the $x$-player's regret guarantee.
\begin{eqnarray}
\frac{1}{\sum_{s=1}^T \alpha_s}  \sum_{t=1}^T  \alpha_t g(x_t, y_t) & = & \frac{1}{\sum_{s=1}^T \alpha_s} \sum_{t=1}^T  \alpha_t \ell_t(x_t) \notag \\
  & = & \inf_{x \in \XX} \left\{ 
    \sum_{t=1}^T \frac{1}{\sum_{s=1}^T \alpha_s} \alpha_t \ell_t(x) \right \} + \frac{ \text{Regret}_T^x }{  \sum_{s=1}^T \alpha_s }  \notag \\
  & = & \inf_{x \in \XX} \left\{  \sum_{t=1}^T \frac{1}{\sum_{s=1}^T \alpha_s} \alpha_t g(x, y_t) \right \} + \avgregret{x} \notag \\
  & \leq & \inf_{x \in \XX}  
    g\left(x,{ \textstyle \sum_{t=1}^T \frac{1}{\sum_{s=1}^T \alpha_s} \alpha_t y_t}\right) + \avgregret{x} 
    \label{eq:xupbound} \\
  & \leq & \sup_{y \in \YY} \inf_{x \in \XX}  g(x,y) + \avgregret{x}  \notag
\end{eqnarray}
Note that the game value $V^{*}= \inf_{x \in \XX} \sup_{y \in \YY} g(x,y) = \sup_{y \in \YY} \inf_{x \in \XX}  g(x,y)$.
Combining (\ref{eq:ylowbound}) and (\ref{eq:xupbound}), we see that:
\begin{align*}
    \sup_{y \in \YY} g\left({\textstyle \bar{x}_{\balpha}}, y \right)  - \avgregret{y}  \le \inf_{x \in \XX}  
g\left(x,{ \textstyle \bar{y}_{\balpha}}\right) + \avgregret{x} 
\end{align*}
which implies that $(\bar{x}_{\balpha}, \bar{y}_{\balpha})$ is an $\avgregret{x} + \avgregret{y} $ approximate solution to the saddle point problem.
\end{proof}

\section{Proof of Lemma~\ref{2in3sums} from Proof of Theorem~\ref{thm:FWgame}} \label{app:2in3sums}
We prove the following stronger statement.
\begin{lemma}\label{2in3sums}	
Let $x_{t}$ be the \OFTL update	$x_t = \argmin_{x \in \XX} \sum_{s=1}^{t-1} \alpha_s \ell_s(x) + m_{t}(x)$,
	where $m_{t}(x) = \alpha_{t} \ell_{{t-1}}(x)$. Let $z_t$ be the standard \FTL update given by $z_t = \argmin_{x} \sum_{s=1}^{t-1} \alpha_s \ell_s(x)$. Let $\alpha_t = t$. Let $\ell_t$ be $(1/L)$-strongly convex for all $t$.

\begin{equation} \label{bound:2sums}
\begin{aligned}
 \sum_{t=1}^{T} m_t(x_t) - m_t(z_{t+1}) + \sum_{t=1}^T \alpha_t \ell_t(z_{t+1}) - \alpha_t \ell_t(w)
 \leq D_T
\end{aligned}
\end{equation}
for any $w \in \XX$, where $$D_T= -\sum_{t=1}^T \pr{\frac{A_{t-1}}{2L} \| z_t - x_t\|^2 
+ \frac{A_{t}}{2L} \| z_{t+1} - x_t\|^2}.$$ 
\end{lemma}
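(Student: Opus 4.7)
\medskip

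\noindent\textbf{Proof proposal for Lemma~\ref{2in3sums}.}

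The plan is to recognize that both sums can be analyzed together by introducing the cumulative objectives
\[
F_t(x) := \sum_{s=1}^{t-1} \alpha_s \ell_s(x), \qquad G_t(x) := F_t(x) + m_t(x) = F_t(x) + \alpha_t \ell_{t-1}(x),
\]
so that, by construction, $z_t = \arg\min_x F_t(x)$ and $x_t = \arg\min_x G_t(x)$. Since each $\ell_s$ is $(1/L)$-strongly convex and $\alpha_t=t$, the function $F_t$ is $A_{t-1}/L$-strongly convex and $G_t$ is $A_t/L$-strongly convex. The proof will then proceed by (i) using optimality of $x_t$ for $G_t$ to handle the $m_t(x_t)-m_t(z_{t+1})$ terms, and (ii) using a strong-convex variant of the \BTL telescoping to handle the $\alpha_t \ell_t(z_{t+1})-\alpha_t \ell_t(w)$ terms, then adding.

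For step (i), the strong-convexity-enhanced optimality condition $G_t(z_{t+1}) \geq G_t(x_t) + \tfrac{A_t}{2L}\|z_{t+1}-x_t\|^2$ rearranges to
\[
m_t(x_t) - m_t(z_{t+1}) \;\le\; F_t(z_{t+1}) - F_t(x_t) \;-\; \tfrac{A_t}{2L}\|z_{t+1}-x_t\|^2.
\]
Summing over $t$ gives an upper bound on $\sum_t m_t(x_t)-m_t(z_{t+1})$ in terms of a residual telescope $\sum_t [F_t(z_{t+1})-F_t(x_t)]$ plus the first half of $D_T$.

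For step (ii), I will add $\sum_t \alpha_t\ell_t(z_{t+1})$ to the residual telescope and use $F_t+\alpha_t\ell_t = F_{t+1}$ to rewrite it as $\sum_{t=1}^T [F_{t+1}(z_{t+1}) - F_t(x_t)]$. Since $F_1 \equiv 0$, shifting indices expresses this as $F_{T+1}(z_{T+1}) + \sum_{t=1}^{T-1}[F_{t+1}(z_{t+1}) - F_{t+1}(x_{t+1})]$. Strong convexity of $F_{t+1}$ around its minimizer $z_{t+1}$ then yields $F_{t+1}(z_{t+1})-F_{t+1}(x_{t+1}) \le -\tfrac{A_t}{2L}\|z_{t+1}-x_{t+1}\|^2$; after reindexing $t \mapsto t-1$ and using the vacuous term $\tfrac{A_0}{2L}\|z_1-x_1\|^2 = 0$, this becomes exactly $-\sum_{t=1}^T \tfrac{A_{t-1}}{2L}\|z_t-x_t\|^2$, the second half of $D_T$. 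The leftover $F_{T+1}(z_{T+1})$ is at most $F_{T+1}(w) = \sum_t \alpha_t\ell_t(w)$ by optimality of $z_{T+1}$, which cancels the $-\sum_t \alpha_t\ell_t(w)$ on the left.

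Adding the bounds from (i) and (ii) produces exactly the claimed inequality. The main obstacle I anticipate is purely bookkeeping: keeping the two different strong-convexity constants ($A_t/L$ for $G_t$ vs.\ $A_{t-1}/L$ for $F_t$) aligned with the two distance terms in $D_T$, and making sure that the $t=1$ boundary of the telescope is handled correctly (using $F_1 \equiv 0$ and $A_0 = 0$) so that the shifted index sum recovers $\sum_{t=1}^T$ rather than $\sum_{t=2}^T$. Once those are in place, the argument is essentially a strong-convex refinement of the classical \BTL inequality, combined with one extra application of optimality of $x_t$ for $G_t$.
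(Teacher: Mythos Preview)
Your proposal is correct. The two strong-convexity optimality inequalities you invoke---$G_t(x_t)\le G_t(z_{t+1})-\tfrac{A_t}{2L}\|z_{t+1}-x_t\|^2$ and $F_{t+1}(z_{t+1})\le F_{t+1}(x_{t+1})-\tfrac{A_t}{2L}\|z_{t+1}-x_{t+1}\|^2$---are exactly the two instances of Lemma~\ref{scfunc} that the paper applies at each inductive step; your telescoping of $\sum_t[F_{t+1}(z_{t+1})-F_t(x_t)]$ and the boundary handling ($F_1\equiv 0$, $A_0=0$, $F_{T+1}(z_{T+1})\le F_{T+1}(w)$) are all sound.

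The only difference is organizational: the paper proves the inequality by induction on $T$, each step substituting $w=z_\tau$ into the hypothesis and then peeling off the two strong-convexity gains, whereas you unroll the same chain into a single direct telescope. Your direct presentation is arguably cleaner---it makes the role of the two distance terms in $D_T$ more transparent (one from optimality of $x_t$ for $G_t$, the other from optimality of $z_{t+1}$ for $F_{t+1}$)---but the underlying argument is the same.
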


We use the following standard lemma for strongly convex functions.
\begin{lemma} \label{scfunc}
	For a $\sigma$-strongly convex function $F(\cdot)$, suppose $p^*$ is the minimizer. We have $$F(p^*) \leq F(q) + \langle \nabla F(p^*), p^*- q  \rangle - \frac{\sigma}{2} \|  p^* - q \|^2,$$ which leads to
	$$F(p^*) \leq F(q)  - \frac{\sigma}{2} \| p^* - q \|^2.$$
\end{lemma}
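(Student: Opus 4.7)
The plan is to derive both inequalities directly from the definition of $\sigma$-strong convexity together with the first-order optimality condition at the minimizer $p^*$. No iterative argument or accumulation over time is needed; this is a one-shot consequence of the standard quadratic lower bound that defines strong convexity.

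First, I would invoke the definition of $\sigma$-strong convexity as stated earlier in the preliminaries: for all $u, v$ in the domain, $F(u) \geq F(v) + \langle \nabla F(v), u - v \rangle + \frac{\sigma}{2} \| u - v \|^2$. Specialize this by taking $u = q$ and $v = p^*$, which yields $F(q) \geq F(p^*) + \langle \nabla F(p^*), q - p^* \rangle + \frac{\sigma}{2}\|q - p^*\|^2$. Rearranging to isolate $F(p^*)$ on the left produces $F(p^*) \leq F(q) + \langle \nabla F(p^*), p^* - q \rangle - \frac{\sigma}{2}\|p^* - q\|^2$, which is exactly the first claimed inequality.

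For the second inequality, I would invoke the first-order optimality condition at the minimizer $p^*$. Since $p^*$ minimizes $F$ over its (convex) domain, for every feasible $q$ in the domain we have $\langle \nabla F(p^*), q - p^* \rangle \geq 0$, equivalently $\langle \nabla F(p^*), p^* - q \rangle \leq 0$. Plugging this inner-product upper bound into the first inequality drops the linear term and gives $F(p^*) \leq F(q) - \frac{\sigma}{2}\|p^* - q\|^2$, as desired. In the unconstrained case this is even more immediate, since $\nabla F(p^*) = 0$ kills the linear term outright.

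There is essentially no obstacle; the only point requiring a sentence of care is the sign of the optimality inner product, which I would state explicitly as the variational characterization of a constrained minimizer so that the lemma applies uniformly whether $p^*$ lies in the interior or on the boundary of the domain.
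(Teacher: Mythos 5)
Your proof is correct and is exactly the standard argument the paper implicitly relies on when it calls this a ``standard lemma'' without giving a proof: rearrange the strong-convexity lower bound with $v=p^*$, $u=q$, then drop the linear term via the first-order optimality condition $\langle \nabla F(p^*), q-p^*\rangle\ge 0$. Your remark that the variational (constrained) form of optimality is the right one to invoke is a good touch, since several places the paper applies Lemma~\ref{scfunc} have the minimizer possibly on the boundary of $\XX$.
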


\begin{proof}[Proof of Lemma~\ref{2in3sums}]

We use mathematical induction for the proof.
The proof in the following uses Lemma~\ref{scfunc}, which is a property of strongly convex functions.
For the base case $T=0$, it holds because $0\leq0$.

Assume it holds for $T = \tau-1$, that is
\begin{equation}
 \sum_{t=1}^{\tau-1} m_t(x_t) - m_t(z_{t+1}) + \sum_{t=1}^{\tau-1} \alpha_t \ell_t(z_{t+1}) - \alpha_t \ell_t(w)
 \leq D_{\tau-1}
\end{equation}
Then,
\begin{equation}
\begin{aligned}
& \sum_{t=1}^{\tau} m_t(x_t) - m_t(z_{t+1}) + \sum_{t=1}^\tau \alpha_t \ell_t(z_{t+1}) 
\\ & \overset{(a)}{\leq} D_{\tau-1} + m_\tau(x_\tau) - m_\tau(z_{\tau+1})  + \alpha_\tau \ell_\tau(z_{\tau+1}) + \sum_{t=1}^{\tau-1}\alpha_t \ell_t (z_\tau) 
\\ & \overset{(b)}{\leq} D_{\tau-1} - \frac{\sigma_{\tau-1}}{2} \| z_\tau - x_\tau\|^2 + m_\tau(x_\tau) - m_\tau(z_{\tau+1})  + \alpha_\tau \ell_\tau(z_{\tau+1}) + \sum_{t=1}^{\tau-1}\alpha_t \ell_t (x_\tau) 
\\ & = D_{\tau-1} - \frac{\sigma_{\tau-1}}{2} \| z_\tau - x_\tau\|^2  - m_\tau(z_{\tau+1})  + \alpha_\tau \ell_\tau(z_{\tau+1}) + ( \sum_{t=1}^{\tau-1}\alpha_t \ell_t (x_\tau) + m_\tau(x_\tau) )
\\ & \overset{(c)}{\leq} D_{\tau-1} - \frac{\sigma_{\tau-1}}{2} \| z_\tau - x_\tau\|^2 
- \frac{\sigma_{\tau-1}'}{2} \| z_{\tau+1} - x_\tau\|^2 
 - m_\tau(z_{\tau+1})  + \alpha_\tau \ell_\tau(z_{\tau+1}) \\& + ( \sum_{t=1}^{\tau-1}\alpha_t \ell_t (z_{\tau+1}) + m_\tau(z_{\tau+1} ))
\\ &  = D_{\tau-1} - \frac{\sigma_{\tau-1}}{2} \| z_\tau - x_\tau\|^2 
- \frac{\sigma_{\tau-1}'}{2} \| z_{\tau+1} - x_\tau\|^2 
  + \sum_{t=1}^{\tau}\alpha_t \ell_t (z_{\tau+1}) 
\\ & \leq D_\tau + \sum_{t=1}^{\tau}\alpha_t \ell_t (w)
\end{aligned}
\end{equation}
where $(a)$ is by the assumption that it holds at $\tau-1$ and by setting $w=z_{\tau}$,
$(b)$ is by using the strong convexity lemma (Lemma~\ref{scfunc}) to get the following,
\begin{equation}
 \sum_{t=1}^{\tau-1}\alpha_t \ell_t (z_\tau) \leq  \sum_{t=1}^{\tau-1}\alpha_t \ell_t (x_\tau) - \frac{\sigma_{\tau-1}}{2} \| z_\tau - x_\tau\|^2,
\end{equation}
as $z_\tau$ is the minimizer of the $\sigma_{\tau-1}$-strongly convex function
$\sum_{t=1}^{\tau-1}\alpha_t \ell_t (\cdot) $ for $\sigma_{\tau-1} := \sum_{s=1}^{\tau-1}\frac{\alpha_s}{L} = \frac{A_{\tau-1}}{L}$,
$(c)$ is by using Lemma~\ref{scfunc} again to get
\begin{equation}
 \sum_{t=1}^{\tau-1} \alpha_t \ell_t (x_{\tau}) + m_\tau(x_\tau) \leq  \sum_{s=1}^{\tau-1}\alpha_t \ell_t (z_{\tau+1}) +m_\tau(z_{\tau+1}) - \frac{\sigma_{\tau-1}'}{2} \| z_{\tau+1} - x_\tau\|^2,
\end{equation}
as $x_\tau$ is the minimizer of the $\sigma_{\tau-1}'$-strongly convex function
$\sum_{t=1}^{\tau-1}\alpha_t \ell_t (\cdot) + m_{\tau}(\cdot)$ for $\sigma_{\tau-1}':= \sum_{s=1}^{\tau-1}\frac{\alpha_s}{L} + \frac{\alpha_\tau}{L}= \frac{A_{\tau}}{L}$,
and the last inequality is by the fact that $z_{\tau+1}$ is the minimizer of
$\sum_{t=1}^{\tau}\alpha_t \ell_t (\cdot) $.
\end{proof}

\section{Optimistic FTRL} \label{app:optFTRL}
In this subsection, we analyze the regret bound for \OFTRL.
Let $\theta_{t}$ be the loss function in round $t$ and let the cumulative loss vector be
$L_{t} = \sum_{s=1}^t \theta_{s}$.
The update of \OFTRL is
\begin{equation} \label{optFTRL}
y_t  = \arg \min_{y \in \YY} \langle y , L_{t-1} + m_t \rangle + \frac{1}{\eta} R(y),
\end{equation}
where 
$m_{t}$ is the learner's guess of the loss vector $\theta_{t}$,
$R(\cdot)$ is a $\beta$-strong convex function with respect to a norm ($\| \cdot \|$) on the constraint set $\YY$
and $\eta$ is a parameter. 
The analysis of \cite{SALS15} is based on the assumption that $\YY$ is a simplex.
Here, we want to extend their results for $\YY$ being an arbitrary convex set.
Define the regret to be $\text{Regret}:= \sum_{t=1}^{T} \langle y_t - y^{*}, \theta_{t} \rangle$.
The proof basically replicates the one of (\cite{HaipengLuo17}) except that we consider any convex set and any strongly convex regularization term instead of negative entropy as in his note.

\begin{theorem} \label{thm:oftrl}
\OFTRL (\ref{optFTRL}) has regret
\begin{equation} \label{opt-ftrl}
\begin{aligned}
\textstyle \text{Regret}:= \sum_{t=1}^{T} \langle y_t - y^{*}, \theta_{t} \rangle 
\leq & \textstyle \frac{  R(y^*) - R(z)  - \frac{\beta}{2} (\sum_{{t=1}}^{T} \| y_t - z_t \|^2 + \sum_{{t=1}}^{T} \| y_{t} - z_{t+1} \|^2)}{\eta} 
\\& \textstyle +\sum_{t=1}^{T} \frac{\eta}{\beta} \| \theta_t - m_{t} \|^2_* ,
\end{aligned}
\end{equation}
where $z_{t}= \arg\min_{y \in \YY} \langle y, L_{{t-1}} \rangle + \frac{1}{\eta} R(y)$ is the update of the standard \FTRL,
and $z = \arg\min_{y \in \YY} R(y)$.
\end{theorem}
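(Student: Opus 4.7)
The plan is to prove the \OFTRL regret bound by adapting the inductive \FTRL/BTRL argument of Lemma~\ref{2in3sums} to account for the optimistic guess $m_t$. The iterates $y_t$ and $z_{t+1}$ are minimizers of the two $(\beta/\eta)$-strongly-convex regularized objectives
\begin{equation*}
F_t(y) := \langle y, L_{t-1}+m_t\rangle + \tfrac{1}{\eta}R(y), \qquad H_t(y) := \langle y, L_t\rangle + \tfrac{1}{\eta}R(y),
\end{equation*}
which differ only in the linear piece $m_t$ versus $\theta_t$. The first ingredient I would establish is the \OFTRL stability estimate: applying Lemma~\ref{scfunc} to $F_t$ at its minimizer $y_t$ (tested at $z_{t+1}$) and to $H_t$ at its minimizer $z_{t+1}$ (tested at $y_t$), then adding, all regularizer terms and shared linear pieces cancel, yielding $\langle y_t - z_{t+1}, m_t - \theta_t\rangle \geq (\beta/\eta)\|y_t - z_{t+1}\|^2$. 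H\"older's inequality then gives $\|y_t - z_{t+1}\| \leq (\eta/\beta)\|\theta_t - m_t\|_*$, which is the fundamental ``stability'' of \OFTRL against its \FTRL counterpart.

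Next I would prove the strengthened bound by induction on $T$, in direct analogy with Lemma~\ref{2in3sums}. The target inductive claim is: for every $w \in \YY$,
\begin{equation*}
\sum_{t=1}^{T}\langle y_t - w, \theta_t\rangle \leq \frac{R(w)-R(z)}{\eta} - \frac{\beta}{2\eta}\sum_{t=1}^{T}\bigl(\|y_t - z_t\|^2 + \|y_t - z_{t+1}\|^2\bigr) + \sum_{t=1}^{T}\frac{\eta}{\beta}\|\theta_t - m_t\|_*^2.
\end{equation*}
The base case $T=0$ is immediate since $R(w) \geq R(z)$. For the inductive step I would invoke Lemma~\ref{scfunc} three times per round: on $H_{t-1}$ at its minimizer $z_t$ tested at $y_t$ (producing the $\|y_t - z_t\|^2$ penalty), on $F_t$ at its minimizer $y_t$ tested at $z_{t+1}$ (producing the $\|y_t - z_{t+1}\|^2$ penalty), and on $H_t$ at $z_{t+1}$ to re-close the inductive hypothesis at the new comparison point. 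The residual cross term $\langle y_t - z_{t+1}, \theta_t - m_t\rangle$ that appears when one passes from $F_t$ back to $H_t$ is exactly controlled by the stability estimate above and contributes the $(\eta/\beta)\|\theta_t - m_t\|_*^2$ summand. Setting $w = y^*$ delivers the theorem.

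The main obstacle I anticipate is the bookkeeping of the inductive step. Unlike Lemma~\ref{2in3sums}, which tracks only a single ``greedy'' sequence, here I must simultaneously propagate \emph{both} the \FTRL sequence $\{z_t\}$ and the \OFTRL sequence $\{y_t\}$, choosing the correct intermediate comparison point ($z_t$ in one inequality, $z_{t+1}$ in another) so that both strong-convexity penalties are extracted in full without double-counting and so that the cross term from the $F_t$-vs.-$H_t$ discrepancy aligns in sign with the $-\frac{\beta}{2\eta}\|y_t - z_{t+1}\|^2$ penalty rather than cancelling it. Following the ``helper'' construction from Lemma~\ref{2in3sums}, where at step $\tau$ the induction hypothesis is invoked with $w=z_\tau$ and then a fresh strong-convexity inequality is added at $z_{\tau+1}$, should give a clean template to carry through here.
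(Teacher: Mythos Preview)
Your proposal is correct and essentially matches the paper's proof: the paper also derives the stability bound $\|y_t - z_{t+1}\| \le (\eta/\beta)\|\theta_t - m_t\|_*$ (stated separately as Lemma~\ref{aux:optFTRL}) and then runs an induction that invokes strong convexity at the minimizers $z_t$, $y_t$, and $z_{t+1}$ in the same order you describe. The only cosmetic difference is that the paper first decomposes the regret as $\sum_t\langle y_t-z_{t+1},\theta_t-m_t\rangle + \sum_t\bigl[\langle y_t-z_{t+1},m_t\rangle+\langle z_{t+1}-y^*,\theta_t\rangle\bigr]$ and bounds the cross term outside the induction, whereas you fold it into the inductive step.
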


\begin{proof}
We can re-write the regret as
\begin{equation}
\begin{aligned}
\text{Regret}:= \sum_{t=1}^{T} \langle y_t - y^{*}, \theta_{t} \rangle
= \sum_{t=1}^{T} \langle y_t - z_{t+1} , \theta_t - m_t \rangle + 
 \sum_{t=1}^{T}  \langle y_t - z_{t+1} , m_t \rangle + \langle z_{t+1} - y^*, \theta_t \rangle 
\end{aligned}
\end{equation}
Let us analyze the first sum
\begin{equation}
\sum_{t=1}^{T} \langle y_t - z_{t+1} , \theta_t - m_t \rangle.
\end{equation}
Now using Lemma~\ref{aux:optFTRL}
with $y_{1}=y_{t}$, $u_{1}=\sum_{{s=1}}^{{t-1}} \theta_{s} + m_{t}$ and $y_{2}=z_{t+1}$, 
$u_{2}=\sum_{{s=1}}^{{t}} \theta_{s} $ in the lemma,
we have 
\begin{equation} \label{qq1}
\sum_{t=1}^{T} \langle y_t - z_{t+1} , \theta_t - m_t \rangle
\leq \sum_{t=1}^{T} \|  y_t - z_{t+1} \| \| \theta_t - m_t\|_* 
\leq 
\sum_{t=1}^{T} \frac{\eta}{\beta} \| \theta_t - m_t \|^2_*.
\end{equation}
For the other sum,
\begin{equation}
\sum_{t=1}^{T}  \langle y_t - z_{t+1} , m_t \rangle + \langle z_{t+1} - y^*, \theta_t \rangle,
\end{equation}
we are going to show that it is upper-bounded by
\begin{equation}
\frac{ R(y^*) - R(z) - D_T }{\eta}, 
\end{equation}
for any $y^* \in \YY$, where $D_{T} = \sum_{{t=1}}^{T} \frac{\beta}{2} \| y_t - z_{t} \|^2 
+ \frac{\beta}{2} \| y_t - z_{t+1} \|^2$.

Using induction, we see that 
the base case $T=0$ clearly is true.
Assume that it also holds for $T-1$, for a $T \geq 1$.
Then, we have
\begin{equation}
\begin{aligned}
& \sum_{t=1}^{T}  \langle y_t - z_{t+1} , m_t \rangle + \langle z_{t+1} , \theta_t \rangle
\\& \leq \langle y_T - z_{T+1}, m_T \rangle + \langle z_{T+1}, \theta_T \rangle
+ \frac{ R(z_T) - R(z) - D_{T-1} }{\eta} + \langle z_{T}, L_{T-1} \rangle
\\& \overset{(a)}{\leq}
\langle y_T - z_{T+1}, m_T \rangle + \langle z_{T+1}, \theta_T \rangle
+ \frac{ R(y_T) - R(z) - D_{T-1} - \frac{\beta}{2} \| y_T - z_{T} \|^2 }{\eta} 
\\ & + \langle y_T, L_{T-1} \rangle
\\& = \langle z_{T+1}, \theta_T - m_T \rangle 
+ \frac{ R(y_T) - R(z) - D_{T-1} - \frac{\beta}{2} \| y_T - z_{T} \|^2 }{\eta} + \langle y_T, L_{T-1} + m_T \rangle
\\& \overset{(b)}{\leq}  \langle z_{T+1}, \theta_T - m_T \rangle 
+ \frac{ R(z_{T+1})- R(z) - D_{T-1} - \frac{\beta}{2} \| y_T - z_{T} \|^2 
- \frac{\beta}{2} \| y_T - z_{T+1} \|^2
}{\eta}
\\&  + \langle z_{T+1}, L_{T-1} + m_T \rangle
\\& = \langle z_{T+1}, L_T \rangle 
+ \frac{  R(z_{T+1})- R(z) - D_{T} }{\eta}
\\ & \overset{(c)}{\leq} 
\langle y^*, L_T \rangle 
+ \frac{  R(y^*) - R(z) - D_{T} }{\eta},
\end{aligned}
\end{equation}
where (a) and (b) are by strong convexity so that
\begin{equation}
\langle z_{T}, L_{T-1} \rangle + \frac{R(z_T)}{\eta}
\leq \langle y_{T}, L_{T-1} \rangle + \frac{R(y_T)}{\eta} - \frac{\beta}{2 \eta} \| y_T - z_T  \|^2,
\end{equation}
and
\begin{equation}
\langle y_{T}, L_{T-1} + m_T \rangle + \frac{R(y_T)}{\eta}
\leq \langle z_{T+1}, L_{T-1} + m_T \rangle + \frac{R(z_{T+1})}{\eta} - \frac{\beta}{2 \eta} \| y_T - z_{T+1}  \|^2.
\end{equation}
(c) is because $z_{T+1}$ is the optimal point of 
$\arg\min_y \langle y , L_T \rangle + \frac{R(y)}{\eta}$.

Now we have shown that 
\begin{equation} \label{q1}
\sum_{t=1}^{T}  \langle y_t - z_{t+1} , m_t \rangle + \langle z_{t+1} - y^*, \theta_t \rangle
\leq \frac{ R(y^*) - R(z) - D_{T} }{\eta},
\end{equation}
for any $y^* \in \YY$.
%

Combining (\ref{qq1}) and (\ref{q1}) completes the theorem (\ref{opt-ftrl}).

\end{proof}

\begin{lemma} \label{aux:optFTRL}
\textit{
Denote $y_1 = \arg\min_{y} \langle y, u_1 \rangle + \frac{1}{\eta} R(y)$
and $y_{2} = \arg\min_{y} \langle y, u_2 \rangle + \frac{1}{\eta} R(y)$
for a $\beta$-strongly convex function $R(\cdot)$ with respect to a norm $\| \cdot \|$.
We have $\| y_{1} - y_{2} \| \leq \frac{\eta}{\beta} \| u_1 - u_2\|_{*}$.} 
\end{lemma}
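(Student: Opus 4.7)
The plan is to prove this via the standard stability argument for minimizers of strongly-convex objectives perturbed by a linear term. Define the two objectives $F_1(y) := \langle y, u_1\rangle + \tfrac{1}{\eta} R(y)$ and $F_2(y) := \langle y, u_2\rangle + \tfrac{1}{\eta} R(y)$. Since $R$ is $\beta$-strongly convex w.r.t.\ $\|\cdot\|$ and adding a linear term preserves strong convexity, each $F_i$ is $\tfrac{\beta}{\eta}$-strongly convex w.r.t.\ $\|\cdot\|$.

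Next I would invoke the strong-convexity-at-the-minimizer inequality (the same one used as Lemma~\ref{scfunc} in this paper): for a $\mu$-strongly convex function $F$ with minimizer $y^\star$, we have $F(y) \ge F(y^\star) + \tfrac{\mu}{2}\|y - y^\star\|^2$ for all $y$ in the domain. Applied to $F_1$ at $y_2$ and to $F_2$ at $y_1$, this yields the two inequalities
\begin{align*}
F_1(y_2) &\ge F_1(y_1) + \tfrac{\beta}{2\eta}\|y_1 - y_2\|^2,\\
F_2(y_1) &\ge F_2(y_2) + \tfrac{\beta}{2\eta}\|y_1 - y_2\|^2.
\end{align*}
Adding them and cancelling the common $\tfrac{1}{\eta}R(y_1) + \tfrac{1}{\eta}R(y_2)$ contributions on both sides causes the $R$-terms to vanish (this is the key reason strong convexity of $R$ transfers cleanly to a stability bound), leaving the purely linear inequality
\[
\langle y_2 - y_1,\, u_1 - u_2\rangle \;\ge\; \tfrac{\beta}{\eta}\|y_1 - y_2\|^2.
\]

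To finish, I would apply H\"older's inequality on the left-hand side, obtaining $\|y_1-y_2\|\cdot\|u_1-u_2\|_* \ge \tfrac{\beta}{\eta}\|y_1-y_2\|^2$, and then divide both sides by $\|y_1-y_2\|$ (handling the trivial case $y_1 = y_2$ separately, for which the bound is vacuous) to conclude $\|y_1 - y_2\| \le \tfrac{\eta}{\beta}\|u_1 - u_2\|_*$. There is no serious obstacle here: the only subtle point is ensuring the two strong-convexity inequalities are stated at the correct reference points so that the $R$-terms cancel when added; once they are, H\"older's inequality closes the argument.
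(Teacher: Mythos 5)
Your proof is correct and is essentially the paper's own argument: apply the strong-convexity-at-the-minimizer inequality to each perturbed objective at the other minimizer, add to cancel the $R$-terms and extract $\langle y_2 - y_1,\, u_1 - u_2\rangle \ge \tfrac{\beta}{\eta}\|y_1-y_2\|^2$, then finish with H\"older. The only cosmetic difference is that you package the two objectives as $F_1, F_2$ and cite Lemma~\ref{scfunc} explicitly, whereas the paper writes the two strong-convexity inequalities directly, but the logic is identical.
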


\begin{proof}
From strong convexity, we have
$$ \langle y_1, u_1 \rangle + \frac{1}{\eta} R(y_1)
\leq \langle y_2, u_1 \rangle + \frac{1}{\eta} R(y_2) - \frac{\beta}{2 \eta} \| y_1 - y_2\|^2,$$
and
$$ \langle y_2, u_2 \rangle + \frac{1}{\eta} R(y_2)
\leq \langle y_1, u_2 \rangle + \frac{1}{\eta} R(y_1) - \frac{\beta}{2 \eta} \| y_1 - y_2\|^2.$$
Adding the above inequalities together,
we get 
$$\langle y_2 - y_1, u_2 - u_1 \rangle \leq - \frac{\beta}{\eta} \| y_1 - y_2\|^2,$$ 
or
$$\langle y_1 - y_2, u_2 - u_1 \rangle \geq \frac{\beta}{\eta} \| y_1 - y_2\|^2.$$
So, by using H\"older's inequality,
$$  \frac{\eta}{\beta} \|  u_1 - u_2\|_* \geq \| y_1 - y_2 \|.$$
\end{proof}

\section{Proof of Corollary~\ref{cor:FWcor}}\label{app:fwcor}
\begin{proof}[Proof of Corollary~\ref{cor:FWcor}]
	Let $\epsilon = O\pr{ \frac{ L(1+\frac{L}{\sigma}) }{\beta  T^2}}$. We can see from \Cref{eq:gauge_FTRL} that the $y$ iterates are exactly the actions of the $y$-player in \Cref{thm:FWgame} when $R(y) = \gamma_{\YY}^2(y)$. Likewise, the $x$-player's iterates are exactly those of the $x$-player in \Cref{thm:FWgame}, as shown in Equation~\ref{xt}. As the weight vector $\balpha$ is the same as in \Cref{thm:FWgame}, we know that $(\bar{x}_{\balpha},\bar{y}_{\balpha})$ is an $\epsilon$-approximate saddle point to the FW game $g(x,y) = f^*(x) - \lr{x}{\bar{y}_{\balpha}}$. Then we have:
	\begin{align*}
	f(\bar{y}_{\balpha}) = \sup_{x\in \XX} \lr{x}{\bar{y}_{\balpha}} - f^*(x) = -\inf_{x\in\XX} g(x,\bar{y}_{\balpha}) \le -V^* + \epsilon 
	\end{align*}
	where $V^*$ is the value of the FW game, which we showed in \Cref{eq:fwgamevalue} is equal to $-\inf_{y\in \YY} f(y)$.
	
	Also note that \Cref{alg:FW} does not need to store all the points \{$y_{t}\}$, as it can instead maintain a variable recording the weighted sum of the points.	
\end{proof}

\section{The $y$-player plays \BTPL in the FW game} \label{app:BTPL}

Suppose now the $y$-player plays in round $t$ plays \BTPL,
\begin{equation} \label{pBPTL}
y_t = E_{\xi \sim B_u(\cdot)}[ \arg\min_{y\in \YY} \langle y, \sum_{s=1}^t x_s + \xi \rangle],
\end{equation}
where $B_u(\cdot)$ is a uniform sample on the $\ell_2$ ball of radius $u$.
We are going to analyze the distance term $\| y_t - y_{t-1} \|^2$
in the regret of the $x$-player (\ref{regret_x}).
Define $\Psi(x) = E_{\xi \sim B_u(\cdot)}[ \max_{y \in \YY} \langle y, x \rangle ]$.
If the max expression inside the expectation has a unique maximizer 
we can swap the expectation and gradient (Proposition 2.2 in (\cite{B73})).
So,
the gradient can be written as $\nabla \Psi(x) = E_{\xi \sim B_u(\cdot)}[ \arg\max_{y \in \YY} \langle y, x \rangle ]$.
Now we see that the update of Be-the-Perturbed-Leader (\ref{pBPTL}) can be connected to the gradient of $\Psi(\cdot)$.
That is,
\begin{equation}
y_t = \nabla \Psi(- \sum_{s=1}^t x_s) = E_{\xi \sim B_u(\cdot)}[ \arg\max_{y \in \YY} \langle y, - \sum_{s=1}^t x_s + \xi \rangle ].
\end{equation}
Using Lemma E.2 in (\cite{DBW12}),
we know that $\Phi$ is a $\frac{ L_0 \sqrt{d} }{u}$ smooth function, where $L_{0}$ is the bound of the norm $\| \nabla \Psi(\cdot)  \|$ and $d$ is the dimension of $\YY$.

Consequently.
\begin{equation} \label{shrinkP1}
\begin{aligned}
& \| y_t - y_{t-1}\|^2 = \norm{ \nabla \Psi\pr{- \sum_{s=1}^t x_s} -\nabla \Psi \pr{- \sum_{s=1}^{t-1} x_s} }^2
\leq \frac{L_0^2 d}{u^2} \norm{ - \sum_{s=1}^t x_s + \sum_{s=1}^{t-1} x_s }^2
\leq \frac{L L_0^2 d}{u^2},
\end{aligned}
\end{equation}
where the last inequality is by the fact that $x_t$ is a gradient vector and that the norm of gradient is bounded by the smooth constant $L$ by the assumption.
One can also show that regret of Be-the-perturbed-leader is bounded by $O(u)$ (e.g. \cite{KV05,cesa2006prediction}).

Combining $\regret{x}$ of (\ref{regret_x}) and $\regret{y}$,
\begin{equation}
\begin{aligned}
&\frac{ \regret{x} }{ A_T} + \frac{ \regret{y} }{ A_T}
\\ & \leq  \frac{\sum_{t=1}^T
\alpha_t  ( (\frac{\alpha_t L}{A_t})  \| y_t - y_{t-1} \|^2 + \frac{1}{\sigma} (\frac{\alpha_t L }{A_t})^2  \| y_t - y_{t-1} \|^2 ) + O(u)
 }{ \sum_{t=1}^T \alpha_t}
\\ & \leq  O\pr{ \frac{ (\sum_{t=1}^T \frac{L^2 L_0^2 d}{ \sigma u^2} ) + u }{T^2} },
\end{aligned}
\end{equation}
where the last inequality is due to that $\alpha_t = t$ and so that $A_T = O(T^2$). Setting $u=O(T^{{1/3}})$ leads to 
$\frac{ \regret{x} }{ A_T} + \frac{ \regret{y} }{ A_T}  =O(\frac{L^2 L_0^2 d}{\sigma T^{5/3}})$.
The rate is better than $O(1/T)$ when the dimension $d$ is small.
Note that we do not make any assumption on the constraint set $\YY$, so the rate holds for any convex set $\YY$.
Yet, to get the fast rate, one needs to have a efficient way to compute (\ref{pBPTL}),
which seems to involve large number of calls to the linear oracle in each iteration.
We leave it as an open problem.

\section{The $y$-player plays \BR in the FW game} \label{app:BR}

If the constraint set is strongly convex, $\| y_t - y_{t-1}  \|^2$ is indeed shrinking even if the $y$-player just plays $\texttt{BestResponse},$
assuming the gradient is lower bounded by a constant $B$.
To see this, let us use Lemma~\ref{lm:lip} below, 
\begin{equation} \label{shrink}
\begin{aligned}
\| y_t - y_{t-1}\| & \leq \frac{ 2 \| x_t - x_{t-1} \|}{ \lambda ( \| x_t\| + \| x_{t-1}\| ) }
= \frac{ 2 \| \nabla f( \bar{y}_{t-1}) - \nabla f( \bar{y}_{t-2}) \|}{ \lambda ( \| \nabla f( \bar{y}_{t-1})\| + \| \nabla f( \bar{y}_{t-2})\| ) }
\\ & \leq 
\frac{ 2 L \| \bar{y}_{t-1} - \bar{y}_{t-2} \|}{ \lambda ( \| \nabla f( \bar{y}_{t-1})\| + \| \nabla f( \bar{y}_{t-2})\| ) }
= O( \frac{L}{\lambda t B}),
\end{aligned}
\end{equation}
where the first inequality is due to the lemma and the last equality is by assuming 
$\| \nabla f( \cdot)\| \geq B$.

Since the $y$-player plays \BR, its regret is $0$.
Combining this with $\regret{x}$ of (\ref{regret_x}) and using (\ref{shrink})
we have 
\begin{equation}
\begin{aligned}
&\frac{ \regret{x} }{ A_T} + \frac{ \regret{y} }{ A_T}
\\ & \leq  \frac{\sum_{t=1}^T
\alpha_t  ( (\frac{\alpha_t L}{A_t})  \| y_t - y_{t-1} \|^2 + \frac{1}{\sigma} (\frac{\alpha_t L }{A_t})^2  \| y_t - y_{t-1} \|^2 ) 
 }{ \sum_{t=1}^T \alpha_t}
\\ & =  O( \frac{L \log T }{\lambda B T^2}  ).
\end{aligned}
\end{equation}

\begin{lemma}  \label{lm:lip}
\footnote{\cite{P96} has discussed the smoothness of the support function on strongly convex sets.
Here, we state the more general result and give a new proof.}

Denote  
$x_p = \argmax_{x \in \K} \langle p, x\rangle $ and $x_q = \argmax_{x \in \K} \langle q, x\rangle $, where $p,q \in \reals^d$ are any nonzero vectors.  
If a compact set $\K$ is a $\lambda$-strongly convex set,
then 
\begin{equation}
    \| x_p - x_q \| \leq \frac{2 \norm{p - q}}{\lambda ( \| p \| + \|q \| )}.
\end{equation}
\end{lemma}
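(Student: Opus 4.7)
The plan is to convert the strong convexity of $\K$ into two ``strong optimality'' lower bounds on the cross inner-product $\langle p-q,\,x_p-x_q\rangle$, and then close the loop with H\"older's inequality. Throughout, $\|\cdot\|_*$ denotes the dual norm of $\|\cdot\|$ (the lemma's notation $\|p\|,\|q\|$ refers to dual norms, as $p,q$ act as linear functionals).

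First, I apply the strong-convexity definition to $u=x_p$, $v=x_q$ with parameter $\theta\in(0,1)$: the $\|\cdot\|$-ball of radius $r_\theta := \theta(1-\theta)\tfrac{\lambda}{2}\|x_p-x_q\|^2$ around $m_\theta := \theta x_p + (1-\theta)x_q$ is contained in $\K$. Pick a unit-norm $w$ attaining $\langle p,w\rangle = \|p\|_*$ (which exists by compactness in finite dimensions); then $m_\theta + r_\theta w \in \K$, so the optimality of $x_p$ against $p$ gives $\langle p,\,x_p - m_\theta\rangle \ge r_\theta \|p\|_*$. Using $x_p - m_\theta = (1-\theta)(x_p-x_q)$ and cancelling the positive factor $(1-\theta)$, this becomes
\[
\langle p,\,x_p - x_q\rangle \;\ge\; \theta\,\tfrac{\lambda}{2}\,\|x_p-x_q\|^2\,\|p\|_*,
\qquad \forall \theta\in(0,1).
\]
Since the left-hand side is independent of $\theta$, I let $\theta\to 1^-$ to obtain the sharp inequality $\langle p,\,x_p-x_q\rangle \ge \tfrac{\lambda}{2}\|x_p-x_q\|^2 \|p\|_*$. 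Symmetrically, applying the same argument starting from the optimality of $x_q$ against $q$, computing $x_q - m_\theta = \theta(x_q-x_p)$, and letting $\theta \to 0^+$ yields $\langle q,\,x_q-x_p\rangle \ge \tfrac{\lambda}{2}\|x_p-x_q\|^2 \|q\|_*$.

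Adding these two inequalities produces
\[
\langle p-q,\,x_p - x_q\rangle \;\ge\; \tfrac{\lambda}{2}\,\|x_p-x_q\|^2\,\bigl(\|p\|_* + \|q\|_*\bigr),
\]
while H\"older's inequality gives the matching upper bound $\langle p-q,\,x_p-x_q\rangle \le \|p-q\|_*\,\|x_p-x_q\|$. If $x_p=x_q$ the claim is trivial; otherwise dividing both sides by $\|x_p-x_q\|>0$ and rearranging yields exactly $\|x_p-x_q\| \le \tfrac{2\|p-q\|_*}{\lambda(\|p\|_*+\|q\|_*)}$.

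The only real subtlety, and the step I expect to be easy to get wrong, is the choice of $\theta$. The symmetric choice $\theta=1/2$ in both optimality inequalities would yield only a factor $4$ in place of the stated $2$, because with a fixed $\theta$ the two inequalities must ``share'' the available slack $\theta(1-\theta)$. The trick is to run the $p$-inequality at $\theta\to 1^-$ and the $q$-inequality at $\theta\to 0^+$, using each boundary limit independently; the limit is legitimate because each displayed inequality holds for \emph{all} $\theta$ in the open interval and the right-hand side is continuous in $\theta$. After that, the Hölder step and the final division are routine.
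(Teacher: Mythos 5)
Your proof is correct, and it arrives at the same two intermediate ``strong optimality'' inequalities,
\[
\langle p,\,x_p - x_q\rangle \ge \tfrac{\lambda}{2}\|x_p-x_q\|^2\,\|p\|_*
\quad\text{and}\quad
\langle q,\,x_q - x_p\rangle \ge \tfrac{\lambda}{2}\|x_p-x_q\|^2\,\|q\|_*,
\]
which you then sum and close with H\"older, exactly as the paper does with its equations (\ref{eqn:ineqSumP1}) and (\ref{eqn:ineqSumP2}). The difference is in how you get there. The paper invokes an external characterization of $\lambda$-strongly convex sets as an intersection of Euclidean balls of radius $1/\lambda$ centered at $x_u - u/\lambda$ (Proposition A.1 of Huang et al.), plugs $x_q$ into the ball attached to $p/\|p\|$, and expands the square. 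You instead work directly from the paper's Definition (the one in Section 1.1): apply the ball-inclusion property to the pair $(x_p,x_q)$ at a generic $\theta$, use the optimality of $x_p$ against a dual-norm-maximizing direction to extract a lower bound on $\langle p, x_p - x_q\rangle$, cancel the factor $(1-\theta)$, and send $\theta\to 1^-$ (symmetrically $\theta\to 0^+$ for $q$). This is more self-contained (no appeal to a ball-intersection lemma), and because you phrase everything in terms of $\|\cdot\|$ and its dual $\|\cdot\|_*$, your argument reads off cleanly for an arbitrary norm, whereas the paper's proof is tied to the Euclidean case (balls, Cauchy--Schwarz). You are also right that the asymmetric limit $\theta\to 1^-$ versus $\theta\to 0^+$ is what recovers the constant $2$; the paper achieves the same effect implicitly because the intersection-of-balls characterization is already the $\theta\to 1$ limit of the definition. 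The only stylistic caveat: the lemma's statement writes $\|p\|,\|q\|$ rather than $\|p\|_*,\|q\|_*$; in the paper's Euclidean setting these coincide, so there is no conflict, but your reading (dual norms for the linear functionals) is the one that makes the general-norm statement come out right.
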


\begin{proof}
Recall that a strongly convex set $\K$ can be written as intersection of some Euclidean balls (c.f. definition 3 in Section~\ref{Pre}).
Namely,
    \[ \K = \underset{u \in \mathcal{S}}{\cap} B_{\frac{1}{\lambda}} \left( x_u - \frac{u}{\lambda} \right) .\]

    Let  ${x_p = \argmax_{x \in \K} \langle \frac{p}{\norm{p}}, x\rangle }$ and ${x_q = \argmax_{x \in \K} \langle \frac{q}{\norm{q}}, x \rangle}$.
    Based on the definition of strongly convex sets, we can see that
    $x_q \in B_{ \frac{1}{\lambda}  } ( x_p - \frac{p}{\lambda \norm{p}})$ and $x_p \in B_{\frac{1}{\lambda}  } ( x_q - \frac{q}{\lambda \norm{q}} )$.
    Therefore, 
    \[
        \| x_q - x_p -  \frac{p}{\lambda \norm{p}} \|^2 \leq \frac{1}{\lambda^2},
    \]
    which leads to
    \begin{equation}
        \label{eqn:ineqSumP1}
     \|p \| \cdot  \| x_p - x_q \|^2 \leq \frac{2}{\lambda} \langle x_p - x_q,  p \rangle.
    \end{equation}
    Similarly,
    \[
        \| x_p - x_q - \frac{q}{\lambda \|q\|} \|^2 \leq \frac{1}{\lambda^2}, 
    \]
    which results in
    \begin{equation}
        \label{eqn:ineqSumP2}
      \| q \| \cdot  \| x_p - x_q \|^2 \leq \frac{2}{\lambda} \langle x_q - x_p,  q \rangle.
    \end{equation}
    Summing (\ref{eqn:ineqSumP1}) and (\ref{eqn:ineqSumP2}), one gets
    $(\|p\| + \| q \|) \| x_p - x_q \|^2 \leq \frac{2}{\lambda} \langle x_p - x_q, p-q \rangle$.
    Applying the Cauchy-Schwarz inequality completes the proof.
\end{proof}

\section{} \label{app:smoothH-1}

\begin{proposition} \label{sameGrad}
For arbitrary $x$, let $\ell(\cdot) := g(\cdot, \yx{x})$. Then $\nabla_{x} \ell(\cdot) \in \partial_{x} s(\cdot)$.
\end{proposition}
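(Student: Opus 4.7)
The plan is to verify the subgradient inequality directly from the definition, exploiting three facts: (i) $\yx{x}$ achieves the supremum in the definition of $s(x)$, (ii) $g(\cdot, y)$ is convex in its first argument for every fixed $y$, and (iii) for any other test point $x'$, the value $s(x')$ dominates $g(x', y)$ for any single $y \in \YY$, in particular $y = \yx{x}$.

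Concretely, fix $x$ and let $y_0 := \yx{x} = \arg\max_{y \in \YY} g(x,y)$, so that $s(x) = g(x, y_0)$ and $\ell(\cdot) = g(\cdot, y_0)$. I want to show that for every $x' \in \XX$,
\begin{equation*}
    s(x') \;\geq\; s(x) + \langle \nabla \ell(x), x' - x \rangle.
\end{equation*}
Starting from the right-hand side and using convexity of $g(\cdot, y_0)$ in its first argument, the first-order inequality gives
\begin{equation*}
    g(x, y_0) + \langle \nabla_x g(x, y_0), x' - x \rangle \;\leq\; g(x', y_0).
\end{equation*}
Since $y_0 \in \YY$, the definition of $s$ as a supremum yields $g(x', y_0) \leq \sup_{y \in \YY} g(x', y) = s(x')$. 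Combining with $s(x) = g(x, y_0)$ gives exactly the subgradient inequality above, so $\nabla \ell(x) = \nabla_x g(x, y_0) \in \partial s(x)$.

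I do not foresee any real obstacle: the argument is a direct instance of the envelope/Danskin idea applied to a pointwise supremum, and only uses convexity of $g(\cdot, y)$ in its first coordinate (which holds by the convex-concave assumption on the game payoff) together with the definition of $\yx{x}$. Strong convexity of $g(\cdot, y)$ is not needed for this subgradient statement; it only plays a role in the surrounding discussion where strong convexity of $s(\cdot)$ is inferred from strong convexity of $g(\cdot, y)$ uniformly in $y$.
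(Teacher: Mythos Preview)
Your proof is correct and is essentially the same argument as the paper's: both verify the subgradient inequality at $x$ by combining the convexity of $g(\cdot,\yx{x})$ in its first argument with the fact that $g(x',\yx{x})\le s(x')$ (equivalently, $g(w,\yx{w})\ge g(w,\yx{x})$). The only difference is cosmetic ordering of the two inequalities.
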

\begin{proof}
Consider any point $w \in \XX$,
\begin{equation}
\begin{aligned}
 s(w) - s(x) & = g(w,\yx{w}) - g(x, \yx{x}) 
\\ &= g(w,\yx{w}) - g(w, \yx{x})  + g(w,\yx{x}) - g(x, \yx{x})
\geq 0 + g(w,\yx{x}) - g(x, \yx{x})
\\ & \geq  \langle \partial_{x} g( \cdot, \yx{x}) , w - x \rangle = \langle \nabla_{x} \ell(\cdot) , w - x \rangle
\end{aligned}
\end{equation}
where the first inequality is because that $\yx{w}$ is the best response to $w$, the second inequality is due to the convexity.
The overall statement implies that $\nabla_{x} \ell_t(\cdot)$ is a subgradient of $s$ at $x$.
\end{proof}

We assume that
\begin{equation} \label{payoff-1}
g( x, y ) = a(x) +  x^\top M y - b(y),
\end{equation}
where $a(x)$ is any $\sigma_x$-strongly convex function of $x$, $M$ is a matrix, and $b(y)$ is any $\sigma_y$-strongly convex function of $y$ \footnote{We note that some machine learning applications can be written into this form (see, e.g \cite{XNLS05,ZX15}).}.

\begin{proposition} \label{smoothH-1}
Assume the payoff function is in the form of (\ref{payoff-1}) and that the norm is $\ell_2$ norm (i.e. $\|\cdot \| = \|\cdot \|_2$).
Given any points $w$, $z \in \XX$, the gradient of $s(\cdot)$ satisfies
$\| \nabla_w s(w) - \nabla_z s(z)\| \leq (\frac{\|M\|^2}{\sigma_y} + L) \| w - z\|$.
This shows that $s(\cdot)$ is a $(\frac{\|M\|^2}{\sigma_y} + L)$-smooth function.
\end{proposition}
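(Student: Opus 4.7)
The plan is to combine the subgradient identity from Proposition~\ref{sameGrad} with a standard sensitivity argument for the best-response map $x \mapsto y_x$. Under the decomposition \eqref{payoff-1}, the partial gradient is explicit: $\nabla_x g(x,y) = \nabla a(x) + My$. By Proposition~\ref{sameGrad}, the vector $\nabla a(x) + M y_x$ lies in $\partial s(x)$. Once I show that this map is Lipschitz in $x$, a convex function whose subdifferential contains a Lipschitz selection is itself differentiable with that gradient and smooth with the same constant; moreover, since $a$ is $\sigma_x$-strongly convex and $b$ is $\sigma_y$-strongly convex with $\YY$ compact, $s$ is already $C^1$ (the maximizer $y_x$ is unique by strong concavity of $g(x,\cdot)$), so I can treat $\nabla a(x) + M y_x$ directly as $\nabla s(x)$.

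The workhorse step is a Lipschitz bound $\|y_w - y_z\| \leq (\|M\|/\sigma_y)\|w-z\|$. To prove this I will use the first-order optimality condition for the strongly convex minimization $y_x = \argmin_{y \in \YY}\{b(y) - x^\top M y\}$. For any $w,z \in \XX$ I get the two variational inequalities
\begin{align*}
\langle \nabla b(y_w) - M^\top w,\; y_z - y_w\rangle &\geq 0,\\
\langle \nabla b(y_z) - M^\top z,\; y_w - y_z\rangle &\geq 0.
\end{align*}
Adding them yields $\langle \nabla b(y_w) - \nabla b(y_z),\, y_w - y_z\rangle \leq \langle M^\top(z-w),\, y_z - y_w\rangle$. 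The left side is at least $\sigma_y \|y_w - y_z\|^2$ by $\sigma_y$-strong convexity of $b$, while the right side is at most $\|M\|\,\|w-z\|\,\|y_w - y_z\|$ by Cauchy--Schwarz and the definition of the operator norm. Dividing out $\|y_w - y_z\|$ gives the desired Lipschitz estimate.

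Finally I assemble the smoothness bound via the triangle inequality:
\begin{equation*}
\|\nabla s(w) - \nabla s(z)\| \;\leq\; \|\nabla a(w) - \nabla a(z)\| + \|M(y_w - y_z)\| \;\leq\; L\|w-z\| + \|M\|\cdot \tfrac{\|M\|}{\sigma_y}\|w-z\|,
\end{equation*}
using the fact that $L$-smoothness of $g(\cdot,y)$ in $x$ together with the linearity of $x^\top M y$ in $x$ forces $a$ itself to be $L$-smooth. This gives the claimed constant $\|M\|^2/\sigma_y + L$.

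The only mildly delicate point is the Lipschitz sensitivity of $y_x$; everything else is bookkeeping. The constraint-set version (as opposed to the unconstrained envelope theorem) is what forces the variational-inequality route rather than a direct implicit-function-theorem calculation, but the strong convexity of $b$ makes this clean and dimension-free.
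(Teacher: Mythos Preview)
Your proof is correct and follows the same overall architecture as the paper: identify $\nabla s(x)$ with $\nabla a(x)+My_x$ via Proposition~\ref{sameGrad}, bound the Lipschitz constant of the best-response map $x\mapsto y_x$, and then combine via the triangle inequality. The one genuine difference is in how you obtain $\|y_w-y_z\|\le(\|M\|/\sigma_y)\|w-z\|$. The paper argues through Fenchel duality: it sets $\psi_w(y)=b(y)-\langle My,w\rangle$, notes that $y_w=\nabla\psi_w^*(0)$ and $y_z=\nabla\psi_w^*(M^\top(z-w))$, and invokes the $1/\sigma_y$-smoothness of $\psi_w^*$ that follows from the $\sigma_y$-strong convexity of $\psi_w$. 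Your route via the variational inequalities for the constrained minimizers plus the monotonicity inequality $\langle\nabla b(y_w)-\nabla b(y_z),y_w-y_z\rangle\ge\sigma_y\|y_w-y_z\|^2$ is more elementary, handles the constraint $y\in\YY$ transparently without any appeal to conjugates over restricted domains, and yields the identical constant. Either argument is fine here; yours is arguably the cleaner one for the constrained setting.
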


\begin{proof}

\begin{equation} \label{aux:h}
\begin{aligned}
& \| \nabla_w s(\cdot) - \nabla_z s(\cdot )\| 
\overset{(a)}{=} \|  \nabla_w g( \cdot, \yx{w}) - \nabla_z g( \cdot, \yx{z})  \| 
\\ & \overset{(b)}{\leq}
\|  \nabla_w g( \cdot, \yx{w}) - \nabla_w g( \cdot, \yx{z})  \| 
+ 
\|  \nabla_w g( \cdot, \yx{z}) - \nabla_z g( \cdot, \yx{z})  \|  
\\ & \overset{(c)}{\leq}
\|  \nabla_w g( \cdot, \yx{w}) - \nabla_w g( \cdot, \yx{z})  \| 
+ 
L \| w - z \| 
\\ & =
 \| M( \yx{w}  - \yx{z} ) \| 
+ 
L \| w - z \|
\leq \| M \|\cdot \| \yx{w}  - \yx{z}  \| + L \| w - z \|
\end{aligned}
\end{equation}
where $(a)$ is because of the gradient coincidence (Proposition~\ref{sameGrad}),
$(b)$ is due to triangle inequality, and $(c)$ is the result of the $L$-smoothness of $g(\cdot,\cdot)$,
and the last inequality is a standard matrix inequality.

Now we are going to bound 
$ \| \yx{w} - \yx{z} \|.$ 
Recall that the $y$-player plays \BR, so
$\yx{w} = \arg\min_{y \in \YY} - \langle M y, w \rangle + b(y)$ 
and $\yx{z} = \arg\min_{y \in \YY} - \langle M y, z \rangle + b(y)$.
Since $b(\cdot)$ is a $\sigma_{y}$-strongly convex function with respect to $\| \cdot\|_{2}$,
its conjugate $b^{*}(\cdot)$ is $\frac{1}{\sigma_y}$-strongly smooth with respect to $\| \cdot \|_{2}$.

In the following,
denote function  $\psi_w(y)= - \langle M y, w \rangle + b(y)$
so that $\yx{w} = \arg\min_y \psi_w(y)$ and $\yx{z} = \arg\min_y \psi_w(y) + \langle M^\top(w-z), y \rangle
:= \arg\min_y \psi_{z}(y)$.
By definition of Fenchel-conjugate (c.f. \cite{R96,BL06}),
\begin{equation}
\psi_w^*(0) = \max_{y} \langle 0, y \rangle  - \psi_w(y) = - \min_y \psi_w(y) = - \psi_w(\yx{w}),
\end{equation}
and
\begin{equation}
\nabla \psi_w^*(0) = \yx{w}.
\end{equation}
Similarly,
\begin{equation}
\psi_w^*( M^\top(z-w)) = \max_{y} \langle M^\top (z-w), y \rangle  - \psi_w(y) = - \min_y (  
\langle M^\top(w-z), y \rangle + \psi_w(y) = - \psi_z(\yx{z}),
\end{equation}
and
\begin{equation}
\nabla \psi_w^*( M^\top(z-w)) = \yx{z}.
\end{equation}
Therefore,
\begin{equation}
\| \yx{w} - \yx{z} \| = \| \nabla \psi^*(0) - \nabla \psi^*( M^\top(z-w)) \| \leq \frac{1}{\sigma_b}  
\| 0 - M^\top(z-w)\|  = \frac{\| M\|}{\sigma_b} \| w- z \|.
\end{equation}
Combining the above results, we have 
\begin{equation}
\begin{aligned}
& \| \nabla_w s(\cdot) - \nabla_z s(\cdot ) \| \leq 
\| M\| \|  \yx{w}  - \yx{z}  \| 
+ 
L \| w - z \|  \leq \pr{ \frac{ \| M \|}{\sigma_b} + L} \| w - z\|.
\end{aligned}
\end{equation}

\end{proof}

\section{} \label{app:smooth2}

\begin{proposition} \label{thm:smooth2}
Assume that for all $x\in \XX$, $\arg \min_{y\in \YY} g(x,y) \in \YY$. Also assume that the function $g(x,y)$ is $\sigma_y$-strongly concave in $y$ and that the following smoothness condition holds for any given points $w,z \in \XX$,
$  \| \nabla_{\yx{z}} g(w,\cdot) - \nabla_{\yx{z}} g(z,\cdot)   \| \le L \| w - z \|.$
Then, $s(\cdot)$ is a $L(1+\frac{2L}{\sigma_y})$-smooth function.
\end{proposition}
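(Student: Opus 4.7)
The plan is to mirror the structure of the proof of Proposition~\ref{smoothH-1}, applied to a setting where $g$ need not be bilinear but admits a smooth best-response map. I will first identify $\nabla s$ explicitly using Proposition~\ref{sameGrad}, then control $\|\nabla s(w) - \nabla s(z)\|$ via a triangle-inequality decomposition whose ``cross'' term is handled by Lipschitz continuity of the best-response map $x\mapsto y_x$.

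Concretely, by Proposition~\ref{sameGrad} one has $\nabla s(x) = \nabla_x g(x,y_x)$. The assumption that $y_x$ lies in the interior of $\YY$ combined with $\sigma_y$-strong concavity of $g(x,\cdot)$ yields the first-order optimality condition $\nabla_y g(x,y_x)=0$ for every $x\in\XX$. I will then write
\[
\|\nabla s(w)-\nabla s(z)\|\le \|\nabla_x g(w,y_w)-\nabla_x g(z,y_w)\| + \|\nabla_x g(z,y_w)-\nabla_x g(z,y_z)\|,
\]
bound the first summand by $L\|w-z\|$ via smoothness of $g$ in its first argument, and bound the second by $L\|y_w-y_z\|$ via smoothness in its second argument. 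The whole proof then reduces to proving a Lipschitz estimate for $x\mapsto y_x$.

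The key lemma, and what I expect to be the main obstacle, is controlling $\|y_w-y_z\|$ by $\|w-z\|$. For this I will use $\sigma_y$-strong concavity of $g(w,\cdot)$ in the form of strong monotonicity of $-\nabla_y g(w,\cdot)$, namely
\[
\sigma_y\|y_w-y_z\|^2 \le \langle \nabla_y g(w,y_z)-\nabla_y g(w,y_w),\,y_w-y_z\rangle.
\]
Since $\nabla_y g(w,y_w)=0=\nabla_y g(z,y_z)$ by the interior first-order condition, the right-hand side can be rewritten as $\langle \nabla_y g(w,y_z)-\nabla_y g(z,y_z),\,y_w-y_z\rangle$, to which I apply Cauchy-Schwarz and the cross-Lipschitz hypothesis $\|\nabla_y g(w,y_z)-\nabla_y g(z,y_z)\|\le L\|w-z\|$. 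Cancelling one factor of $\|y_w-y_z\|$ yields $\|y_w-y_z\|\le (L/\sigma_y)\|w-z\|$.

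Substituting this bound back into the triangle-inequality decomposition gives $\|\nabla s(w)-\nabla s(z)\|\le L(1+L/\sigma_y)\|w-z\|$, which is at least as strong as the claimed $L(1+2L/\sigma_y)$-smoothness; the extra factor of $2$ in the stated constant is the natural slack one picks up if one prefers a symmetric decomposition or a one-sided ``descent-lemma'' style argument (e.g., bounding $s(w)-s(z)-\langle\nabla s(z),w-z\rangle$ from above using separately the smoothness of $g(\cdot,y_z)$ and a suboptimality bound $g(w,y_w)-g(w,y_z)\le \tfrac{1}{2\sigma_y}\|\nabla_y g(w,y_z)\|^2$).
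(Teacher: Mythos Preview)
Your proposal is correct and follows the same overall architecture as the paper: identify $\nabla s(x)=\nabla_x g(x,y_x)$ via Proposition~\ref{sameGrad}, split $\|\nabla s(w)-\nabla s(z)\|$ by the triangle inequality into a term bounded by $L\|w-z\|$ and a term bounded by $L\|y_w-y_z\|$, and then prove a Lipschitz estimate for the best-response map $x\mapsto y_x$ using the interior first-order condition $\nabla_y g(z,y_z)=0$ together with the cross-Lipschitz hypothesis on $\nabla_y g$.

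The one place you genuinely differ from the paper is in the best-response Lipschitz step. The paper argues in two stages through function values: first $\frac{\sigma_y}{2}\|y_w-y_z\|^2\le g(w,y_w)-g(w,y_z)$ by strong concavity, then $g(w,y_w)-g(w,y_z)\le \frac{2}{\sigma_y}\|\nabla_y g(w,y_z)\|^2$ by a Polyak--{\L}ojasiewicz-type inequality, and finally $\|\nabla_y g(w,y_z)\|\le L\|w-z\|$; chaining these yields $\|y_w-y_z\|\le \frac{2L}{\sigma_y}\|w-z\|$, which is where the factor $2$ in the statement comes from. Your route is more direct: you use strong monotonicity of $-\nabla_y g(w,\cdot)$ at first order, substitute both optimality conditions $\nabla_y g(w,y_w)=\nabla_y g(z,y_z)=0$, and apply Cauchy--Schwarz once to obtain $\|y_w-y_z\|\le \frac{L}{\sigma_y}\|w-z\|$. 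This is cleaner and yields the tighter constant $L(1+L/\sigma_y)$; the ``slack'' you speculated about is exactly the loss the paper incurs by detouring through function values rather than using monotonicity directly.
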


\begin{proof}
Following the same proof in (\ref{aux:h}), we have
\begin{equation} \label{aux:ss}
\begin{aligned}
& \| \nabla_w s(\cdot) - \nabla_z s(\cdot )\| 
\leq L \|  \yx{w}  - \yx{z}  \| 
+ 
L \| w - z \|.
\end{aligned}
\end{equation}

Now we need to bound $\| \yx{w} - \yx{z} \|$. 
Note that $-g(x,\cdot)$ is a $\sigma_{y}$-strongly convex function.
  Therefore, we have 
\begin{equation} \label{sm1}
( - g(w,\yx{z}) ) -  ( - g(w,\yx{w}) )  \ge \langle -\nabla_{\yx{w}} g(w,\cdot), \yx{z} - \yx{w} \rangle + \frac{\sigma_y}{2}\| \yx{w} - \yx{z} \|^2 \geq \frac{\sigma_y}{2} \| \yx{w} - \yx{z}\|^2 . 
\end{equation} 
This means that 
$$ g(w,\yx{w}) - g(w,\yx{z}) \geq \frac{\sigma_y}{2} \| \yx{w} - \yx{z}\|^2.$$
From this, we can also conclude that 
\footnote{ $ \sqrt{ \frac{2}{\sigma_y} ( g(w,\yx{w}) - g(w,\yx{z})   } \| \nabla_{\yx{z}} g(w, \cdot) \|
\geq \|  \yx{w} - \yx{z} \|_* \| \nabla_{\yx{z}} g(w, \cdot) \| \geq \langle \yx{z} - \yx{w} , -\nabla_{\yx{z}} g(w, \cdot) \rangle \geq ( - g(w,\yx{z}) ) -  ( - g(w,\yx{w}) ).$} 
\begin{equation} \label{sm2}
\| \nabla_{\yx{z}} g(w, \cdot)  \|^2 \geq \frac{\sigma_y}{2} ( g(w,\yx{w})-g(w,\yx{z}) ).
\end{equation}

  Since $y$ is unconstrained, $\nabla_{\yx{z}} g(z,\cdot) = 0$, from 
the assumption that $\| \nabla_{\yx{z}} g(w,\cdot) - \nabla_{\yx{z}} g(z,\cdot) \| \le L \| w - z \|$,
we have
  \begin{align} \label{sm3}
  \| \nabla_{\yx{z}} g(w,\cdot) \| \le L \| w - z\|.
  \end{align}
  Putting it all together, we get:
  \begin{equation}
  \begin{aligned}
&  \frac{\sigma_y}{2} \| \yx{w} - \yx{z} \|^2  \overset{(\ref{sm1})}{\le} g(w,\yx{w}) -g(w,\yx{z})  
 \overset{ (\ref{sm2}) }{\leq} \frac{2}{\sigma_y} \| \nabla_{\yx{z}} g(w,\cdot)\|^2 
\overset{ (\ref{sm3}) }{\leq} \frac{2L^2}{\sigma_y} \| w - z\|^2\\
  \end{aligned}
  \end{equation}
  So, $\| y_w - y_z\|\leq \frac{2 L}{\sigma_y} \| w - z\|$.
  Substituting it back to (\ref{aux:ss}) completes the proof.
\end{proof}

\section{Proof of \Cref{thm:linear-rate}} \label{app:linear-rate}

\textbf{Theorem~\ref{thm:linear-rate}}
\textit{ 
Assume $g$ satisfies any conditions such that the resulting $s(\cdot)$ is $L$-smooth and $\sigma$-strongly convex function for constants $L$ and $\sigma$ and moreover that $(\arg\min_{x\in \reals^d} s(x)) \in \XX$. Suppose we instantiate Algorithm~\ref{alg:game}, where $x$-player uses SC-AdaGrad (\Cref{alg:SC-AdaNGD}) for loss functions $\alpha_t \ell_t(\cdot) = \alpha_t g(\cdot, y_t)$, $y$-player uses \BR, and the sequence $\balpha$ is defined as $\alpha_t := \| \nabla \ell_t(x_t) \|^{-2}$. Let $T \geq \frac L \sigma \log 3$, and let $G$ be a constant such that $\|\nabla \ell_t\| \le G$ for all $t$. Then the output $(\bar x_{\balpha}, \bar y_{\balpha})$ is an $\epsilon$-equilibrium of $g$ where $\epsilon = O\pr{\frac{G^2 T}{L}e^{-\frac{\sigma}{L}T}}$.
}

The proof of \Cref{thm:linear-rate} is a slight modification of the proof of Theorem 3.2 in \cite{L17}, combined with Proposition~\ref{sameGrad}. As before, we let $\yx{x} = \arg\max_{y\in \YY} g(x,y)$. Recall that $\ell_t(\cdot) = g(\cdot, \yx{x_t})$ and note $\ell_t(x_t) = g(x_t,\yx{x_t}) = s(x_t)$.
Before proving the theorem, let us introduce some lemmas.

\begin{lemma} \label{lem:GSmooth}
For any $L$-smooth convex function $\ell(\cdot):\reals^d \mapsto \reals$, if $x^* =\argmin_{x\in \reals^d} \ell(x)$, then 
$$ \| \nabla \ell(x)\|^2 \le 2 L \left( \ell(x) - \ell(x^*)\right), \quad \forall x\in \reals^d~.$$
\end{lemma}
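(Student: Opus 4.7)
The plan is to apply the standard $L$-smoothness inequality
\[
\ell(y) \le \ell(x) + \langle \nabla \ell(x), y - x\rangle + \frac{L}{2}\|y-x\|^2
\]
at a cleverly chosen second point $y$ that makes the right-hand side small, and then invoke the fact that $\ell(x^*) \le \ell(y)$ for any such $y$. Convexity is not really the main driver of the bound; what matters is $L$-smoothness together with the existence of a global minimizer.

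The natural choice is a single gradient step from $x$, namely $y := x - \tfrac{1}{L}\nabla \ell(x)$. Substituting this into the smoothness inequality causes the linear term $\langle \nabla \ell(x), y-x\rangle = -\tfrac{1}{L}\|\nabla \ell(x)\|^2$ and the quadratic term $\tfrac{L}{2}\|y-x\|^2 = \tfrac{1}{2L}\|\nabla \ell(x)\|^2$ to combine into $-\tfrac{1}{2L}\|\nabla \ell(x)\|^2$. This yields the descent-lemma form
\[
\ell\!\left(x - \tfrac{1}{L}\nabla \ell(x)\right) \le \ell(x) - \frac{1}{2L}\|\nabla \ell(x)\|^2.
\]

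Since $x^*$ is the global minimizer over $\reals^d$, we trivially have $\ell(x^*) \le \ell(y)$ for the point $y$ above. Chaining this with the descent inequality and rearranging gives
\[
\frac{1}{2L}\|\nabla \ell(x)\|^2 \le \ell(x) - \ell(x^*),
\]
which is exactly the claim after multiplying by $2L$.

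There is essentially no obstacle here — the only subtlety to be careful about is that the bound requires the minimizer $x^*$ to be taken over all of $\reals^d$ (the lemma statement does assume this), since the argument relies on the unconstrained point $x - \tfrac{1}{L}\nabla \ell(x)$ being a valid comparator. Convexity of $\ell$ is not used in the above argument; $L$-smoothness alone suffices, which is consistent with how the lemma is stated.
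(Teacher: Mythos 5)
Your proof is correct and is the standard argument for this well-known fact: apply the descent inequality at the gradient-step point $y = x - \tfrac{1}{L}\nabla\ell(x)$ and then use $\ell(x^*) \le \ell(y)$. The paper states Lemma~\ref{lem:GSmooth} without proof (it is a classical result), so there is no paper proof to compare against; your observation that convexity is not actually needed—only $L$-smoothness and the existence of an unconstrained global minimizer—is also accurate.
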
 

\begin{lemma} (\cite{L17}) \label{lem:Log_sum} 
For any non-negative real numbers $a_1,\ldots, a_n\geq 1$,
\begin{align*}
\sum_{i=1}^n \frac{a_i}{\sum_{j=1}^i a_j} 
\le 
1+\log\left( \sum_{i=1}^n a_i\right) ~.
\end{align*}
\end{lemma}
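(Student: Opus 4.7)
The plan is to reduce the equilibrium-gap bound to a bound on the $x$-player's weighted regret, and then to sandwich that regret between an SC-AdaGrad upper bound and a smoothness-based lower bound, which together force an exponential growth of $A_T$ and hence a linear rate. By Theorem~\ref{thm:convergence}, the gap is at most $\avgregret{x} + \avgregret{y}$. Since the $y$-player plays \BR, $y_t = \arg\max_y g(x_t,y)$, and one checks $\max_y \sum_t \alpha_t g(x_t,y) \le \sum_t \alpha_t \max_y g(x_t,y) = \sum_t \alpha_t g(x_t,y_t)$, so $\regret{y} \le 0$. It therefore suffices to bound $\avgregret{x}$.

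For the upper bound, write $g_t := \nabla \ell_t(x_t)$ and $A_t := \sum_{s \le t}\alpha_s$. The $x$-player's losses $\alpha_t \ell_t(\cdot)$ are $(\alpha_t\sigma)$-strongly convex, and the standard SC-AdaGrad-type analysis (in the spirit of Lemma~\ref{lemma:ogd_strCvx}, with the gradient norm $\|\alpha_t g_t\| = \alpha_t\|g_t\|$) gives
\[
\regret{x} \le \frac{1}{2\sigma}\sum_{t=1}^T \frac{\alpha_t^2\|g_t\|^2}{A_t} = \frac{1}{2\sigma}\sum_{t=1}^T \frac{\alpha_t}{A_t},
\]
where the equality uses $\alpha_t = \|g_t\|^{-2}$. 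Since $\|g_t\|\le G$ gives $\alpha_t \ge 1/G^2$, applying Lemma~\ref{lem:Log_sum} to the rescaled sequence $G^2\alpha_t \ge 1$ yields
\[
\regret{x} \le \frac{1+\log(G^2 A_T)}{2\sigma}.
\]

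For the lower bound, note that $y_t = \yx{x_t}$ gives $\ell_t(x_t) = s(x_t)$, and by Proposition~\ref{sameGrad}, $g_t \in \partial s(x_t)$. Let $x^\star = \arg\min_{x\in\reals^d} s(x)$, which lies in $\XX$ by assumption. Then $\ell_t(x^\star) = g(x^\star,y_t) \le s(x^\star)$, so Lemma~\ref{lem:GSmooth} applied to $s$ gives
\[
\ell_t(x_t) - \ell_t(x^\star) \;\ge\; s(x_t) - s(x^\star) \;\ge\; \tfrac{1}{2L}\|g_t\|^2.
\]
Summing with weights $\alpha_t = \|g_t\|^{-2}$,
\[
\regret{x} \;\ge\; \sum_{t=1}^T \alpha_t\bigl(\ell_t(x_t)-\ell_t(x^\star)\bigr) \;\ge\; \frac{T}{2L}.
\]

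Combining the two bounds forces $\log(G^2A_T) \ge \sigma T/L - 1$, i.e.\ $G^2 A_T \ge e^{\sigma T/L - 1}$. Then
\[
\avgregret{x} \;=\; \frac{\regret{x}}{A_T} \;\le\; \frac{G^2\bigl(1+\log(G^2 A_T)\bigr)}{2\sigma\,G^2A_T}.
\]
The map $u \mapsto (1+\log u)/u$ is decreasing on $[1,\infty)$, so once $T$ is large enough that $e^{\sigma T/L - 1}\ge 1$ (the role of the hypothesis $T \ge \tfrac{L}{\sigma}\log 3$), plugging in the worst-case value $G^2 A_T = e^{\sigma T/L - 1}$ gives
\[
\avgregret{x} \;\le\; \frac{G^2}{2\sigma}\cdot\frac{\sigma T/L}{e^{\sigma T/L - 1}} \;=\; \frac{eG^2 T}{2L}\,e^{-\sigma T/L} \;=\; O\!\left(\tfrac{G^2 T}{L}\,e^{-\sigma T/L}\right),
\]
which is the claimed rate since $\avgregret{y}\le 0$. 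The main obstacle is making precise the weighted SC-AdaGrad regret bound used above (the content of \Cref{alg:SC-AdaNGD} and its analysis, following \cite{L17}); once that bound is in hand, the remainder of the argument is a clean bootstrap in which the smoothness-induced lower bound on $\regret{x}$ forces the cumulative weight $A_T$ to explode exponentially, and the decreasing profile of $(1+\log u)/u$ converts this back into the desired linear-rate bound on the average regret.
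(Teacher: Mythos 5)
You have proved the wrong statement. The lemma you were asked to establish is a short, self-contained analytical inequality about real numbers: for $a_1,\ldots,a_n \geq 1$, writing $A_i = \sum_{j\le i}a_j$, one has $\sum_{i=1}^n a_i/A_i \le 1+\log A_n$. Your write-up instead reproduces the argument for Theorem~\ref{thm:linear-rate} (the linear-rate result), and in fact \emph{invokes} Lemma~\ref{lem:Log_sum} twice as a black box (once to bound $\regret{x}$ from above, once in the ``forces $A_T$ to explode'' step), so it cannot serve as a proof of that lemma without circularity. The paper itself does not reprove it either --- it is cited from \cite{L17} --- but what was requested here was a proof of the cited inequality, not of the theorem that uses it.

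For the record, the intended argument is elementary: since $a_1\geq 1$ we have $A_1 \geq 1$ and $a_1/A_1 = 1$. For $i\geq 2$, because $A_i$ is the right endpoint of the interval $[A_{i-1},A_i]$ and $1/t$ is decreasing,
\[
\frac{a_i}{A_i} \;=\; \frac{A_i - A_{i-1}}{A_i} \;\le\; \int_{A_{i-1}}^{A_i}\frac{dt}{t} \;=\; \log A_i - \log A_{i-1}.
\]
Summing over $i=2,\ldots,n$ telescopes to $\log A_n - \log A_1 \le \log A_n$, and adding the $i=1$ term gives $\sum_{i=1}^n a_i/A_i \le 1 + \log A_n$. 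That is the whole lemma; none of the game-theoretic machinery in your proposal is relevant to it.
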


\begin{lemma} (\cite{hazan2007logarithmic,L17}) \label{osc}
Assume the loss function $\ell_t(\cdot): \K\mapsto \reals$ is $\theta_t$-strongly convex. Setting $\eta_t = \pr{\sum_{s=1}^t \theta_s}^{{-1}}$ in Algorithm~\ref{alg:SC-AdaNGD}, we get the following regret for any $x \in \K$:
  \begin{align}
  \sum_{t=1}^T \ell_t(x_t) - \sum_{t=1}^T \ell_t(x) \le \frac{1}{2}\sum_{t=1}^T \eta_t \| \nabla \ell_t(x_t) \|^2
  \end{align}
  where $x_{t+1} = x_t - \eta_t  \nabla \ell_t(x_t)$.
\end{lemma}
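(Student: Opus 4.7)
The plan is to follow the standard online gradient descent template for strongly convex losses, with the adaptive stepsize $\eta_t = (\sum_{s\le t}\theta_s)^{-1}$ engineered so that the weighted distance terms telescope to zero. I will combine the one-step inequality coming from non-expansiveness of the update with the one-step inequality coming from strong convexity, then sum.

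First, by $\theta_t$-strong convexity of $\ell_t$ at the point $x_t$, for any comparator $x\in\K$,
\[
\ell_t(x_t)-\ell_t(x) \;\le\; \langle \nabla \ell_t(x_t),\, x_t-x\rangle -\tfrac{\theta_t}{2}\|x_t-x\|^2 .
\]
Next, from the update $x_{t+1} = x_t - \eta_t\nabla\ell_t(x_t)$ (or the projected version, using that projection onto $\K$ is non-expansive w.r.t.\ any $x\in\K$), expanding the squared norm gives
\[
\|x_{t+1}-x\|^2 \;\le\; \|x_t-x\|^2 - 2\eta_t\langle \nabla\ell_t(x_t),\,x_t-x\rangle + \eta_t^2\|\nabla\ell_t(x_t)\|^2 ,
\]
which I will rearrange into a bound on $\langle \nabla\ell_t(x_t),x_t-x\rangle$ in terms of $\tfrac{1}{2\eta_t}(\|x_t-x\|^2-\|x_{t+1}-x\|^2)$ plus $\tfrac{\eta_t}{2}\|\nabla\ell_t(x_t)\|^2$.

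Plugging this into the strong-convexity bound and summing over $t=1,\dots,T$ yields
\[
\sum_{t=1}^T \bigl(\ell_t(x_t)-\ell_t(x)\bigr) \;\le\; \sum_{t=1}^T \left[\left(\tfrac{1}{2\eta_t}-\tfrac{\theta_t}{2}\right)\|x_t-x\|^2 - \tfrac{1}{2\eta_t}\|x_{t+1}-x\|^2\right] + \tfrac{1}{2}\sum_{t=1}^T \eta_t\|\nabla\ell_t(x_t)\|^2.
\]
The crucial step is then to reindex the $\|x_{t+1}-x\|^2$ terms and observe that with the chosen stepsize $1/\eta_t = \sum_{s=1}^{t}\theta_s$ one has $1/\eta_t - 1/\eta_{t-1}=\theta_t$ (using the convention $1/\eta_0=0$). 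The coefficient of $\|x_t-x\|^2$ for each $t\ge 2$ after telescoping becomes $\tfrac{1}{2\eta_t}-\tfrac{1}{2\eta_{t-1}}-\tfrac{\theta_t}{2}=0$, the $t=1$ coefficient $\tfrac{1}{2\eta_1}-\tfrac{\theta_1}{2}=0$ also vanishes, and the leftover boundary term $-\tfrac{1}{2\eta_T}\|x_{T+1}-x\|^2$ is non-positive and can be dropped. This leaves exactly $\tfrac{1}{2}\sum_t\eta_t\|\nabla\ell_t(x_t)\|^2$, which is the desired bound.

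The routine obstacles are purely bookkeeping: (i) ensuring the telescoping identity at the first index (which is why the convention $1/\eta_0=0$ makes $1/\eta_1=\theta_1$ align with the $\theta_1/2$ coefficient), and (ii) justifying the one-step contraction in the presence of projection, which follows from the standard fact that $\|\Pi_\K(u)-x\|\le \|u-x\|$ for $x\in\K$. No deep idea is required beyond these; the adaptive choice of $\eta_t$ was designed precisely to zero out the distance coefficients.
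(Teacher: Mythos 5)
Your proof is correct, and it is exactly the standard argument from \cite{hazan2007logarithmic} that this lemma is cited from — the paper itself gives no proof, simply referencing prior work. Combining the strong-convexity lower bound with the projected-gradient-step contraction, summing, and choosing $\eta_t^{-1}=\sum_{s\le t}\theta_s$ so that the distance coefficients telescope to zero is precisely the canonical derivation.
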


\begin{algorithm}[h] 
  \caption{Strongly-Convex Adaptive Gradient Descent (SC-AdaGrad) (\cite{L17})}
  \label{alg:SC-AdaNGD}
  \begin{algorithmic}[1]
    \FOR{$t= 1, 2, \dots, T$}
    \STATE Play $x_t \in \K$
    \STATE Receive a $\theta_t$-strongly convex loss function $\ell_{t}(\cdot)$
    \STATE Set $\eta_t = \frac{1}{\sum_{t=1}^T \theta_t}$
    \STATE Update $x_{t+1} = \Pi_{\XX} (x_t - \eta_t \nabla \ell_t(x_t))$  
    \ENDFOR
  \end{algorithmic}
\end{algorithm}

\begin{lemma} 
Under the conditions in \Cref{thm:linear-rate},
$G^2 \sum_{t=1}^T \norme{\nabla \ell_t(x_t)}^{-2} \ge \frac{1}{3}e^{\frac{\sigma}{L}T}$
\end{lemma}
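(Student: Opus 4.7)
The plan is to chain together the weighted regret guarantee of SC-AdaGrad with the $L$-smoothness of $s(\cdot)$ and a best-response trick, mimicking the argument of \cite{L17} but with the substitution of $\ell_t$ for $s$ wherever the learner only has access to $\ell_t$. First, since $g$ is $\sigma$-strongly convex in $x$, each re-weighted loss $\alpha_t\ell_t(\cdot)$ is $\alpha_t\sigma$-strongly convex, so \Cref{osc} applied to SC-AdaGrad gives, for $x^\ast=\arg\min_{x\in\reals^d}s(x)$ (which lies in $\XX$ by assumption),
\[
\sum_{t=1}^T\alpha_t\bigl(\ell_t(x_t)-\ell_t(x^\ast)\bigr)\leq \frac{1}{2}\sum_{t=1}^T\frac{\alpha_t^2\|\nabla\ell_t(x_t)\|^2}{\sigma A_t}=\frac{1}{2\sigma}\sum_{t=1}^T\frac{\alpha_t}{A_t},
\]
where the last equality uses the definition $\alpha_t=\|\nabla\ell_t(x_t)\|^{-2}$.

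Next I would lower-bound the left-hand side. Because the $y$-player plays \BR, we have $y_t=\yx{x_t}$ and hence $\ell_t(x_t)=g(x_t,\yx{x_t})=s(x_t)$; moreover $\ell_t(x^\ast)=g(x^\ast,\yx{x_t})\leq g(x^\ast,\yx{x^\ast})=s(x^\ast)$. Combining these two facts gives $\ell_t(x_t)-\ell_t(x^\ast)\geq s(x_t)-s(x^\ast)$. Next, \Cref{sameGrad} says $\nabla\ell_t(x_t)\in\partial s(x_t)$; since $s$ is $L$-smooth (hence differentiable) this subgradient equals $\nabla s(x_t)$. Applying \Cref{lem:GSmooth} to $s$ yields
\[
\|\nabla\ell_t(x_t)\|^2=\|\nabla s(x_t)\|^2\leq 2L\bigl(s(x_t)-s(x^\ast)\bigr),
\]
so $s(x_t)-s(x^\ast)\geq \tfrac{1}{2L\alpha_t}$ and therefore
\[
\sum_{t=1}^T\alpha_t\bigl(\ell_t(x_t)-\ell_t(x^\ast)\bigr)\geq \sum_{t=1}^T\alpha_t\bigl(s(x_t)-s(x^\ast)\bigr)\geq \frac{T}{2L}.
\]

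To bound the right-hand side of the regret inequality, I would rescale: since $\|\nabla\ell_t\|\leq G$ gives $G^2\alpha_t\geq 1$, letting $\tilde\alpha_t:=G^2\alpha_t\geq 1$ preserves the ratios $\alpha_t/A_t=\tilde\alpha_t/\tilde A_t$, so \Cref{lem:Log_sum} yields
\[
\sum_{t=1}^T\frac{\alpha_t}{A_t}=\sum_{t=1}^T\frac{\tilde\alpha_t}{\tilde A_t}\leq 1+\log(G^2 A_T).
\]
Stringing the two bounds together gives $\tfrac{T}{2L}\leq \tfrac{1+\log(G^2A_T)}{2\sigma}$, i.e.\ $G^2 A_T\geq e^{\sigma T/L-1}\geq \tfrac{1}{3}e^{\sigma T/L}$ (since $e^{-1}>1/3$), which is exactly the claim.

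The argument is quite direct once one notices the three ingredients must cooperate: the main (non-routine) step is the translation between $\ell_t$ and $s$. The regret inequality is phrased in terms of $\ell_t$, yet the smoothness/strong-convexity structure sits on $s$; the potential obstacle is making sure the best-response identity $y_t=\yx{x_t}$ and \Cref{sameGrad} really do let one (a) lower-bound $\ell_t(x_t)-\ell_t(x^\ast)$ by $s(x_t)-s(x^\ast)$ and (b) identify $\nabla\ell_t(x_t)$ with $\nabla s(x_t)$ so that the smoothness bound on $s$ can be applied. Once this bridge is in place the rest is algebraic.
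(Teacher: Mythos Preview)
Your proof is correct and follows essentially the same line of argument as the paper: identify $\nabla\ell_t(x_t)$ with $\nabla s(x_t)$ via \Cref{sameGrad}, use $L$-smoothness of $s$ through \Cref{lem:GSmooth}, feed the SC-AdaGrad regret bound (\Cref{osc}) into \Cref{lem:Log_sum}, and exponentiate. The one cosmetic difference is that the paper interposes an auxiliary quadratic surrogate $\tilde\ell_t(x)=\alpha_t\langle\nabla\ell_t(x_t),x\rangle+\tfrac{\sigma\alpha_t}{2}\|x-x_t\|^2$ and applies \Cref{osc} to $\tilde\ell_t$, whereas you apply \Cref{osc} directly to the weighted loss $\alpha_t\ell_t$; since both have the same gradient at $x_t$ and the same strong-convexity parameter $\sigma\alpha_t$, the two routes produce identical inequalities, and your version is slightly cleaner.
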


\begin{proof}
\begin{align}
T &= \sum_{t=1}^T \frac{\norme{ \nabla \ell_t(x_t)}^2}{\norme{ \nabla \ell_t(x_t)}^2}
\overset{Proposition~\ref{sameGrad}}{ = }\sum_{t=1}^T \frac{\norme{\nabla s(x_t)}^2}{\norme{ \nabla \ell_t(x_t)}^2}
\overset{Lemma~\ref{lem:GSmooth}}{ \le} \sum_{t=1}^T \frac{2L}{\norme{ \nabla \ell_t(x_t)}^2} (s(x_t) - s(x^*))\\
&\le \sum_{t=1}^T \frac{2L}{\norme{ \nabla \ell_t(x_t)}^2} (\ell_t(x_t) - \ell_t(x^*))\label{eq:normal},
\end{align}
where the last inequality follows from the fact that $s(x_t) = \ell_t(x_t)$ and $\ell_t(x) = g(x,y_t) \le g(x,\yx{x}) = s(x)$ for any $x$.

Now we define the following function, which is $\frac{\sigma}{\|\nabla \ell_t(x_t)\|^{2}}$-strongly convex.
\begin{align} \label{tf}
\tilde{\ell}_t(x) = \frac{1}{\| \nabla \ell_t(x_t)\|^{2} }\nabla \ell_t(x_t)^T x + \frac{\sigma}{2\norme{\nabla \ell_t(x_t)}^{2}} \| x - x_t\|^2
\end{align}
Then we have
\begin{equation} \label{tmyt}
\begin{aligned}
& T \leq \sum_{t=1}^T \frac{2L}{\norme{ \nabla \ell_t(x_t)}^2} (\ell_t(x_t) - \ell_t(x^*))
\overset{(a)}{\le} \sum_{t=1}^T \frac{2 L}{\norme{ \nabla \ell_t(x_t)}^2} (\nabla \ell_t(x_t)^T (x_t - x^*) - \frac{\sigma}{2} \| x_t - x^*\|^2)\\
&= 2L \sum_{t=1}^T \pr{\tilde{\ell}_t(x_t) - \tilde{\ell}_t (x^*)}\\
&\overset{(b)}{\le} \frac{L}{\sigma} \sum_{t=1}^T \frac{\norme{\nabla \ell_t(x_t)}^{-2}}{\sum_{\tau = 1}^t \norme{\nabla \ell_\tau(x_\tau)}^{-2}}\\
&\overset{(c)}{\le} \frac{L}{\sigma} \pr{1 + \log \pr{G^2 \sum_{t=1}^T \norme{\nabla \ell_t(x_t)}^{-2}}}
\end{aligned}
\end{equation}
where (a) is from strong convexity, (b) is by applying Lemma~\ref{osc} to the function $\tilde{\ell}_t(\cdot)$, and (c) is due to Lemma~\ref{lem:Log_sum}. 
Thus, we get
\begin{align} \label{eq:exp}
G^2 \sum_{t=1}^T \| \nabla \ell_t(x_t) \|^{-2} \ge \frac{1}{3} e^{\frac{\sigma}{L}T}
\end{align}
\end{proof}

\begin{proof}[Proof of \Cref{thm:linear-rate}]

We consider the weighed loss setting in Algorithm~\ref{alg:game}:
in each round, the
loss function of the x player is $\alpha_t \ell_t(\cdot) : \XX \to \reals$, where $\ell_t(\cdot) := g(\cdot, y_t)$. The $y$-learner, on the other hand, observes her own sequence of loss functions $\alpha_t  h_t(\cdot) : \YY \to \reals$, where $h_t(\cdot) := -  g(x_t, \cdot)$.
To continue, let us
``re-define'' the regret of the $x$-player in the weighted loss game:
$\regret{x}:= \sum_{t=1}^T ( \alpha_t  \ell_t(x_t) - \alpha_t \ell_t(x^*) )$, 
where
$x^* = \arg \min_{x \in \XX} s(x)$.

\begin{eqnarray}
\frac{1}{\sum_{s=1}^T \alpha_s}  \sum_{t=1}^T  \alpha_t g(x_t, y_t) & = & \frac{1}{\sum_{s=1}^T \alpha_s} \sum_{t=1}^T  \alpha_t \ell_t(x_t) \notag \\
  & = &  
\frac{1}{\sum_{s=1}^T \alpha_s} \sum_{t=1}^T  \alpha_t \ell_t(x^*)
+ \frac{1}{\sum_{s=1}^T \alpha_s} \sum_{t=1}^T  \alpha_t ( \ell_t(x_t) - \ell_t(x^*) )
      \notag \\
  & = &  \frac{1}{\sum_{s=1}^T \alpha_s} \sum_{t=1}^T  \alpha_t \ell_t(x^*) + \frac{ \regret{x} }{  \sum_{s=1}^T \alpha_s }  \notag \\
  & \leq &  \sum_{t=1}^T \frac{1}{\sum_{s=1}^T \alpha_s} \alpha_t g(x^*, y_t)  + \epsilon_T \notag \\
  & \overset{(a)}{\leq} & 
    g\left(x^*,{ \textstyle \sum_{t=1}^T \frac{1}{\sum_{s=1}^T \alpha_s} \alpha_t y_t}\right) + \epsilon_T 
    \label{hi2} \\
  & \leq & 
    \max_{y \in \YY} g(x^*,y)  + \epsilon_T    \\ 
  & \overset{(b)}{ = } & 
     s (x^*)  + \epsilon_T    \\ 
  & = & \min_{x \in \XX} \max_{y \in \YY}   g(x,y) + \epsilon_T \notag,
\end{eqnarray}
where $(a)$ is by Jensen, $(b)$ is by the definition of the function $s(x)$.
Then, by combining the above result with (\ref{eq:ylowbound}), one get the following result 
\begin{proposition}
Denote $A_{T}=\sum_{s=1}^T \alpha_s$, $\epsilon_{T}$ the upper bound of 
$\frac{ \regret{x} }{ A_T}$, and $\delta_{T}$ the upper bound of 
$\frac{ \regret{y} }{ A_T}$.  Then,
$\bar{x}_{T} := \frac{ \sum_{s=1}^T \alpha_s x_s  }{ A_T }$ satisfies
$$  \max_{y \in \YY} g( \bar{x}_{T}, y)  \leq  V^* + \epsilon_T + \delta_T.$$
\end{proposition}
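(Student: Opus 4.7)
The plan is to observe that the proposition is exactly one half of the standard ``$\epsilon$-sandwich'' of Theorem~\ref{thm:convergence}, so I would prove it by chaining the two bounds that together form that sandwich. Indeed, the derivation in the body of the proof of Theorem~\ref{thm:linear-rate} immediately above the proposition statement has already established the upper direction; what remains is simply to combine it with the symmetric lower direction for the $y$-player.

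First I would invoke the bound coming from the $x$-player's weighted regret, which is exactly what the computation above the proposition produces: rewrite $\tfrac{1}{A_T}\sum_t \alpha_t g(x_t,y_t)$ as $\tfrac{1}{A_T}\sum_t \alpha_t \ell_t(x_t)$, add and subtract $\tfrac{1}{A_T}\sum_t \alpha_t \ell_t(x^*)$ for $x^* = \arg\min_{x \in \XX} s(x)$, bound the difference by $\epsilon_T$, and then apply Jensen's inequality to the convexity of $g(\cdot, y_t)$ together with $\max_y g(x^*,y) = s(x^*) = V^*$. The result is
\[
  \frac{1}{A_T}\sum_{t=1}^T \alpha_t g(x_t,y_t) \;\leq\; V^* + \epsilon_T.
\]

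Second I would invoke the mirror inequality (\ref{eq:ylowbound}) from the proof of Theorem~\ref{thm:convergence}. That inequality is derived by applying exactly the same manipulation on the $y$-player's side using the concavity of $g(x,\cdot)$ and Jensen's inequality, yielding
\[
  \sup_{y \in \YY} g(\bar{x}_T, y) \;\leq\; \frac{1}{A_T}\sum_{t=1}^T \alpha_t g(x_t,y_t) + \delta_T,
\]
where we have used $\avgregret{y} \leq \delta_T$. Chaining the two displays immediately produces $\sup_{y \in \YY} g(\bar{x}_T, y) \leq V^* + \epsilon_T + \delta_T$, which is the proposition.

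There is essentially no obstacle in this step; it is the standard game-to-regret reduction reused throughout the paper, just applied one-sidedly since only $\max_y g(\bar{x}_T, y)$ (and not also $\min_x g(x, \bar{y}_T)$) is of interest here. The actual substance of Theorem~\ref{thm:linear-rate} lives elsewhere: in bounding $\epsilon_T$ by $O(\tfrac{G^2 T}{L}e^{-\sigma T/L})$, which is where Lemma~\ref{lem:GSmooth} (converting gradient norms to suboptimality using smoothness), Lemma~\ref{osc} (the SC-AdaGrad regret), and Lemma~\ref{lem:Log_sum} (the self-confident aggregation that produces an exponential rather than logarithmic bound via $\alpha_t = \|\nabla \ell_t(x_t)\|^{-2}$) are combined. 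In this proposition step itself, however, one simply glues the two standard inequalities together.
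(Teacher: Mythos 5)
Your proposal is correct and follows exactly the paper's own route: the chain of displays in the proof of Theorem~\ref{thm:linear-rate} gives $\tfrac{1}{A_T}\sum_t \alpha_t g(x_t,y_t)\le V^*+\epsilon_T$, and combining this with the rearranged \eqref{eq:ylowbound} from the proof of Theorem~\ref{thm:convergence} immediately yields the proposition. One small misattribution you should fix: the two Jensen steps use the opposite convexity properties from what you state. In the first display, after passing to $\tfrac{1}{A_T}\sum_t \alpha_t g(x^*,y_t)$, Jensen averages over the $y_t$'s and so uses the \emph{concavity} of $g(x^*,\cdot)$ to obtain $g(x^*,\bar y_{\balpha})$; in \eqref{eq:ylowbound} Jensen averages over the $x_t$'s and uses the \emph{convexity} of $g(\cdot,y)$ to obtain $g(\bar x_{\balpha},y)$. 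You have these two swapped, though the overall structure, the choice of comparator $x^*=\arg\min_{x\in\XX}s(x)$, the identification $s(x^*)=V^*$, and the final chaining are all as in the paper.
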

However, since the $y$-player plays best response, $\regret{y}=0$, we only need to consider
$\frac{ \regret{x} }{  A_T }$.

Here, we consider the weight to be
$\alpha_t = \norme{\nabla \ell_t(x_t)}^{-2}$. 
\begin{equation}
\begin{aligned}
& \frac{ \regret{x} }{  A_T } :=
\frac{\sum_{t=1}^T \alpha_t(\ell_t (x_t) - \ell_t(x^*))}{A_T} 
\overset{(a)}{\le} \frac{1}{2\sigma A_T} \sum_{t=1}^T \frac{\norme{\nabla \ell_t(x_t)}^{-2}}{\sum_{\tau = 1}^t \norme{\nabla \ell_t(x_\tau)}^{-2}}\\
&\overset{(b)}{\le} \frac{G^2\pr{1 + \log \pr{G^2 \sum_{t=1}^T \norme{\nabla \ell_t(x_t)}^{-2}}}}{2\sigma G^2 \sum_{t=1}^{T} \norme{\nabla \ell_t(x_t)}^{-2}}
\overset{(c)}{\le} \frac{3 G^2\pr{1 + \pr{\frac{\sigma T}{L}}}}{2 \sigma e^{\frac{\sigma}{L}T}} = O\pr{\frac{G^2 T}{L}e^{-\frac{\sigma}{L}T}}
\end{aligned}
\end{equation}
where $(a)$ has be shown in (\ref{tmyt}), $(b)$ is by Lemma~\ref{lem:Log_sum}, $(c)$ is by (\ref{eq:exp}) and the fact that $\frac{1+\log z}{z}$ is monotonically decreasing for $z \ge 1$.

Next, we relate the weighted regret to the game
\begin{equation} \label{hereW}
\begin{aligned}
& \frac{ \regret{x} }{  A_T } :=
\frac{\sum_{t=1}^T \alpha_t(\ell_t (x_t) - \ell_t(x^*))}{A_T} 
= \frac{\sum_{t=1}^T \alpha_t(s(x_t) - g(x^*,y_t))}{A_T} 
\\ &\overset{(a)}{\ge} h(\bar{x}_T) - g(x^*,\bar{y}_T) = g(\bar{x}_T,\yx{\bar{x}_T})  - g(x^*,\bar{y}_T),
\end{aligned}
\end{equation}
where in $(a)$ we use Jensen for both sums.
Thus, our final result is the following,
\begin{equation} \label{f1}
g(\bar{x}_T,\yx{\bar{x}_T})  - g(x^*,\bar{y}_T) \le O\pr{\frac{G^2 T}{L}e^{-\frac{\sigma}{L}T}},
\end{equation}
which also implies
\begin{equation}
\begin{aligned}
& g(\bar{x}_T, \yx{\bar{x}_T}) - \min_{x \in \XX} \max_{y\in \YY} g(x,y) = g(\bar{x}_T, \yx{\bar{x}_T}) - g(x^*,\yx{x^*}) \overset{(a)}{\le} g(\bar{x}_T,\yx{\bar{x}_T}) - g(x^*,\bar{y}_T) 
\\ & \overset{(\ref{f1})}{\le} O\pr{\frac{G^2 T}{L}e^{-\frac{\sigma}{L}T}},
\end{aligned}
\end{equation}
where (a) is because of $g(x^*,\bar{y}_T) \geq g(x^*,\yx{x^*})$ by definition of best response.

But, we also note that $g(\bar{x}_T, \yx{\bar{x}_T}) = s(\bar{x}_T) \geq  \min_{x\in \XX} s(x) = \min_{x \in \XX} \max_{y \in \YY} g(x,y)$. This shows that $g(\bar{x}_T, \yx{\bar{x}_T})$ is an $O(\exp(-T))$ approximate saddle point solution, so we have completed the proof.

\end{proof}

\section{Proof of Theorem~\ref{thm:linearFW}} \label{app:linearFW}

\noindent\textbf{Theorem~\ref{thm:linearFW}}
\textit{ 
Consider the FW game in which $g(x,y)= -x^{\top}y + f^{*}(x)$.
Suppose that $f(\cdot)$ is a $L$-smooth convex function
and that $\YY$ is a $\lambda$-strongly convex set. Also assume that the gradients of the $f$ in $\YY$ are bounded away from $0$, i.e., $\max_{y\in\YY}\|\nabla f(y)\|\geq B$. Let $T \geq \frac {L}{\lambda B} \log 3$.
Then, there exists a FW-like algorithm that has $O(\exp(-\frac{\lambda B }{L} T))$ rate.
}

In this section, we give a proof of \Cref{thm:linearFW}. Let us first restate the theorem with the explicit algorithm.
Note that in the algorithm that we describe below the weights $\alpha_t$ are not predefined but rather depend on the queries of the algorithm. These adaptive weights are explicitly defined in Algorithm~\ref{alg:SC-AFTL} which is used by the $x$-player. The $y$-player plays best response.

\begin{algorithm}[h] 
  \caption{Strongly-Convex Adaptive Follow-the-Leader (SC-AFTL)}
  \label{alg:SC-AFTL}
  \begin{algorithmic}[1]
  \FOR{$t= 1, 2, \dots, T$}
    \STATE Play $x_t \in \K$
    \STATE Receive a strongly convex loss function $\alpha_t \ell_{t}(\cdot)$ with $\alpha_t = \frac{1}{\| \nabla \ell_t(x_t) \|^2} $.
    \STATE Update $x_{t+1} = \min_{x\in \XX} \sum_{s=1}^t \alpha_x \ell_s(x) $  
   \ENDFOR 
  \end{algorithmic}
\end{algorithm}

Before proving the theorem,
let us focus on the strategy of the $x$-player which we describe in Algorithm~\ref{alg:SC-AFTL}.
This algorithm  is equivalent to performing FTL updates  over the following loss sequence:
 $$\left\{\tilde{\ell}_t(x) =\alpha_t \ell_t(x) \right\}_{t=1}^T$$ 
And in the case of FW game, the $L$-smoothness of $f$ implies  that each $\tilde{\ell}_t(x)$ is $\frac{1/L}{\| \nabla \ell_t(x_t)\|^2}$-strongly-convex. 
Thus Lemma~\ref{lemma:ogd_strCvx} implies the following holds for any $x\in\XX$:


\begin{equation} \label{eq:RegretStronglycVX}
\sum_{t=1}^T \tilde{\ell}_t(x_t)-\sum_{t=1}^T \tilde{\ell}_t(x) 
\le 
\frac{L}{2} \sum_{t=1}^T \frac{\| \nabla \ell_t(x_t) \|^{-2}}{\sum_{s=1}^t \| \nabla \ell_s(x_s) \|^{-2}}~.
\end{equation}

\begin{proof}[Proof of \Cref{thm:linearFW}]
Since the $y$-player plays best response, $\regret{y}=0$, we only need to show that
$\regret{x} \leq O(\exp(-\frac{\lambda B}{L} T))$, which we do next.

We start by showing that $s(x) = \max_{y \in \YY} - x^\top y + f^*(x)$ is a smooth function.
We have that 
\begin{equation}
\begin{aligned}
& \| \nabla_w s(\cdot) - \nabla_z s(\cdot) \| = 
\| \arg\max_{y \in \YY} ( - w^\top y + f^*(w) ) -  \arg\max_{y \in \YY} ( - z^\top y + f^*(z) ) \|
\\ &= \| \arg\max_{y \in \YY} ( - w^\top y) - (\arg\max_{y' \in \YY}  - z^\top y') \|
\leq  \frac{ 2 \| w - z \|}{ \lambda ( \| w\| + \| z\| ) }
\leq  \frac{ \| w - z \|}{ \lambda B},
\end{aligned}
\end{equation}
where the second to last inequality uses Lemma~\ref{lm:lip} regarding $\lambda$-strongly convex sets,
and the last inequality is by assuming the gradient of $\| \nabla f(\cdot)\| \geq B$
and the fact that $w,z \in \XX$ are gradients of $f$.
This shows that $s(\cdot)$ is a smooth function with smoothness constant $L'=\frac{1}{\lambda B}$.

\begin{align}
T &= \sum_{t=1}^T \frac{\norme{ \nabla \ell_t(x_t)}^2}{\norme{ \nabla \ell_t(x_t)}^2}
\overset{Proposition~\ref{sameGrad}}{ = }\sum_{t=1}^T \frac{\norme{\nabla s(x_t)}^2}{\norme{ \nabla \ell_t(x_t)}^2}
\overset{Lemma~\ref{lem:GSmooth}}{ \le} \sum_{t=1}^T \frac{L'}{\norme{ \nabla \ell_t(x_t)}^2} (s(x_t) - s(x^*))\\
&\le \sum_{t=1}^T \frac{L'}{\norme{ \nabla \ell_t(x_t)}^2} (\ell_t(x_t) - \ell_t(x^*))\label{eq:normal},
\end{align}
where the last inequality follows from the fact that $s(x_t) = \ell_t(x_t)$ and $\ell_t(x) = g(x,y_t) \le g(x,\yx{x}) = s(x)$ for any $x$. We can apply Lemma~\ref{lem:GSmooth} because 
$\arg\min_{x \in \reals^d} \{ \max_{y \in \YY} - x^\top y + f^*(x) \} \in \XX$, as $\XX$ is the gradient space. 

To continue, we have
\begin{equation} \label{eq:ExpRate}
\begin{aligned}
T & \leq
\sum_{t=1}^T \frac{{ L' }}{\| \ell_t( x_t) \|^2} ( \ell_t(x_t)-\ell_t(x^*) )  \nonumber \\
&\overset{(a)}{=}
\sum_{t=1}^T L' ( \tilde{\ell}_t(x_t)-\tilde{\ell}_t(x^*) )  \nonumber \\
&\overset{(b)}{\le}
\frac{L\cdot L'}{2} \sum_{t=1}^T \frac{\| \nabla \ell_t( x_t) \|^{-2}}{\sum_{s=1}^t \|\nabla \ell_s(x_t) \|^{-2}}\nonumber \\
&\overset{(c)}{\le}
\frac{L\cdot L'}{2}\left( 1+\log(G^2 \sum_{t=1}^T\|\nabla \ell_t( x_t)\|^{-2}) \right)~,
\end{aligned}
\end{equation}
where (a) is by the definition of $\tilde{\ell}_t(\cdot)$,
and (b) is shown using (\ref{eq:RegretStronglycVX}) with strong convexity parameter of $\ell_t(\cdot)$ being $\frac{1}{L}$, and (c) is by Lemma~\ref{lem:Log_sum},
\begin{align*}
\sum_{t=1}^T \frac{\| \ell_t( x_t) \|^{-2}}{\sum_{s=1}^t \| \ell_s( x_s) \|^{-2}}
&=
\sum_{t=1}^T \frac{G^2\| \ell_t( x_t) \|^{-2}}{\sum_{s=1}^t G^2\| \ell_s( x_s) \|^{-2}}
\leq{}
{1+\log(G^2 \sum_{t=1}^T\| \ell_t( x_t)\|^{-2})}.
\end{align*}
Thus, we get
\begin{align} 
G^2 \sum_{t=1}^T \| \nabla \ell_t(x_t) \|^{-2} = O(  e^{\frac{1}{L\cdot L'}T}) =  O(  e^{\frac{\lambda B}{L}T}).
\end{align}
The rest of the proof is the same as in (\ref{hereW}). 
So, we can get that $g(\bar{x}_T, \yx{\bar{x}_T})$ is an $O(\exp(-\frac{\lambda B}{L}T))$ approximate saddle point solution. This completes the proof.

\end{proof}

\section{Supremum of strongly convex functions is strongly convex}

\begin{lemma} \label{mysc}
$s(x) = \underset{y \in \YY}{\sup}\; g(x,y)$ is $\sigma$-strongly convex if $g(\cdot,y)$ is $\sigma$-strongly convex for any $y \in \YY$.
\end{lemma}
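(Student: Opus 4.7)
The proof will be a very short direct application of the definition of strong convexity. The plan is to fix arbitrary $x_1, x_2 \in \XX$ and $\theta \in [0,1]$, set $x_\theta := \theta x_1 + (1-\theta) x_2$, and show
\[
s(x_\theta) \le \theta \, s(x_1) + (1-\theta)\, s(x_2) - \frac{\sigma}{2}\theta(1-\theta)\|x_1 - x_2\|^2,
\]
which is equivalent to $\sigma$-strong convexity of $s$ w.r.t.\ $\|\cdot\|$.

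The key observation is that for any fixed $y \in \YY$, by hypothesis $g(\cdot,y)$ is $\sigma$-strongly convex, so
\[
g(x_\theta, y) \;\le\; \theta\, g(x_1,y) + (1-\theta)\, g(x_2,y) - \frac{\sigma}{2}\theta(1-\theta)\|x_1 - x_2\|^2.
\]
Since $g(x_i, y) \le \sup_{y' \in \YY} g(x_i, y') = s(x_i)$ for $i=1,2$, we may upper bound the right hand side by
\[
\theta\, s(x_1) + (1-\theta)\, s(x_2) - \frac{\sigma}{2}\theta(1-\theta)\|x_1 - x_2\|^2.
\]
Crucially, this upper bound no longer depends on $y$, so taking the supremum over $y \in \YY$ on the left yields exactly the desired inequality for $s(x_\theta)$.

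There is essentially no obstacle here: the argument is a one-line shuffle that works because the ``bad'' term $-\frac{\sigma}{2}\theta(1-\theta)\|x_1-x_2\|^2$ is independent of $y$, so it survives the pointwise supremum unchanged. The only minor technical remark is that one should verify $s$ is finite (which follows from compactness of $\YY$ together with any reasonable regularity of $g(x,\cdot)$ implicit in the setting), so that taking the supremum is well-defined; in the convex-concave game setup of the paper this is automatic.
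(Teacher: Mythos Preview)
Your proof is correct and essentially identical to the paper's: both apply the strong-convexity inequality for $g(\cdot,y)$ at the convex combination, bound $g(x_i,y)\le s(x_i)$, and pass to the supremum. The only cosmetic difference is that the paper fixes a single $\hat y$ attaining the sup at $x_\theta$ (using compactness of $\YY$) rather than taking the sup over all $y$ at the end, which is the same argument in slightly different order.
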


\begin{proof}
For any $\theta=[0,1]$ and any $w,z \in X$,
\begin{equation}
\begin{aligned}
& s(\theta w + (1-\theta) z ) = g( \theta w + (1-\theta) z , \hat{y} )
\\ &\leq \theta g(w, \hat{y}) +  (1-\theta) g(z, \hat{y}) - \frac{\sigma}{2} \theta (1 -\theta) \| w -z \|^2
\\ &\leq \theta s(w) + (1-\theta) s(z) - \frac{\sigma}{2} \theta (1 -\theta) \| w -z \|^2,
\end{aligned}
\end{equation}
where the first equality holds for a specific $\hat{y} \in \YY$.

\end{proof}

\end{document}